\setlist{beginpenalty=0,midpenalty=0,endpenalty=0}
\tikzset{state/.style={circle, draw,
                line width=1bp,
                inner sep=0.25em,
                draw=black!85!white,
                minimum width=2em,
                fill=gray!25!white,
                }}
\tikzset{state0/.style={circle, draw,
                line width=1bp,
                inner sep=0.25em,
                draw=black!85!white,
                minimum width=2em,
                fill=white!25!white,
                }}
\tikzset{reward1/.style={
                draw=green!75!black,
                }}
\tikzset{reward05/.style={
                draw=blue!75!black,
                }}
\def\bbordermatrix#1{\begingroup \m@th
  \global\let\perhaps@scriptstyle\scriptstyle
  \@tempdima 4.75\p@
  \setbox\z@\vbox{%
    \def\cr{%
      \crcr
      \noalign{%
        \kern2\p@
        \global\let\cr\endline
        \global\let\perhaps@scriptstyle\relax
      }%
    }%
    \ialign{$\make@scriptstyle{##}$\hfil\kern2\p@\kern\@tempdima
      &\thinspace\hfil$\perhaps@scriptstyle##$\hfil
      &&\quad\hfil$\perhaps@scriptstyle##$\hfil\crcr
      \omit\strut\hfil\crcr
      \noalign{\kern-\baselineskip}%
      #1\crcr\omit\strut\cr}}%
  \setbox\tw@\vbox{\unvcopy\z@\global\setbox\@ne\lastbox}%
  \setbox\tw@\hbox{\unhbox\@ne\unskip\global\setbox\@ne\lastbox}%
  \setbox\tw@\hbox{$\kern\wd\@ne\kern-\@tempdima\left[\kern-\wd\@ne
    \global\setbox\@ne\vbox{\box\@ne\kern2\p@}%
    \vcenter{\kern-\ht\@ne\unvbox\z@\kern-\baselineskip}\,\right]$}%
  \null\;\vbox{\kern\ht\@ne\box\tw@}\endgroup}
\def\make@scriptstyle#1{\vcenter{\hbox{$\scriptstyle#1$}}}
\newcommand*{\TRANS}{%
  {\mathpalette\@transpose{}}%
}
\newcommand*{\@transpose}[2]{%
  % #1: math style
  % #2: unused
  \raisebox{\depth}{$\m@th#1\intercal$}%
}
\newcommand{\appref}[1]{\hyperref[#1]{App.~\ref*{#1}}}
\DeclareRobustCommand{\qed}{%
    \leavevmode\unskip
    \penalty 9999
    \hbox{}\nobreak\hfill\hbox{$\Box$}
}
\newtheorem{assumption}{Assumption}
\newcommand*\bolditalic[1]{\textbf{\textit{#1}}}
\newcommand\ALPHABET{\mathsf}
\newcommand\reals{\mathds{R}}
\newcommand\integers{\mathds{Z}}
\newcommand\naturalnumbers{\mathds{N}}
\newcommand\PR{\mathds{P}}
\newcommand\EXP{\mathds{E}}
\newcommand\IND{\mathds{1}}
\DeclareMathOperator{\SPAN}{span}
\DeclareMathOperator{\Lip}{Lip}
\DeclarePairedDelimiter{\NORM}{\lVert}{\rVert}
\DeclarePairedDelimiter\MOD{\llbracket}{\rrbracket}
\newcommand*\sublabel[1]{\textsc{\MakeLowercase{#1}}}
\newcommand\PiHND{\Pi_{\sublabel{BD}}}
\newcommand\JHND{J^\star_{\sublabel{BD}}}
\newcommand\PiZSD{\Pi_{\sublabel{SD}}}
\newcommand\PiZSS{\Pi_{\sublabel{SS}}}
\newcommand\JZSD{J^\star_{\sublabel{SD}}}
\newcommand*\EXPL{\mu}
\newcommand*\piexpl{\mu} % Different from zexpl!
\newcommand*\zexpl{\zeta_{\EXPL}}
\newcommand\MATRIX[1]{\begin{bmatrix}#1\end{bmatrix}}
\DeclareMathOperator{\diag}{blkdiag}
\DeclareMathOperator{\VEC}{vec}
\newcommand\F{\mathfrak{F}}
\title{Periodic agent-state based Q-learning for POMDPs}
\author[1]{Amit Sinha}
\author[2]{Matthieu Geist}
\author[1]{Aditya Mahajan}
\affil[1]{McGill University, Mila}
\affil[2]{Cohere}
\begin{document}
\maketitle

\let\originaladdcontentsline\addcontentsline
\renewcommand{\addcontentsline}[3]{}% Do nothing

\begin{abstract}
The standard approach for Partially Observable Markov Decision Processes (POMDPs) is to convert them to a fully observed belief-state MDP. However, the belief state depends on the system model and is therefore not viable in reinforcement learning (RL) settings. A widely used alternative is to use an agent state, which is a model-free, recursively updateable function of the observation history. Examples include frame stacking and recurrent neural networks. Since the agent state is model-free, it is used to adapt standard RL algorithms to POMDPs. However, standard RL algorithms like Q-learning learn a stationary policy. Our main thesis that we illustrate via examples is that because the agent state does not satisfy the Markov property, non-stationary agent-state based policies can outperform stationary ones. To leverage this feature, we propose PASQL (periodic agent-state based Q-learning), which is a variant of agent-state-based Q-learning that learns periodic policies. By combining ideas from periodic Markov chains and stochastic approximation, we rigorously establish that PASQL converges to a cyclic limit and characterize the approximation error of the converged periodic policy. Finally, we present a numerical experiment to highlight the salient features of PASQL and demonstrate the benefit of learning periodic policies over stationary policies.
\end{abstract}

\section{Introduction}
Recent advances in reinforcement learning (RL) have successfully integrated algorithms with strong theoretical guarantees and deep learning to achieve significant successes \cite{mnih2013playing,Silver2016MasteringTG}.  However, most RL theory is limited to models with perfect state observations~\cite{sutton2008reinforcement,neuroDP}. Despite this, there is substantial empirical evidence showing that RL algorithms perform well in partially observed settings~\cite{wierstra2007solving,Wierstra2010,hauskrecht2000value,hausknecht2015deep,Gruslys2018Reactor,kapturowski2018recurrent,Hafner2020Dream,Hafner2021mastering}. Recently, there has been a significant advances in the theoretical understanding of different RL algorithms for POMDPs~\cite{subramanian2022approximate,Kara2022,EWRL-RQL,Dong2022} but a complete understanding is still lacking.

\textbf{Planning in POMDPs.} When the system model is known, the standard approach~\cite{Astrom1965,Smallwood1973,Cassandra1994} is to construct an equivalent MDP with 
the belief state (which is the posterior distribution of the environment state given the history of observations and actions at the agent) as the information state. The belief state is policy independent and has time-homogeneous dynamics, which enables the formulation of a belief-state based dynamic program (DP). There is a rich literature which leverages the structure of the resulting DP to propose efficient algorithms to solve POMDPs~\cite{Smallwood1973,Cassandra1994,Cassandra1997,Chang2007,Zhang2009,Pineau2003,Smith2004,Spaan2005}. 
See~\cite{kochenderfer2022algorithms} for a review.
However, the belief state depends on the system model, so the belief-state based approach does not work for RL.

\textbf{RL in POMDPs.} An alternative approach for RL in POMDPs is to consider policies which depend on an \emph{agent state} $\{z_t\}_{t \ge 1}$, where $Z_t \in \ALPHABET Z$, which is a recursively updateable compression of the history: the agent starts at an initial state $z_0$ and recursively updates the agent state as some function of the current agent-state, next observation, and current action.
A simple instance of agent-state is \emph{frame stacking}, where a window of previous observations is used as state \cite{white1994finite,mnih2013playing,Kara2022}. Another example is to use a recurrent neural network such as LSTM or GRU to compress the history of observations and actions into an agent state~\cite{wierstra2007solving,Wierstra2010,hausknecht2015deep,kapturowski2018recurrent,Hafner2020Dream}. In fact, as argued in~\cite{dong2022simple,lu2023reinforcement} such an agent state is present in most deep RL algorithms for POMDPs.  We refer to such a representation as an ``agent state'' because it captures the agent's internal state that it uses for decision making. 

When the agent state is an information state, i.e., satisfies the Markov property, i.e., $\PR(z_{t+1} | z_{1:t}, a_{1:t}) = \PR(z_{t+1} | z_t, a_t)$ and is sufficient for reward prediction, i.e., $\EXP[R_t | y_{1:t}, a_{1:t}] = \EXP[R_t | z_t, a_t]$  (where $y_t$ is the observation, $a_t$ is the action, and $R_t$ is the per-step reward), the optimal agent-state based policy can be obtained via a dynamic program (DP) \cite{subramanian2022approximate}. An example of such an agent state is the belief state. But, in general, the agent state is not an information state. For example, frame stacking and RNN do not satisfy the Markov property, in general. It is also possible to have agent-states that satisfy the Markov property but are not sufficient for reward prediction (e.g., when the agent state is always a constant). In all such settings, the best agent-state policy cannot be obtained via a DP. Nonetheless, there has been considerable interest to use RL to find a good agent-state based policy.

One of the most commonly used RL algorithms is off-policy Q-learning, which we call agent-state Q-learning (ASQL). In ASQL for POMDPs, the Q-learning iteration is applied as if the agent state satisfied the Markov property even though it does not. The agent starts with an initial $Q_1(z,a)$, acts according to a behavior policy  $\piexpl$, i.e., chooses $a_t \sim \piexpl(z_t)$, and recursively updates
\begin{equation}\label{eq:ASQL}
    Q_{t+1}(z,a) =
    Q_t(z,a) + \alpha_t(z,a) \Bigl[ R_t + \gamma \max_{a' \in \ALPHABET A} Q_t(z_{t+1},a') - Q_t(z,a) \Bigr]
    \tag{ASQL}
\end{equation}
where $\gamma \in [0,1)$ is the discount factor and the learning rates $\{\alpha_t\}_{t \ge 1}$ are chosen such that $\alpha_t(z,a) = 0$ if $(z,a) \neq (z_t,a_t)$.  The convergence of~\ref{eq:ASQL} has been recently presented in~\cite{Kara2022,EWRL-RQL} which show that under some technical assumptions, \ref{eq:ASQL} converges to a limit. The policy determined by \ref{eq:ASQL} is the greedy policy w.r.t.\ this limit.

\textbf{Limitation of Q-learning with agent state.}
The greedy policy determined by \ref{eq:ASQL} is stationary (i.e., uses the same control law at every time). In infinite horizon MDPs (and, therefore, also in POMDPs when using the belief state as an agent state), stationary policies perform as well as non-stationary policies. This is because the agent-state satisfies the Markov property. However, in \ref{eq:ASQL} the agent state generally does not satisfy the Markov property. Therefore, \bolditalic{restricting attention to stationary policies may lead to a loss of optimality!}

\begin{figure}[!t]
    \centering
    \includegraphics{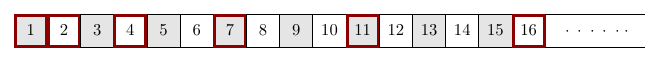}
    \caption{The cells indicate the state of the environment. Cells with the same background color have the same observation. The cells with a thick red boundary correspond to elements of the set $\ALPHABET D_0 \coloneqq \{ n(n+1)/2 + 1 : n \in \naturalnumbers \}$, where the action~$0$ gives a reward of $+1$ and moves the state to the right, while the action~$1$ gives a reward of $-1$ and resets the state to~$1$. The cells with a thin black boundary correspond to elements of the set $\ALPHABET D_1 = \naturalnumbers \setminus \ALPHABET D_0$, where the action~$1$ gives the reward of $+1$ and moves the state to the right while the action~$0$ gives a reward of $-1$ and resets the state to~$1$. Discount factor $\gamma = 0.9$.}
    \label{fig:ex1}
\end{figure}

As an illustration, consider the POMDP shown in \autoref{fig:ex1}, which is described in detail in \appref{app:ex1} as \autoref{ex:1}. Suppose the system starts in state~$1$.
Since the dynamics are deterministic,  the agent can infer the current state from the history of past actions and can take the action to increment the current state and receive a per-step reward of $+1$. Thus, the performance $\JHND$ of belief-state based policies is $\JHND = 1/(1-\gamma)$. Contrast this with the performance $\JZSD$ of deterministic agent-state base policies with agent state equal to current observation, which is given by $\JZSD = (1 + \gamma - \gamma^2)/(1 - \gamma^3) < \JHND$. In particular, for $\gamma=0.9$, $\JHND = 10$ which is larger than $\JZSD = 4.022$.

We show that the gap between $\JZSD$ and $\JHND$ can be reduced by considering non-stationary policies.
\autoref{ex:1} has deterministic dynamics, so the optimal policy can be implemented in \emph{open-loop} via a sequence of control actions $\{a^\star_t\}_{t \ge 1}$, where $a^\star_t = \IND\{t \in \ALPHABET D_1\}$. This open-loop policy can be implemented via any information structure, including agent-state based policies.
\bolditalic{Thus, a non-stationary deterministic agent-state based policy performs better than stationary deterministic agent-state based policies.} A similar conclusion also holds for models with stochastic dynamics.

\textbf{The main idea.} Arbitrary non-stationary policies cannot be used in RL because such policies have countably infinite number of parameters. In this paper, we consider a simple class of non-stationary policies with finite number of parameters: \emph{periodic policies}. An agent-state based policy $\pi = (\pi_1, \pi_2, \dots)$ is said to be periodic with period $L$ if $\pi_t = \pi_{t'}$ whenever $t \equiv t' \pmod L$. 

To highlight the salient feature of periodic policies, we perform a brute force search over all deterministic periodic policies of period $L$, for $L = \{1, \dots, 10\}$, in \autoref{ex:1}. Let $J^\star_{L}$ denote the optimal performance for policies of period~$L$. The result is shown below (see \appref{app:ex1} for details):
\[
    \begin{tabular}{@{}ccccccccccc@{}}
         \toprule
         $L$ & 1 & 2 & 3 & 4 & 5 & 6 & 7 & 8 & 9 & 10 
         \\
         \midrule
         $J^\star_L$ &
         4.022 &
         4.022 &
         7.479 &
         6.184 &
         8.810 & 
         7.479 &
         9.340 & 
         8.488 & 
         9.607 & 
         8.810 
         \\
         \bottomrule
    \end{tabular}
\]

The above example highlights some salient features of periodic policies:
\begin{enumerate*}[label=(\roman*)]
    \item Periodic deterministic agent-state based policies may outperform stationary deterministic agent-state based policies.
    \item $\{J^\star_L\}_{L \ge 1}$ is not a monotonically increasing sequence. This is because $\Pi_{L}$, the set of all periodic deterministic agent-state based policies of period~$L$, is not monotonically increasing.
    \item If $L$ divides $M$, then $J^\star_L \le J^\star_M$. This is because $\Pi_L \subseteq \Pi_M$. In other words, if we take any integer sequence $\{ L_n \}_{n \geq 1}$ that has the property that $L_n$ divides $L_{n+1}$, then the performance of the policies with period $L_n$ is monotonically increasing in $n$. For example, periodic policies with period $L \in \{ n! : n \in \naturalnumbers \}$ will have monotonically increasing performance.
    \item In the above example, the set $\ALPHABET D_0$ is chosen such that the optimal sequence of actions\footnote{Recall that the system dynamics are deterministic, so optimal policy can be implemented in open loop.} is not periodic. Therefore, even though periodic policies can achieve a performance that is arbitrarily close to the optimal belief-based policies, they are not necessarily globally optimal (in the class of non-stationary agent-state based policies).
    Thus, the periodic deterministic policy class is a middle ground between the stationary deterministic and non-stationary policy classes and provides us a simple way of leveraging the benefits of non-stationarity while trading-off computational and memory complexity.
\end{enumerate*}

The main contributions of this paper are as follows.
\begin{enumerate}[itemsep=0pt,topsep=0pt,partopsep=0pt]
    \item Motivated by the fact that non-stationary agent-state based policies outperform stationary ones, we propose a variant of agent-state based Q-learning (ASQL) that learns periodic policies. We call this algorithm periodic agent-state based Q-learning (PASQL). 
    \item We rigorously establish that PASQL converges to a cyclic limit. Therefore, the greedy policy w.r.t.\ the limit is a periodic policy. Due to the non-Markovian nature of the agent-state, the limit (of the Q-function and the greedy policy) depends on the behavioral policy used during learning. 
    \item We quantify the sub-optimality gap of the periodic policy learnt by PASQL. 
    \item We present numerical experiments to illustrate the convergence results, highlight the salient features of PASQL, and show that the periodic policy learned by PASQL indeed performs better than stationary policies learned by ASQL. 
\end{enumerate}

\section{Periodic agent-state based Q-learning (PASQL) with agent state}

\subsection{Model for POMDPs}
A POMDP is a stochastic dynamical system with state $s_t \in \ALPHABET S$, input $a_t \in \ALPHABET A$, and output $y_t \in \ALPHABET Y$, where we assume that all sets are finite. The system operates in discrete time with the dynamics given as follows: The initial state $s_1 \sim \rho$ and  for any time $t \in \naturalnumbers$, we have
\begin{equation}
    \PR(s_{t+1}, y_{t+1} \mid s_{1:t}, y_{1:t}, a_{1:t})
    =
    \PR(s_{t+1}, y_{t+1} \mid s_{t}, a_{t})
    \eqqcolon P(s_{t+1}, y_{t+1} \mid s_t, a_t)
\end{equation}
where $P$ is a probability transition matrix. In addition, at each time the system yields a reward $R_t = r(s_t, a_t)$. We will assume that $R_t \in [0, R_{\max}]$. The discount factor is denoted by $\gamma \in [0,1)$. 

Let $\vec {\pi} = (\vec \pi_1, \vec \pi_2, \dots)$ denote any (history dependent and possibly randomized) policy. Then the action at time~$t$ is given by $a_t \sim \vec \pi_t(y_{1:t}, a_{1:t-1})$. The performance of policy $\vec {{\pi}}$ is given by 
\begin{equation}
    J^{\vec{{\pi}}} \coloneqq \EXP_{\substack{a_t \sim \vec\pi_t(y_{1:t}, a_{t-1}) \\ (s_{t+1}, y_{t+1}) \sim P(s_t, a_t)}}
    \biggl[ \medop\sum_{t=1}^\infty \gamma^{t-1} r(s_t, a_t)  \biggm| s_1 \sim \rho \biggr].
\end{equation}
The objective is to find a (history dependent and possibly randomized) policy $\vec{{\pi}}$ to maximize $J^{\vec{{\pi}}}$. 

\textbf{Agent state for POMDPs.}
An agent-state is model-free recursively updateable function of the history of observations and actions. In particular, let $\ALPHABET Z$ denote agent-state space. Then, the agent state process $\{z_t\}_{t \ge 0}$, $z_t \in \ALPHABET Z$,  starts with an initial value $z_0$, and is then recursively computed as
\(
    z_{t+1} = \phi(z_t, y_{t+1}, a_t)
\)    
for a pre-specified agent-state update function $\phi$. 

We use ${\pi} = (\pi_1, \pi_2, \dots)$ to denote an agent-state based policy,\footnote{We use $\vec{\pi}$ to denote history dependent policies and $\pi$ to denote agent-state based policies.} i.e., a policy where the action at time~$t$ is given by $a_t \sim \pi_t(z_t)$. 
An agent-state based policy is said to be \textbf{stationary} if for all $t$ and $t'$, we have $\pi_t(a|z) = \pi_{t'}(a|z)$ for all $(z,a) \in \ALPHABET Z \times \ALPHABET A$. 
An agent-state based policy is said to be \textbf{periodic} with period $L$ if for all $t$ and $t'$ such that $t \equiv t' \pmod L$, we have $\pi_t(a|z) = \pi_{t'}(a|z)$ for all $(z,a) \in \ALPHABET Z \times \ALPHABET A$. 

\subsection{PASQL: Periodic agent-state based Q-learning algorithm for POMDPs}

We now present a periodic variant of agent-state based Q-learning, which we abbreviate as PASQL. PASQL is an online, off-policy learning algorithm in which the agent acts according to a behavior policy $\piexpl = (\piexpl_1, \piexpl_2, \dots)$ which is a periodic (stochastic) agent-state based policy $\mu$ with period $L$.

Let $\MOD{t} \coloneqq (t \bmod L)$ and $\ALPHABET L \coloneqq \{0, 1, \dots, L-1\}$. Let $(z_1, a_1, R_1, z_2, a_2, R_2, \dots)$ be a sample path of agent-state, action, and reward observed by the agent. In PASQL, the agent maintains an $L$-tuple of Q-functions $(Q^0_{t}, Q^1_{t}, \dots, Q^{L-1}_{t})$, $t \ge 1$. The $\ell$-th component, $\ell \in \ALPHABET L$, is updated at time steps when $\MOD{t} = \ell$. In particular, we can write the update as
\begin{equation}
    Q^\ell_{t+1}(z,a) = Q^\ell_{t}(z,a) + \alpha^{\ell}_t(z,a) \Bigl[
       R_t + \gamma \max_{a' \in \ALPHABET A} Q^{\MOD{\ell+1}}_t(z_{t+1}, a') - Q^{\ell}_t(z,a) 
    \Bigr],
    \quad \forall \ell \in \ALPHABET L,
    \tag{PASQL}
    \label{eq:PASQL}
\end{equation}  
where the learning rate sequence $\{(\alpha^0_t, \dots, \alpha^{L-1}_t)\}_{t \ge 1}$ is chosen such that $\alpha^{\ell}_t(z,a) = 0$ if $(\ell, z,a) \neq (\MOD{t}, z_t,a_t)$ and satisfies \autoref{ass:lr}. \ref{eq:PASQL} differs from~\ref{eq:ASQL} in two aspects: 
\begin{enumerate*}[label=(\roman*)]
    \item The behavior policy $\piexpl$ is periodic.
    \item The update of the Q-function is periodic.
\end{enumerate*}
When $L = 1$, \ref{eq:PASQL} collapses to \ref{eq:ASQL}.

The standard convergence analysis of Q-learning for MDPs shows that the Q-function convergences to the unique solution of the MDP dynamic program (DP). The key challenge in characterizing the convergence of~\ref{eq:PASQL} is that the agent state $\{Z_t\}_{t \ge 1}$ does not satisfy the Markov property. Therefore, a DP to find the best agent-state based policy does not exist. So, we cannot use the standard analysis to characterize the convergence of \ref{eq:PASQL}. In \autoref{sec:convergence}, we provide a complete characterization of the convergence of \ref{eq:PASQL}.

The quality of the converged solution depends on the expressiveness of the agent state. For example, if the agent state is not expressive (e.g., agent state is always constant), then even if \ref{eq:PASQL} converges to a limit, the limit will be far from optimal. Therefore, it is important to quantify the degree of sub-optimality of the converged limit. We do so in \autoref{sec:AIS}.
\subsection{Establishing the convergence of tabular PASQL} \label{sec:convergence}

To characterize the convergence of tabular PASQL, we impose two assumptions which are standard for analysis of RL algorithms~\cite{Jaakkola1994,neuroDP}. The first assumption is on the learning rates.
\begin{assumption}\label{ass:lr}
    For all $(\ell,z,a)$, 
    the learning rates $\{\alpha^{\ell}_t(z,a)\}_{t \ge 1}$ are measurable with respect to the sigma-algebra generated by $(z_{1:t}, a_{1:t})$ and satisfy $\alpha^{\ell}_t(z,a) = 0$ if $(\ell, z,a) \neq (\MOD{t}, z_t,a_t)$. Moreover, 
    \(
        \sum_{t \ge 1} \alpha^{\ell}_t(z,a) = \infty
    \)
    and
    \(
        \sum_{t \ge 1} (\alpha^{\ell}_t(z,a))^2 < \infty
    \), almost surely.
\end{assumption}

The second assumption is on the behavior policy~$\piexpl$. We first state an immediate property.
\begin{lemma}\label{lem:Markov}
    For any behavior policy $\piexpl$, the process $\{(S_t, Z_t)\}_{t \ge 1}$ is Markov. Therefore, the processes $\{(S_t, Z_t, A_t)\}_{t \ge 1}$ and $\{(S_t, Y_t, Z_t, A_t)\}_{t \ge 1}$ are also Markov.
\end{lemma}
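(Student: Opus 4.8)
The plan is to verify the Markov property by a direct computation from the generative description of PASQL, using the fact that at each time the action depends on the past only through the current agent state. Let $\mathcal{H}_t \coloneqq \sigma(s_{1:t}, y_{1:t}, z_{1:t}, a_{1:t})$ be the full internal history up to time~$t$. The construction of the algorithm gives the one-step conditional law as a product of three factors,
\[
    \PR\bigl(s_{t+1}, y_{t+1}, z_{t+1}, a_{t+1} \bigm| \mathcal{H}_t\bigr)
    = P(s_{t+1}, y_{t+1} \mid s_t, a_t)\,
      \IND\{ z_{t+1} = \phi(z_t, y_{t+1}, a_t) \}\,
      \piexpl_{t+1}(a_{t+1} \mid z_{t+1}),
\]
because $(s_{t+1}, y_{t+1})$ is drawn from $P(\cdot,\cdot \mid s_t, a_t)$, the next agent state is the deterministic image $z_{t+1} = \phi(z_t, y_{t+1}, a_t)$, and the next action is drawn from $\piexpl_{t+1}(\cdot \mid z_{t+1})$ with its randomization independent of the rest of the past given $z_{t+1}$.

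Every factor on the right depends on $\mathcal{H}_t$ only through the coordinates $(s_t, z_t, a_t)$, so the right-hand side is a (time-dependent) function of $(s_t, z_t, a_t; s_{t+1}, y_{t+1}, z_{t+1}, a_{t+1})$. This shows at once that $\{(S_t, Y_t, Z_t, A_t)\}_{t \ge 1}$ is Markov with respect to its natural filtration $\mathcal{H}_t$; summing out $y_{t+1}$ shows $\{(S_t, Z_t, A_t)\}_{t \ge 1}$ is Markov; and summing out $y_{t+1}$ and $a_{t+1}$ and then averaging $a_t$ against $\piexpl_t(\cdot \mid z_t)$ --- which is legitimate since, given the history through $z_t$, the action $a_t$ has law $\piexpl_t(\cdot \mid z_t)$ conditionally independently of the remaining past --- produces the kernel
\[
    \PR\bigl(s_{t+1}, z_{t+1} \bigm| s_{1:t}, z_{1:t}\bigr)
    = \sum_{a \in \ALPHABET A}\sum_{y \in \ALPHABET Y} \piexpl_t(a \mid z_t)\, P(s_{t+1}, y \mid s_t, a)\, \IND\{ z_{t+1} = \phi(z_t, y, a) \},
\]
which depends on the past only through $(s_t, z_t)$. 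This is the asserted Markov property of $\{(S_t, Z_t)\}_{t \ge 1}$, and the other two claims follow as indicated (one may also read them off from the fact that, given $(S_t, Z_t)$, both $Y_t$ and $A_t$ are conditionally determined by the POMDP kernel and an independent randomization). Periodicity of $\piexpl$ plays no role in the Markov property itself; it only guarantees that these kernels are $L$-periodic in~$t$, so that the time-augmented process $\{(\MOD{t}, S_t, Z_t)\}_{t \ge 1}$ is a \emph{homogeneous} Markov chain, which will be used in the sequel.

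I do not anticipate a substantive obstacle; the lemma is a bookkeeping statement about information flow in the closed loop. The only care needed is (i) to pass from conditional laws given the full history $\mathcal{H}_t$ to conditional laws given the smaller natural filtrations of $\{(S_t, Z_t, A_t)\}$ and $\{(S_t, Z_t)\}$, which the smoothing property handles because those filtrations are nested in $\mathcal{H}_t$ and the relevant conditional laws are already measurable with respect to the smaller ones; and (ii) the initialization at $t = 1$, where $z_1$ is a fixed function of $z_0$ and $y_1$, which is a trivial instance of the same computation.
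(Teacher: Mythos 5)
Your proof is correct: the factorization of the one-step conditional law into the POMDP kernel, the deterministic agent-state update, and the policy kernel, followed by marginalization and the tower property to pass to the smaller natural filtrations, is exactly the verification needed, and you correctly flag the one non-trivial point (marginalizing a Markov chain is not automatically Markov, but here the summed-out kernel is already a function of $(s_t,z_t)$ alone). The paper states this lemma without proof as ``an immediate property,'' so your argument simply supplies the routine computation the authors omit; no discrepancy to report.
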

\begin{assumption}\label{ass:policy}
    The behavior policy $\piexpl$ is such that the Markov chain $\{(S_t, Y_t, Z_t, A_t)\}_{t \ge 1}$ is time-periodic\footnote{Time-periodic Markov chains are a generalization of time-homogeneous Markov chains. We refer the reader to \appref{app:periodic-MC} for an overview of time-periodic Markov chains and cyclic limiting distributions, including sufficient conditions for the existence of such distributions.}
    with period~$L$ and converges to a cyclic limiting distribution $(\zexpl^0, \dots, \zexpl^{L-1})$, where $\sum_{(s,y)} \zexpl^{\ell}(s,y,z,a) > 0$ for all $(\ell,z,a)$ (i.e., all $(\ell,z,a)$ are visited infinitely often).
\end{assumption}

For the ease of notation, we will continue to use $\zexpl^\ell$ to denote the marginal and conditional distributions w.r.t.\ $\zexpl^\ell$. In particular, for marginals we use $\zexpl^\ell(y,z,a)$ to denote $\sum_{s \in \ALPHABET S} \zexpl^\ell(s,y,z,a)$ and so on; for conditionals, we use  $\zexpl^\ell(s | z,a)$ to denote $\zexpl^\ell(s,z,a) / \zexpl^\ell(z,a)$ and so on. Note that $\zexpl^\ell(s,z,y,a) = \zexpl^\ell(s,z) \EXPL (a \mid z) P(y|s,a)$. Thus, we have that $\zexpl^\ell(s \mid z, a) = \zexpl^\ell(s \mid z)$.

\begin{theorem}\label{thm:convergence}
    Under \hyperref[ass:lr]{Assms.\@~\ref{ass:lr}} and~\ref{ass:policy}, the process $\{(Q^0_t, \dots, Q^{L-1}_t)\}_{t \ge 1}$ converges to a limit $(Q^0_\EXPL, \dots, Q^{L-1}_{\EXPL})$ a.s., where the limit is the unique fixed point of the DP for a periodic MDP:\footnote{See \appref{app:periodic-MDP} for an overview of periodic MDPs.}
    \begin{equation}\label{eq:PDP}
        Q^\ell_{\EXPL}(z,a) = r^\ell_{\EXPL}(z,a) + \gamma 
        \medop\sum_{z' \in \ALPHABET Z} P^\ell_{\EXPL}(z' | z,a) \max_{a' \in \ALPHABET A} Q^{\MOD{\ell+1}}_{\EXPL}(z',a'),
        \quad \forall \ell \in \ALPHABET L, 
        \forall (z,a) \in \ALPHABET Z \times \ALPHABET A
    \end{equation}
    where the periodic rewards $(r^0_{\EXPL}, \dots, r^{L-1}_{\EXPL})$ and dynamics $(P^0_{\EXPL}, \dots, P^{L-1}_{\EXPL})$ are given by
    \begin{align}\label{eq:periodic-MDP}
        r^\ell_{\EXPL}(z,a) &\coloneqq \medop\sum_{s \in \ALPHABET S} r(s,a) \zexpl^\ell(s \mid z),
        &
        P^\ell_{\EXPL}(z'|z,a) &\coloneqq \smashoperator{{\medop\sum}_{ (s,y') \in \ALPHABET S \times \ALPHABET Y}}
        \IND_{\{z' = \phi(z,y',a)\}} P(y'|s,a) \zexpl^\ell(s|z).  
    \end{align}
\end{theorem}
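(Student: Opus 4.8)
The plan is to recast \ref{eq:PASQL} as an asynchronous stochastic approximation recursion driven by an \emph{exogenous} ergodic Markov chain, and then to invoke a standard convergence theorem for such recursions. The argument has three ingredients: (i) the claimed limit is well defined as the unique fixed point of a $\gamma$-contraction; (ii) the process feeding the recursion is a time-homogeneous ergodic chain whose stationary one-step averages reproduce exactly the periodic reward and kernel in~\eqref{eq:periodic-MDP}; and (iii) the hypotheses of an asynchronous stochastic-approximation theorem hold, with the non-Markovianity of the agent state surfacing only through a Markov-correlated noise term that has zero stationary mean.

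\emph{Well-posedness of the limit.} Let $\mathcal{B}$ act on $L$-tuples $(Q^0,\dots,Q^{L-1})\in(\reals^{\ALPHABET Z\times\ALPHABET A})^L$ by letting $(\mathcal{B}Q)^\ell(z,a)$ be the right-hand side of~\eqref{eq:PDP}. Since $P^\ell_{\EXPL}(\cdot\mid z,a)$ is a probability kernel and $(\mathcal{B}Q)^\ell$ depends on $Q$ only through $Q^{\MOD{\ell+1}}$, via a $\max$ over $\ALPHABET A$ followed by a $\gamma$-discounted average, $\NORM{(\mathcal{B}Q)^\ell-(\mathcal{B}\tilde Q)^\ell}_\infty\le\gamma\NORM{Q^{\MOD{\ell+1}}-\tilde Q^{\MOD{\ell+1}}}_\infty$, hence $\max_\ell\NORM{(\mathcal{B}Q)^\ell-(\mathcal{B}\tilde Q)^\ell}_\infty\le\gamma\max_\ell\NORM{Q^\ell-\tilde Q^\ell}_\infty$ because $\ell\mapsto\MOD{\ell+1}$ permutes $\ALPHABET L$. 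By the Banach fixed point theorem $\mathcal{B}$ has a unique fixed point $(Q^0_{\EXPL},\dots,Q^{L-1}_{\EXPL})$; this is the candidate limit, and it is equivalently the Bellman solution of the time-homogeneous MDP on state space $\ALPHABET L\times\ALPHABET Z$ obtained by ``unrolling'' the periodic MDP $(r^\ell_{\EXPL},P^\ell_{\EXPL})$.

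\emph{The driving chain and its mean field.} By \autoref{lem:Markov} the process $(S_t,Y_t,Z_t,A_t)$ is Markov; adjoining the phase, $\tilde X_t\coloneqq(S_t,Y_t,Z_t,A_t,\MOD{t})$ is a \emph{time-homogeneous} Markov chain on a finite set, and \autoref{ass:policy} is exactly the statement that $\tilde X_t$ is ergodic with stationary law whose $\ell$-section is $\zexpl^\ell$ and which charges every $(\ell,z,a)$. Since the behavior policy $\EXPL$ is fixed, this chain is exogenous: its law does not depend on the iterates. Writing \ref{eq:PASQL} at a step with $\MOD{t}=\ell$, $(z_t,a_t)=(z,a)$ as $Q^\ell_{t+1}(z,a)=Q^\ell_t(z,a)+\alpha^\ell_t(z,a)\,g(Q_t,\tilde X_t,\tilde X_{t+1})$, where $g=R_t+\gamma\max_{a'}Q^{\MOD{\ell+1}}_t(z_{t+1},a')-Q^\ell_t(z,a)$, the conditional mean of $g$ over one transition of $\tilde X_t$ \emph{from stationarity}, given $\{\MOD{t}=\ell,(z_t,a_t)=(z,a)\}$, equals $r^\ell_{\EXPL}(z,a)+\gamma\sum_{z'}P^\ell_{\EXPL}(z'\mid z,a)\max_{a'}Q^{\MOD{\ell+1}}_t(z',a')-Q^\ell_t(z,a)=(\mathcal{B}Q_t)^\ell(z,a)-Q^\ell_t(z,a)$, using $\zexpl^\ell(s\mid z,a)=\zexpl^\ell(s\mid z)$ and~\eqref{eq:periodic-MDP}. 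Thus the stationary mean field of the recursion is $\mathcal{B}-\mathrm{Id}$; its unique zero over the triples visited infinitely often (all of them, by the positivity in \autoref{ass:policy}) is $Q_{\EXPL}$, and the associated ODE $\dot Q=\mathcal{B}Q-Q$ is contracting with $Q_{\EXPL}$ globally asymptotically stable.

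\emph{Applying the theorem, and the main obstacle.} We then invoke an asynchronous stochastic-approximation convergence result (the asynchronous Q-learning analyses of~\cite{Jaakkola1994,neuroDP}, or Markov-noise stochastic approximation), verifying: the step sizes respect the asynchronous update pattern and satisfy $\sum_t\alpha^\ell_t(z,a)=\infty$, $\sum_t(\alpha^\ell_t(z,a))^2<\infty$ a.s.\ (\autoref{ass:lr}); every component is updated infinitely often (ergodicity of $\tilde X_t$ plus the positivity in \autoref{ass:policy}); the iterates are a.s.\ bounded (immediate from $r\in[0,R_{\max}]$ and the contraction structure, which makes the ball of radius $R_{\max}/(1-\gamma)$ forward-invariant up to a summable perturbation); and the fluctuation $g-\EXP_{\mathrm{stat}}[g]$ splits into a martingale difference with bounded conditional second moment plus a mean-reverting Markov-noise term. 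The conclusion is $(Q^0_t,\dots,Q^{L-1}_t)\to(Q^0_{\EXPL},\dots,Q^{L-1}_{\EXPL})$ a.s. The main obstacle — the technical heart, paralleling the filter-stability / unique-ergodicity arguments of~\cite{Kara2022,EWRL-RQL} — is that the \emph{belief} $\PR(S_t=\cdot\mid z_{1:t},a_{1:t})$ that actually determines the one-step target does not equal the stationary conditional $\zexpl^{\MOD{t}}(\cdot\mid z_t)$ appearing in~\eqref{eq:periodic-MDP} and does not converge pathwise, so $g$ is genuinely Markov-correlated rather than conditionally unbiased. The resolution is that this discrepancy has zero mean under the stationary law of $\tilde X_t$ and its cumulative $\alpha$-weighted contribution is negligible (using $\sum_t(\alpha^\ell_t)^2<\infty$ and the mixing of $\tilde X_t$, e.g.\ through a Poisson equation), which is precisely the hypothesis under which the Markov-noise stochastic-approximation theorem applies; once the reduction above is in place, the remainder is routine verification.
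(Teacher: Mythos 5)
Your overall architecture is sound and your computation of the stationary mean field is exactly the one underlying \eqref{eq:periodic-MDP}; you have also correctly located the central difficulty, namely that $\PR(S_t = \cdot \mid z_{1:t}, a_{1:t})$ is not $\zexpl^{\MOD{t}}(\cdot \mid z_t)$, so the update direction is Markov-correlated rather than conditionally unbiased. The gap is that you resolve this difficulty by invoking ``an asynchronous stochastic-approximation convergence result'' and declaring the remainder routine, but no off-the-shelf theorem covers your setting as stated. The results you cite (\cite{Jaakkola1994,neuroDP}) handle asynchronous updates with \emph{martingale-difference} noise; the Markov-noise/Poisson-equation machinery is developed for \emph{synchronous} iterations whose step sizes are decoupled from the driving chain. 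Here the step sizes $\alpha^\ell_t(z,a)$ vanish except when $(\ell,z,a)=(\MOD{t},z_t,a_t)$, so the ``clock'' of each component is itself driven by the chain, and the noise function depends on the iterate $Q_t$ through the $\max$ term. Making the Poisson-equation argument go through under all three complications simultaneously (asynchrony, chain-driven clocks, iterate-dependent Markov noise) is precisely the technical heart of the result, and it is the step your proposal does not carry out.

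The paper avoids the need for such a monolithic theorem by a three-way splitting of the error $\Delta^\ell_t = Q^\ell_t - Q^\ell_{\EXPL}$ into components $X^{\ell,0}_t + X^{\ell,1}_t + X^{\ell,2}_t$, driven respectively by the reward mismatch, the kernel mismatch, and the propagated value error. The first two driving terms do \emph{not} depend on the iterate $Q_t$ (only on the fixed limit $Q_{\EXPL}$ and the state of the periodic chain), so their convergence to zero reduces to an ergodic-averaging statement for a fixed function of a periodic Markov chain under general Robbins--Monro step sizes --- a much weaker requirement, established via a scalar Markov-noise stochastic-approximation lemma applied componentwise along the subsequence $\{t : \MOD{t}=\ell\}$. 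The third component is iterate-dependent but involves no distributional mismatch at all: it is bounded pathwise by $\gamma \max_{\ell \in \ALPHABET L} \NORM{\Delta^{\MOD{\ell+1}}_t}$, so a deterministic sup-norm contraction argument in the style of Jaakkola et al.\ finishes the proof. To salvage your route you would either need to actually prove the combined asynchronous-Markov-noise theorem you are implicitly assuming, or adopt a decomposition of this kind so that the Markov-correlated part of the noise never multiplies the iterate.
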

See \appref{app:PASQL} for proof. Some salient features of the result are as follows:

\begin{itemize}[itemsep=0pt,topsep=0pt, partopsep=0pt]
    \item In contrast to Q-learning for MDPs, the limiting value $Q^\ell_{\mu}$ depends on the behavioral policy $\mu$. This dependence arises because the agent state $Z_t$ is not an information state and thus is not policy-independent. See \cite{witsenhausen1975policy} for a discussion on policy independence of information states.
    \item  We can recover some existing results in the literature as special cases of \autoref{thm:convergence}. If we take $L=1$, \autoref{thm:convergence} recovers the convergence result for \ref{eq:ASQL} obtained in \cite[Thm.~2]{EWRL-RQL}. In addition, if the agent state is a sliding window memory, \autoref{thm:convergence} recovers the convergence result obtained in \cite[Thm.~4.1]{Kara2022}. 
    Note that the results of \autoref{thm:convergence} for these special cases is more general because the
    previous results were derived under a restrictive assumption on the learning rates.  
\end{itemize}

The policy learned by \ref{eq:PASQL} is the periodic policy  $\pi_{\EXPL} = (\pi_{\EXPL}^0, \dots, \pi_{\EXPL}^{L-1})$  given by
\begin{equation}\label{eq:policy}
    \pi^\ell_{\EXPL}(z) = \arg \max_{a \in \ALPHABET A}Q^\ell_{\EXPL}(z,a),
    \quad \forall \ell \in \ALPHABET L, z \in \ALPHABET Z.
    \tag{PASQL-policy}
\end{equation}
Since \ref{eq:PASQL} learns a periodic policy, it circumvents the limitation of \ref{eq:ASQL} described in the introduction. \autoref{thm:convergence} addresses the main challenge in the convergence analysis of \ref{eq:PASQL}: the non-Markovian dynamics of $\{Z_t\}_{t \ge 1}$. A natural follow-up question is: How good is the learnt policy~\eqref{eq:policy} compared to the optimal? We address this in the next section.

\subsection{Characterizing the optimality-gap of the converged limit} \label{sec:AIS}

\textbf{History-dependent policies and their value functions.} 
Let $h_t = (y_{1:t}, a_{1:t-1})$ denote the history of observations and actions until time~$t$.
and let $\sigma_t \colon h_t \mapsto z_t$ denotes the map from histories to agent-states obtained by unrolling the memory update function~$\phi$, i.e., 
    $\sigma_1(h_1) = \phi(z_0, y_1, a_0)$, where $z_0$ is the initial agent state, $a_0$ is a dummy action used to initialize the process,
    $\sigma_2(h_2) = \phi(\sigma_1(h_1),y_2,a_1)$, 
    etc.

For any history dependent policy
\(
    \vec{ \pi} = (\vec \pi_1,  \vec \pi_2, \cdots)
\),
where $\vec \pi_t \colon h_t \mapsto a_t$, let 
\(
    V^{\vec \pi}_t(h_t) \coloneqq \EXP^{\vec \pi}\bigl[ \sum_{\tau = t}^{\infty} \gamma^\tau R_{\tau} \bigm| h_t \Bigr]
\)
denote the value function of policy $\vec \pi$ starting from history~$h_t$ at time~$t$. Let $V^\star_t(h_t) \coloneqq \sup_{\vec \pi} V^{\vec \pi}_t(h_t)$ denote the optimal value function, where the supermum is over all history dependent policies. In \autoref{thm:convergence}, we have shown that \ref{eq:PASQL} converges to a limit. Let $\vec \pi_{\mu} = (\vec \pi_{\mu,1}, \vec \pi_{\mu,2}, \dots)$ denote the history dependent policy corresponding to the periodic policy $(\pi^0_\mu, \dots, \pi^{L-1}_{\mu})$ given by~\eqref{eq:policy}, i.e.,
\(
    \vec \pi_{\mu,t}(h_t) \coloneqq \pi^{\MOD{t}}(\sigma_t(h_t)),
\)
In this section, we present a bound on the sub-optimality gap $V^\star_t(h_t) - V^{\vec \pi_{\mu}}_t(h_t)$. 

\textbf{Integral probability metrics.} Let $\F$ be a convex and balanced\footnote{$\F$ is balanced means that for every $f \in \F$ and scalar $a$ such that $|a|\le 1$, we have $af \in \F$.} subset of (measureable) real-valued functions on $\ALPHABET S$. The integral probability metric (IPM) w.r.t.\ $\F$, denoted by $d_{\F}$, is defined as follows:  any probability measures $\xi_1$ and $\xi_2$ on $\ALPHABET S$, we have
\(
    d_{\F} (\xi_1, \xi_2) \coloneqq \sup_{f \in \F} \bigl\lvert \int f d\xi_1 - \int f d\xi_2 \bigr\rvert.
\)
Moreover, for any real-valued function $f$ on $\ALPHABET S$, define $\rho_{\F} \coloneqq \inf\{ \rho > 0 \colon f / \rho \in \F\}$ to be the Minkowski functional w.r.t.\ $\F$. 
Note that if for every positive $\rho$, $f/\rho \not\in \F$, then $\rho_{\F}(f) = \infty$.

Many commonly used metrics on probability spaces are IPMs. For example,
\begin{enumerate*}[label=(\roman*)]
\item Total variation distance for which $\F = \{ \SPAN(f) \le 1 \}$, where $\SPAN(f) = \max f - \min f$ is the span seminorm of~$f$. In this case, $\rho_{\F}(f) = \SPAN(f)$. 
\item Wasserstein distance for which $\F = \{ \Lip(f) \le 1 \}$, where $\Lip(f)$ is the Lipschitz constant of $f$. In this case, $\rho_{\F}(f) = \Lip(f)$.  
\end{enumerate*}
Other examples include Kantorovich metric, bounded Lipschitz metric, and maximum mean discrepancy. See~\cite{muller1997integral,subramanian2022approximate} for more details.

\textbf{Sub-optimality gap.}
Let $\ALPHABET T(t,\ell) \coloneqq \{ \tau \ge t : \MOD{\tau} = \ell \}$.
Furthermore, for any $\ell \in \ALPHABET L$ and~$t$, define
\begin{align} \label{eq:pasql-approx-eps-delta}
    \varepsilon^{\ell}_{t} &\coloneqq \sup_{\tau \in \ALPHABET T(t,\ell)} \sup_{h_{\tau}, a_{\tau}}
    \Bigl\lvert \EXP [R_{\tau} \mid h_{\tau}, a_{\tau}] - \sum_{s \in \ALPHABET S}  r(s,a_{\tau}) \zexpl^\ell(s \mid \sigma_\tau(h_\tau),a_\tau)
\Bigr\rvert, \\
\delta^{\ell}_t &\coloneqq \sup_{\tau \in \ALPHABET T(t,\ell)} \sup_{h_{\tau}, a_{\tau}} d_{\F} (\PR(Z_{\tau+1} = \cdot \mid h_{\tau}, a_{\tau}), P^\ell_{\EXPL}(Z_{\tau+1} = \cdot |\sigma_{\tau}(h_{\tau}),a_{\tau})).
\end{align}
Then, we have the following sub-optimality gap for $\vec \pi_{\mu}$. 
\begin{theorem}\label{thm:approx}
Let $V^{\ell}_{\EXPL} (z) \coloneqq \max_{a \in \ALPHABET A} Q^{\ell}_{\EXPL} (z, a)$.  Then,
\begin{equation}\label{eq:approx-bound}
    \sup_{h_t} \bigl[ V^{\star}_t (h_t) - V^{\vec \pi_{\mu}}_t(h_t) \bigr]
    \leq \frac{2}{(1-\gamma^L)} \sum_{\ell \in \ALPHABET L} \gamma^{\ell} \Bigl[\varepsilon^{\MOD{t+\ell}}_{t + \ell} + \gamma \delta^{\MOD{t+\ell}}_{t + \ell} \rho_{\F} (V^{\MOD{t+\ell+1}}_{\EXPL})
    \Bigr].
\end{equation}
\end{theorem}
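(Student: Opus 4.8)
The plan is to run the approximate-information-state error-propagation argument of \cite{subramanian2022approximate}, adapted to the periodic setting. Write $h_t = (y_{1:t},a_{1:t-1})$ and let $\hat V_t(h_t) \coloneqq V^{\MOD{t}}_{\EXPL}(\sigma_t(h_t))$ be the surrogate periodic-MDP value function of \autoref{thm:convergence} pulled back to histories via $\sigma_t$. Since the POMDP with $h_t$ as state is an MDP, $V^\star_t$ satisfies $V^\star_t(h_t) = \max_{a\in\ALPHABET A}\bigl\{\EXP[R_t\mid h_t,a] + \gamma\,\EXP[V^\star_{t+1}(h_{t+1})\mid h_t,a]\bigr\}$, and $V^{\vec\pi_\EXPL}_t(h_t) = \EXP[R_t\mid h_t,a^\star_t] + \gamma\,\EXP[V^{\vec\pi_\EXPL}_{t+1}(h_{t+1})\mid h_t,a^\star_t]$ where $a^\star_t \coloneqq \pi^{\MOD{t}}_{\EXPL}(\sigma_t(h_t)) = \vec\pi_{\EXPL,t}(h_t)$. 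I will bound each term of $V^\star_t - V^{\vec\pi_\EXPL}_t \le \lvert V^\star_t - \hat V_t\rvert + \lvert \hat V_t - V^{\vec\pi_\EXPL}_t\rvert$ by a contraction estimate driven by a one-step mismatch.

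\textbf{Step 1 (one-step mismatch).} The crux is the estimate: for all $t$, $h_t$, and $a\in\ALPHABET A$,
\[
\Bigl\lvert\, \EXP[R_t\mid h_t,a] + \gamma\,\EXP[\hat V_{t+1}(h_{t+1})\mid h_t,a] - r^{\MOD{t}}_{\EXPL}(\sigma_t(h_t),a) - \gamma\sum_{z'\in\ALPHABET Z} P^{\MOD{t}}_{\EXPL}(z'\mid\sigma_t(h_t),a)\,V^{\MOD{t+1}}_{\EXPL}(z') \,\Bigr\rvert \le \kappa_t,
\]
with $\kappa_t \coloneqq \varepsilon^{\MOD{t}}_t + \gamma\,\delta^{\MOD{t}}_t\,\rho_{\F}(V^{\MOD{t+1}}_{\EXPL})$. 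To get this, note that unrolling $\phi$ gives $\sigma_{t+1}(h_{t+1}) = Z_{t+1}$, so $\EXP[\hat V_{t+1}(h_{t+1})\mid h_t,a] = \sum_{z'}\PR(Z_{t+1}=z'\mid h_t,a)\,V^{\MOD{t+1}}_{\EXPL}(z')$; then split into a reward residual and a transition residual. The reward residual $\lvert \EXP[R_t\mid h_t,a] - r^{\MOD{t}}_{\EXPL}(\sigma_t(h_t),a)\rvert$ equals $\lvert \EXP[R_t\mid h_t,a] - \sum_s r(s,a)\zexpl^{\MOD{t}}(s\mid\sigma_t(h_t),a)\rvert \le \varepsilon^{\MOD{t}}_t$, using $\zexpl^\ell(s\mid z,a)=\zexpl^\ell(s\mid z)$ (noted before \autoref{thm:convergence}) and $t\in\ALPHABET T(t,\MOD{t})$. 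The transition residual, being a difference of integrals of $V^{\MOD{t+1}}_{\EXPL}$ against $\PR(Z_{t+1}=\cdot\mid h_t,a)$ and $P^{\MOD{t}}_{\EXPL}(Z_{t+1}=\cdot\mid\sigma_t(h_t),a)$, is at most $\rho_{\F}(V^{\MOD{t+1}}_{\EXPL})\,\delta^{\MOD{t}}_t$ by the definitions of the Minkowski functional and of $d_{\F}$ (if $\rho_{\F}(V^{\MOD{t+1}}_{\EXPL})=\infty$ the final bound is vacuous, so assume it is finite). This is the only place where the structure of the agent state enters, and it is the main obstacle: identifying $\sigma_{t+1}(h_{t+1})$ with $Z_{t+1}$ and matching the two residuals exactly to the definitions of $\varepsilon^\ell_t$ and $\delta^\ell_t$.

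\textbf{Step 2 (two contractions).} By \eqref{eq:PDP}, $\hat V_t(h_t) = \max_a\bigl(r^{\MOD{t}}_{\EXPL}(\sigma_t(h_t),a) + \gamma\sum_{z'}P^{\MOD{t}}_{\EXPL}(z'\mid\sigma_t(h_t),a)\,V^{\MOD{t+1}}_{\EXPL}(z')\bigr)$, with maximizer $a^\star_t$. Combining Step 1 with the optimality equation for $V^\star$, the inequality $\lvert\max_a f-\max_a g\rvert\le\max_a\lvert f-g\rvert$, and the fact that $W\mapsto\bigl(h_t\mapsto\EXP[R_t\mid h_t,a]+\gamma\EXP[W_{t+1}(h_{t+1})\mid h_t,a]\bigr)$ is a $\gamma$-contraction in the sup-norm over histories, gives
\[
\sup_{h_t}\lvert V^\star_t(h_t)-\hat V_t(h_t)\rvert \le \kappa_t + \gamma\sup_{h_{t+1}}\lvert V^\star_{t+1}(h_{t+1})-\hat V_{t+1}(h_{t+1})\rvert,
\]
and combining Step 1 (at the common action $a^\star_t$) with the policy Bellman equation for $V^{\vec\pi_\EXPL}$ gives the same inequality with $(V^\star,\hat V)$ replaced by $(\hat V,V^{\vec\pi_\EXPL})$. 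Both recursions are of the form $e_t\le\kappa_t+\gamma e_{t+1}$ with $\sup_t e_t<\infty$ (all value functions lie in $[0,R_{\max}/(1-\gamma)]$ since $r,r^\ell_{\EXPL}\in[0,R_{\max}]$), so iterating and letting $\gamma^T e_{t+T}\to 0$ yields $e_t\le\sum_{j\ge 0}\gamma^j\kappa_{t+j}$.

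\textbf{Step 3 (resummation).} Writing $j=mL+\ell$ with $m\ge 0$, $\ell\in\ALPHABET L$, we have $\MOD{t+j}=\MOD{t+\ell}$, and since $\varepsilon^\ell_\tau,\delta^\ell_\tau$ are nonincreasing in $\tau$ (their defining suprema run over the nested sets $\ALPHABET T(\tau,\ell)$), $\kappa_{t+j}\le\kappa_{t+\ell}$. Hence
\[
\sum_{j\ge 0}\gamma^j\kappa_{t+j} \le \sum_{\ell\in\ALPHABET L}\gamma^\ell\kappa_{t+\ell}\sum_{m\ge 0}\gamma^{mL} = \frac{1}{1-\gamma^L}\sum_{\ell\in\ALPHABET L}\gamma^\ell\Bigl(\varepsilon^{\MOD{t+\ell}}_{t+\ell} + \gamma\,\delta^{\MOD{t+\ell}}_{t+\ell}\,\rho_{\F}(V^{\MOD{t+\ell+1}}_{\EXPL})\Bigr).
\]
Applying this to both inequalities of Step 2 and adding them via the triangle inequality for $V^\star_t - V^{\vec\pi_\EXPL}_t$ produces the factor $2$ and yields exactly \eqref{eq:approx-bound}. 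Steps 2–3 are the routine periodic analogue of standard approximate-value-iteration error propagation; the work is in Step 1.
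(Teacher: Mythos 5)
Your proof is correct, but it takes a genuinely different route from the paper's. The paper does not re-derive the error propagation: it observes that the agent-state process is an approximate information state in the sense of \cite[Definition 7]{subramanian2022approximate} with per-step errors $(\varepsilon_t,\delta_t)$, invokes the \emph{finite-horizon} bound of \cite[Theorem 9]{subramanian2022approximate} to get $\sup_{h_t}[V^\star_{t,T}-V^{\vec\pi_\mu}_{t,T}] \le 2\sum_{\tau=t}^T \gamma^{\tau-t}[\varepsilon_\tau+\gamma\delta_\tau\rho_{\F}(\hat V_{\tau+1,T})]$, and then lets $T\to\infty$ (using bounded rewards for $V^\star_{t,T}\to V^\star_t$ and $V^{\vec\pi_\mu}_{t,T}\to V^{\vec\pi_\mu}_t$, and standard periodic-MDP results for $\hat V_{t,T}\to V^{\MOD{t}}_{\mu}$) before performing the same monotonicity-plus-geometric-series resummation you do in Step~3. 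You instead prove the propagation inequality from scratch in the infinite-horizon periodic setting: your Step~1 is exactly the content of the AIS property, and your Step~2 replays the two-contraction argument ($V^\star$ vs.\ $\hat V$, and $\hat V$ vs.\ $V^{\vec\pi_\mu}$ at the common greedy action) that underlies the cited theorem. What your route buys is self-containedness and the avoidance of the three limit interchanges; what the paper's route buys is brevity and a clean reduction to an existing result. Two minor remarks: the monotonicity you use is the right one --- $\varepsilon^\ell_\tau$ and $\delta^\ell_\tau$ are non\emph{increasing} in $\tau$ because the index sets $\ALPHABET T(\tau,\ell)$ are nested decreasing (the paper's closing sentence says ``non-decreasing,'' which appears to be a typo, since the bound requires exactly the direction you state); and your caveat about $\rho_{\F}(V^{\MOD{t+\ell+1}}_{\EXPL})=\infty$ rendering the bound vacuous is a reasonable reading that the paper leaves implicit.
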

See \appref{app:approx} for proof.
The salient features of the sub-optimality gap of \autoref{thm:approx} are as follows. 
\begin{itemize}[itemsep=0pt,topsep=0pt, partopsep=0pt]
    \item  We can recover some existing results as special cases of \autoref{thm:approx}. When we take $L=1$, \autoref{thm:approx} recovers the sub-optimality gap for \ref{eq:ASQL} obtained in \cite[Thm.~3]{EWRL-RQL}. In addition, when the agent state is a sliding window memory, \autoref{thm:approx} is similar to the sub-optimality gap obtained in \cite[Thm.~4.1]{Kara2022}. 
    Note that the results of \autoref{thm:approx} for these special cases is more general because the
    previous results were derived under a restrictive assumption on the learning rates.  
    \item The sub-optimality gap in \autoref{thm:approx} is on the sub-optimality w.r.t.\ the optimal \emph{history-dependent} policy rather than the optimal non-stationary agent-state policy. Thus, it inherently depends on the quality of the agent state. Consequently, even if $L \to \infty$, the sub-optimality gap does not go to zero. 
    
    \item 
    It is not easy to characterize the sensitivity of the bound to the period $L$. In particular, increasing $L$ means changing behavioral policy $\mu$, and therefore changing the converged limit $(\zeta^0_{\mu}, \dots, \zeta^{L-1}_{\mu})$, which impacts the right hand side of~\eqref{eq:approx-bound} in a complicated way. So, it is not necessarily the case that increasing $L$ reduces the sub-optimality gap. This is not surprising, as we have seen earlier in \autoref{ex:1} presented in the introduction that even the performance of periodic agent-state based policies is not monotone in~$L$.
\end{itemize}

\section{Numerical experiments}\label{sec:experiments}

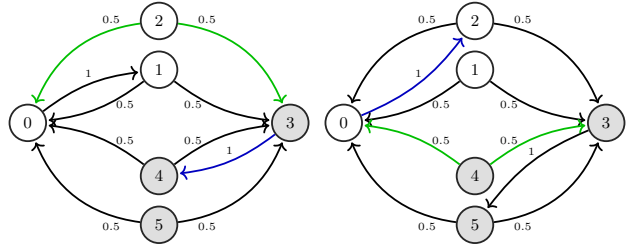
\begin{wrapfigure}{r}{8.2cm}
    \vskip -4\baselineskip
    \centering
    \begin{subfigure}{4cm}
    \centering
    \resizebox{4cm}{!}{\begin{tikzpicture}[line width=1pt, node distance=0.5cm and 2cm,font=\small]
        %\useasboundingbox (-0.5,-0.65) rectangle (4.5,1.25);
        \node[state0] (S1) {$0$};
        \node[state0, above right=of S1] (S2) {$1$};
        \node[state0, above=2mm of S2] (S3) {$2$};
        
        \node[state, below right=of S2] (S4) {$3$};
        
        \node[state, below right=of S1] (S5) {$4$};
        \node[state, below=2mm of S5] (S6) {$5$};

        \draw[every loop]
              (S1) edge[bend left=15, auto=left, pos=0.5] node[above] {\tiny $1$} (S2)
              (S2) edge[bend left=15, auto=left, pos=0.2]  node[below] {\tiny $0.5$} (S1)
              (S2) edge[bend right=15, auto=right, pos=0.2]  node[below] {\tiny $0.5$} (S4)
              (S3) edge[reward1,bend right, auto=right, pos=0.2]  node[above] {\tiny $0.5$} (S1)
              (S3) edge[reward1,bend left, auto=left, pos=0.2]  node[above] {\tiny $0.5$} (S4)
              (S4) edge[reward05,bend left=15, auto=left] node[above] {\tiny $1$} (S5)
              (S5) edge[bend left=15, auto=left, pos=0.2]  node[above] {\tiny $0.5$} (S4)
              (S5) edge[bend right=15, auto=right, pos=0.2]  node[above] {\tiny $0.5$} (S1)
              (S6) edge[bend right, auto=right, pos=0.2]  node[below] {\tiny $0.5$} (S4)
              (S6) edge[bend left, auto=left, pos=0.2]  node[below] {\tiny $0.5$} (S1)
            ;

        %\draw[help lines] (-0.5,-0.5) grid (4.5,1.25);
    \end{tikzpicture}}
    \caption{Dynamics under action~$0$.}
    \end{subfigure}
    \hfill
    \begin{subfigure}{4cm}
    \centering
    \resizebox{4cm}{!}{\begin{tikzpicture}[line width=1pt, node distance=0.5cm and 2cm,font=\small]
        %\useasboundingbox (-0.5,-0.65) rectangle (4.5,1.25);
        \node[state0] (S1) {$0$};
        \node[state0, above right=of S1] (S2) {$1$};
        \node[state0, above=2mm of S2] (S3) {$2$};
        
        \node[state, below right=of S2] (S4) {$3$};
        
        \node[state, below right=of S1] (S5) {$4$};
        \node[state, below=2mm of S5] (S6) {$5$};

        \draw[every loop]
              (S1) edge[reward05,bend right=15, auto=left, pos=0.5] node[above] {\tiny $1$} (S3)
              (S2) edge[bend left=15, auto=left, pos=0.2]  node[below] {\tiny $0.5$} (S1)
              (S2) edge[bend right=15, auto=right, pos=0.2]  node[below] {\tiny $0.5$} (S4)
              (S3) edge[bend right, auto=right, pos=0.2]  node[above] {\tiny $0.5$} (S1)
              (S3) edge[bend left, auto=left, pos=0.2]  node[above] {\tiny $0.5$} (S4)
              (S4) edge[bend right=15, auto=left, pos=0.5] node[above] {\tiny $1$} (S6)
              (S5) edge[reward1, bend left=15, auto=left, pos=0.2]  node[above] {\tiny $0.5$} (S4)
              (S5) edge[reward1, bend right=15, auto=right, pos=0.2]  node[above] {\tiny $0.5$} (S1)
              (S6) edge[bend right, auto=right, pos=0.2]  node[below] {\tiny $0.5$} (S4)
              (S6) edge[bend left, auto=left, pos=0.2]  node[below] {\tiny $0.5$} (S1)
            ;

        %\draw[help lines] (-0.5,-0.5) grid (4.5,1.25);
    \end{tikzpicture}}
    \caption{Dynamics under action~$1$.}
    \end{subfigure}
    \vskip -0.25\baselineskip
    \caption{The model for \autoref{ex:PASQL-example}, where states which have the same color give the same observation; the green edges give a reward of $+1$ and blue edges give a reward of $+0.5$.}
    \label{fig:PASQL-example}
    \vskip -1.5\baselineskip
\end{wrapfigure}

In this section, we present a numerical example to highlight the salient features of our results. We use the following POMDP model.

\begin{example}\label{ex:PASQL-example}
    Consider a POMDP with $\ALPHABET S = \{0, 1, \dots, 5\}$, $\ALPHABET A = \{0, 1\}$, $\ALPHABET Y = \{0, 1\}$ and $\gamma=0.9$. The dynamics are as shown in \autoref{fig:PASQL-example}. The observation is $0$ in states $\{0,1,2\}$ which are shaded white and is $1$ in states $\{3,4,5\}$ which are shaded gray. The transitions shown in green give a reward of $+1$; those in in blue give a reward of $+0.5$; others give no reward. 
\end{example}

We consider a family of models, denoted by $\mathcal M(p)$, $p \in [0,1]$, which are similar to \autoref{ex:PASQL-example} except the controlled state transition matrix is $p I + (1-p) P$, where $P$ is the controlled state transition matrix of \autoref{ex:PASQL-example} shown in \autoref{fig:PASQL-example}. In the results reported below, we use $p = 0.01$. The hyperparameters for the experiments are provided in \appref{app:reproducibility}.

\textbf{Convergence of \ref{eq:PASQL} with $L=2$.}
We assume that the agent state $Z_t = Y_t$ and take period $L=2$. We consider three behavioral policies: $\mu_k = (\mu_k^0, \mu_k^1)$, $k \in \ALPHABET K \coloneqq \{1,2,3\}$, where $\mu_k^\ell \colon \{0, 1\} \to \Delta(\{0,1\})$, $\ell \in \{0,1\}$. The policy $\mu_k$ is completely characterized by four numbers which we write in matrix form as: $[\mu_k^0(0|0), \mu_k^1(0|0); \mu_k^0(0|1), \mu_k^1(0|1)]$. With this notation, the three policies are given by
%\begin{equation}\label{eq:behavioral}
\(
    \mu_{1} \coloneqq
    %\smallMATRIX{ \mu_1^0(0|0) & \mu_1^1(0|0) \\ \mu_1^0(0|1) & \mu_1^1(0|1) } =
    %\smallMATRIX{0.2 & 0.8 \\ 0.2 & 0.8}
    [0.2 , 0.8 ; 0.8 , 0.2]
\)
    ,
\(
    \mu_2 \coloneqq 
    %\MATRIX{ \mu_2^0(0|0) & \mu_2^1(0|0) \\ \mu_2^0(0|1) & \mu_2^1(0|1) } =
    %\smallMATRIX{0.5 & 0.5 \\ 0.5 & 0.5}
    [0.5 , 0.5 ; 0.5 , 0.5]
\)
    ,
\(
    \mu_3 \coloneqq
    %\MATRIX{ \mu_3^0(0|0) & \mu_3^1(0|0) \\ \mu_3^0(0|1) & \mu_3^1(0|1) } =
    %\smallMATRIX{0.8 & 0.2 \\ 0.8 & 0.2}
    [0.8 , 0.2 ; 0.2 , 0.8]
\)
    .
%\end{equation}

\begin{figure}[!t]
    \centering
    \includegraphics[width=0.95\textwidth]{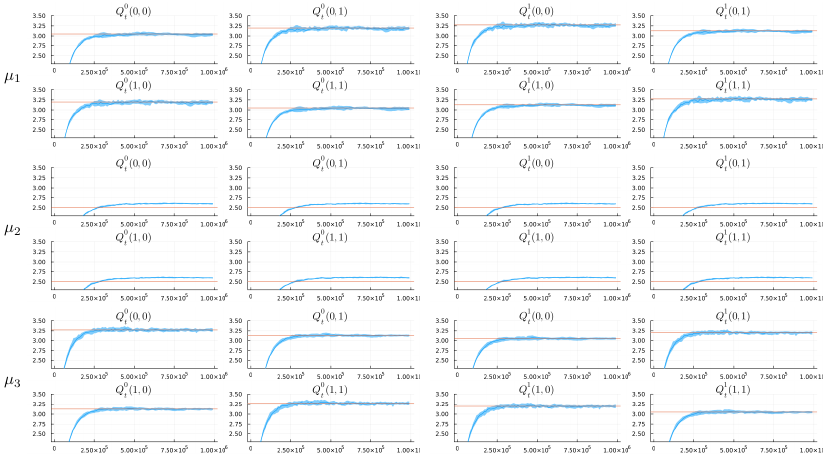}
    \caption{\ref{eq:PASQL} iterates for different behavioral policies (in blue) and the limit predicted by \autoref{thm:convergence} (in red).}
    \label{fig:plots-period2}
\end{figure}

For each behavioral policy $\mu_k$, $k \in \ALPHABET K$,  run \ref{eq:PASQL} for 25 random seeds. The median + interquantile range of the iterates $\{Q^\ell_t(z,a)\}_{t \ge 1}$ as well as the theoretical limits $Q_{\mu_k}(z,a)$ (computed using \autoref{thm:convergence}) are shown in \autoref{fig:plots-period2}.
The salient features of these results are as follows:
\begin{itemize}[noitemsep,topsep=0pt, partopsep=0pt]
    \item \ref{eq:PASQL} converges close to the theoretical limit predicted by \autoref{thm:convergence}.
    \item As highlighted earlier, the limiting value $Q^\ell_{\mu_k}$ depends on the behavioral policy $\mu_k$. 
    \item When the aperiodic behavior policy $\mu_2$ is used, the Markov chain $\{(S_t,Y_t,Z_t,A_t)\}_{t \ge 1}$ is aperiodic, and therefore the limiting distribution $\zeta^{\ell}_{\mu_2}$ and the corresponding Q-functions $Q^\ell_{\mu_2}$ do not depend on~$\ell$. This highlights the fact that we have to choose a periodic behavioral policy to converge to a non-stationary policy~\eqref{eq:policy}.
\end{itemize}

\begin{wraptable}{r}{4.5cm}
    %\vskip -0.5\baselineskip
    \centering
   \caption{Performance of converged periodic policies.}
   \label{tab:performance-period2}
   \begin{tabular}{@{}cccc@{}}
        \toprule
        $J^{\star}_2$ &
        $J^{\pi_{\mu_1}}$  &
        $J^{\pi_{\mu_2}}$  &
        $J^{\pi_{\mu_3}}$  
        \\
        6.793 &
        6.793 &
        1.064 &
        0.532 \\
        \bottomrule
   \end{tabular} 
    \vskip -\baselineskip
\end{wraptable}
\textbf{Comparison of converged policies.}
Finally, we compute the periodic greedy policy $\pi_{\mu_k} = (\pi_{\mu_k}^0, \pi_{\mu_k}^1)$ given by~\eqref{eq:policy}, $k \in \ALPHABET K$, and compute its performance $J^{\pi_{\mu_k}}$ via policy evaluation on the product space $\ALPHABET S \times \ALPHABET Z$ (see \appref{app:evaluation}). 
We also do a brute force search over all $L=2$ periodic deterministic agent-state policies to compute the optimal performance $J^\star_2$ over all such policies.  The results, displayed in \autoref{tab:performance-period2}, illustrate the following:
\begin{itemize}[noitemsep,topsep=0pt, partopsep=0pt]
    \item The greedy policy $\pi_{\mu_k}$ depends on the behavioral policy. This is not surprising given the fact that the limiting value $Q^\ell_{\mu_k}$ depends on $\mu_k$. 
    \item  The policy $\pi_{\mu_1}$ achieves the optimal performance, whereas the policies $\pi_{\mu_2}$ and $\pi_{\mu_3}$ do not perform well. This highlights the importance of starting with a good behavioral policy. See \autoref{sec:discussion} for a discussion on variants such as $\epsilon$-greedy.
\end{itemize}

\begin{wraptable}{R}{4.5cm}
    \vskip -\baselineskip
    \centering
   \caption{Performance of converged stationary policies.}
   \label{tab:performance-period1}
   \begin{tabular}{@{}cccc@{}}
        \toprule
        $J^{\star}_1$ &
        $J^{\pi_{\bar \mu_1}}$  &
        $J^{\pi_{\bar \mu_2}}$  &
        $J^{\pi_{\bar \mu_3}}$  
        \\
        2.633 &
        0.0 &
        1.064 &
        2.633 \\
        \bottomrule
   \end{tabular} 
    \vskip -\baselineskip
\end{wraptable}
\textbf{Advantage of learning periodic policies.}
As stated in the introduction, the main motivation of \ref{eq:PASQL} is that it allows us to learn non-stationary policies. To see why this is useful, we run \ref{eq:ASQL} (which is effectively \ref{eq:PASQL} with $L=1$). We again consider three behavioral policies: $\bar \mu_k$, $k \in \ALPHABET K \coloneqq \{1,2,3\}$, where $\bar \mu_k \colon \{0, 1\} \to \Delta(\{0,1\})$, where (using similar notation as for $L=2$ case)

%\begin{equation}\label{eq:behavioral-period1}
\(
    \bar \mu_{1} \coloneqq
    %\MATRIX{ \bar \mu_1(0|0) \\ \bar \mu_1(0|1) } =
    %\MATRIX{0.2 \\ 0.2 }
    [0.2; 0.8]
\)
    ,
\(
    \bar \mu_2 \coloneqq 
    %\MATRIX{ \mu_2^0(0|0) & \mu_2^1(0|0) \\ \mu_2^0(0|1) & \mu_2^1(0|1) } =
    %\MATRIX{0.5 \\ 0.5 }
    [0.5; 0.5]
\)
    ,
\(
    \bar \mu_3 \coloneqq
    %\MATRIX{ \mu_3^0(0|0) & \mu_3^1(0|0) \\ \mu_3^0(0|1) & \mu_3^1(0|1) } =
    %\MATRIX{0.8  \\ 0.8 }
    [0.8; 0.2]
\)
    .
%\end{equation}

For each behavioral policy $\bar \mu_k$, $k \in \ALPHABET K$, run \ref{eq:ASQL} for 25 random seeds. The results are shown in \appref{app:ex:PASQL-example}. The performance of the greedy policies $\pi_{\bar \mu_k}$ and the performance of the best period $L=1$ deterministic agent-state-based policy computed via brute force is shown in \autoref{tab:performance-period1}. The key implications are as follows:
\begin{itemize}[noitemsep,topsep=0pt, partopsep=0pt]
    \item As was the case for \ref{eq:PASQL}, the greedy policy $\pi_{\bar \mu_k}$ depends on the behavioral policy. As mentioned earlier, this is a fundamental consequence of the fact that the agent state is not an information state. Adding (or removing) periodicity does not change this feature. 
    \item The best performance of \ref{eq:ASQL} is worse than the best performance of \ref{eq:PASQL}. This highlights the potential benefits of using periodicity. However, at the same time, if a bad behavioral policy is chosen (e.g., policy $\mu_3$), the performance of \ref{eq:PASQL} can be worse than that of \ref{eq:ASQL} for a nominal policy (e.g., policy $\bar \mu_2$). This highlights that periodicity is not a magic bullet and some care is needed to choose a good behavioral policy. Understanding what makes a good periodic behavioral policy is an unexplored area that needs investigation.
\end{itemize}

\section{Related work} \label{sec:related-work}

\textbf{Policy search for agent state policies.} There is a rich literature on planning with agent state-based policies that build on the policy evaluation formula presented in \appref{app:evaluation}. See \cite{kochenderfer2022algorithms} for review. These approaches rely on the system model and cannot be used in the RL setting.

\textbf{State abstractions for POMDPs} are related to agent-state based policies.
Some frameworks for state abstractions in POMDPs include predictive state representations (PSR)~\cite{RosencrantzGordonThrun_2004,Boots2011,Hamilton2014,Kulesza2015,Kulesza2015a,Jiang2016}, approximate bisimulation~\cite{CastroPanangadenPrecup_2009,castro2021mico}, and approximate information states (AIS)~\cite{subramanian2022approximate} (which is used in our proof of \autoref{thm:approx}). Although there are various RL algorithms based on such state abstractions, the key difference is that all these frameworks focus on stationary policies in the infinite horizon setting. 
Our key insight that non-stationary/periodic policies improve performance is also applicable to these frameworks.

\textbf{ASQL for POMDPs.} As stated earlier, \ref{eq:ASQL} may be viewed as the special case of \ref{eq:PASQL} when $L=1$.
The convergence of the simplest version of ASQL was established in~\cite{singh1994learning} for $Z_t = Y_t$ under the assumption that the actions are chosen i.i.d.\ (and do not depend on $z_t$). In~\cite{Perkins2002} it was established that $Q^{0}_{\mu}$ is the fixed point of~\eqref{eq:ASQL}, but convergence of $\{Q_t\}_{t \ge 1}$ to $Q^{0}_{\mu}$ was not established. The convergence of ASQL when the agent state is a finite window memory was established in~\cite{Kara2022}. These results were generalized to general agent-state models in~\cite{EWRL-RQL}. The regret of an optimistic variant of ASQL was presented in~\cite{dong2022simple}. However, all of these papers focus on stationary policies. 

Our analysis is similar to the analysis of~\cite{Kara2022,EWRL-RQL} with two key differences. First, their convergence results were derived under the assumption that the learning rates are the reciprocal of visitation counts. We relax this assumption to the standard learning rate conditions of \autoref{ass:lr} using ideas from stochastic approximation. Second, their analysis is restricted to stationary policies. We generalize the analysis to periodic policies using ideas from time-periodic Markov chains.

\textbf{Q-learning for non-Markovian environments.} As highlighted earlier, a key challenge in understanding the convergence of \ref{eq:PASQL} is that the agent-state is not Markovian. The same conceptual difficulty arises in the analysis of Q-learning for non-Markovian environments~\cite{majeed2018q,chandak2024reinforcement,Kara2024}. Consequently, our analysis has stylistic similarities with the analysis in~\cite{majeed2018q,chandak2024reinforcement,Kara2024} but the technical assumptions and the modeling details are different. And more importantly, they restrict attention to stationary policies. Given our results, it may be worthwhile to explore if periodic policies can help in non-Markovian environments as well.

\textbf{Continual learning and non-stationary MDPs.} Non-stationarity is an important consideration in continual learning (see~\cite{abel2024definition} and references therein). However, in these settings, the environment is non-stationary. Our setting is different: the environment is stationary, but non-stationary policies help because the agent state is not Markov.

\textbf{Hierarchical learning.}
The options framework~\cite{precup2000temporal,sutton1999between,dietterich2000hierarchical,bacon2017option} is a hierarchical approach that learns temporal abstractions in MDPs and POMDPs. 
Due to temporal abstraction, the policy learned by the options framework is non-stationary. The same is true for other hierarchical learning approaches proposed in~\cite{wiering1997hq,chane2021goal,vezhnevets2017feudal}.
In principle, \ref{eq:PASQL} could be considered as a form of temporal abstraction where time is split into trajectories of length $L$ and then a policy of length~$L$ is learned. However, the theoretical analysis for options is mostly restricted to MDP setting and the convergence guarantees for options in POMDPs are weaker~\cite{steckelmacher2018reinforcement,qiao2018pomdp,le2018deep}. Nonetheless, the algorithmic tools developed for options might be useful for \ref{eq:PASQL} as well.

\textbf{Double Q-learning.} The update equation of \ref{eq:PASQL} are structurally similar to the update equations used in double Q-learning~ \cite{hasselt2010double, van2016deep}. However, the motivation and settings are different: the motivation for Double Q-learning is to reduce overestimation bias in off-policy learning in MDPs, while the motivation for \ref{eq:PASQL} is to induce non-stationarity while learning in POMDPs. Therefore, the analysis of the two algorithms is very different. More importantly, the end goals differ: double Q-learning learns a stationary policy while \ref{eq:PASQL} learns a periodic policy. 

\textbf{Use of non-stationary/periodic policies in MDPs} is investigated in~\cite{scherrer2012use,lesner2015,bertsekas2013abstract} in the context of approximate dynamic programming (ADP). Their main result was to show that using non-stationary or periodic policies can improve the approximation error in
ADP. Although these results use periodic policies, the setting of ADP in MDPs is very different from ours.

\section{Discussion} \label{sec:discussion}

\vskip -0.8\baselineskip

\textbf{Deterministic vs.\ stochastic policies.}
In this work, we restricted attention to periodic deterministic policies. In principle, we could have also considered periodic \emph{stochastic} policies. For stationary policies (i.e., when period is one), stochastic policies can outperform deterministic policies \cite{singh1994learning} as illustrated by \autoref{ex:2} in \appref{app:stochastic}. However, we do not consider stochastic policies in this work because we are interested in understanding Q-learning with agent-state and Q-learning results in a deterministic policy. There are two options to obtain stochastic policies: using regularization~\cite{geist2019theory},  
which changes the objective function; or using policy gradient algorithms~\cite{Sutton1999,Baxter2001}, which are a different class of algorithms than Q-learning. 

However, as illustrated in the motivating \autoref{ex:1} presented in the introduction, non-stationary policies can do better than stationary stochastic policies as well. So, adding non-stationarity via periodicity remains an interesting research direction when learning stochastic policies as well. 

\textbf{PASQL is a special case of ASQL with state augmentation.}
In principle, \ref{eq:PASQL} could be considered as a special case of \ref{eq:ASQL} with an augmented agent state $\bar Z_t = (Z_t, \MOD{t})$. However, the convergence analysis of~\ref{eq:ASQL} in~\cite{Kara2022,EWRL-RQL} does not imply the convergence of~\ref{eq:PASQL} because the results of~\cite{Kara2022,EWRL-RQL} are derived under the assumption that Markov chain $\{(S_t, Y_t, Z_t, A_t)\}_{t \ge 1}$ is irreducible and aperiodic, while we assume that the Markov chain is \emph{periodic}. Due to our weaker assumption, we are able to establish convergence of~\ref{eq:PASQL} to time-varying periodic policies.

\begin{wrapfigure}{R}{5cm}
    \vskip -1.5\baselineskip
    \centering
    \includegraphics[width=5cm]{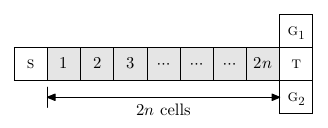}
    \caption{A T-shaped grid world. Agent starts at $\textsc{s}$, where it learns whether the goal state is $\textsc{g}_1$ or $\textsc{g}_2$. It has to go through the corridor $\{1,\dots,2n\}$, without knowing where it is, reach $\textsc{t}$ and go up or down to reach the goal state.}
    \label{fig:ex3}
\end{wrapfigure}
\textbf{Non-stationary policies vs.\ memory augmentation.}
Non-stationarity is a fundamentally different concept than memory augmentation.  As an illustration, consider the T-shaped grid world (first considered in \cite{NIPS2001_a38b1617}) shown in \autoref{fig:ex3}, which has a corridor of length~$2n$. In \appref{app:ex3}, we show that for this example, a stationary policy which uses a sliding window of past~$m$ observations and actions as the agent state needs a memory of at least $m > 2n$ to reach the goal state. In contrast, a periodic policy with period $L=3$ can reach the goal state for every~$n$. This example shows that periodicity is a different concept from memory augmentation and highlights the fact that mechanisms other than memory augmentation can achieve optimal behavior. 

The analysis of this paper is applicable to general memory augmented policies, so we do not need to choose between memory augmentation and periodicity. Our main message is that once the agent's memory is fixed based on practical considerations, adding periodicity could improve performance.

\textbf{Choice of the period $L$.} 
If the agent state $Z_t$ is a good approximation to the belief state, then \ref{eq:ASQL} (or, equivalently, \ref{eq:PASQL} with $L=1$) would converge to an approximately optimal policy. So, using \ref{eq:PASQL} a period $L > 1$ is useful when the agent state is not a good approximation of the belief state. 

As shown by \autoref{ex:1} in the introduction, the performance of the best periodic policy does not increase monotonically with the period~$L$. However, if we consider periods in the set $\{ n! : n \in \naturalnumbers \}$, then the performance increases monotonically. However, \ref{eq:PASQL} does not necessarily converge to the best periodic policy. The quality of the converged policy \eqref{eq:policy} depends on the behavior policy~$\mu$. The difficulty of finding a good behavioral policy increases with~$L$. In addition, increasing the period increases the memory required to store the tuple $(Q^0, \dots, Q^L)$ and the number of samples needed to converge (because each component is updated only once every $L$ samples). Therefore, the choice of the period~$L$ should be treated as a hyperparameter that needs to be tuned.

\textbf{Choice of the behavioral policy.} The behavioral policy impacts the converged limit of \ref{eq:PASQL}, and consequently it impacts the periodic greedy policy that is learned. As we pointed out in the discussion after \autoref{thm:convergence}, this dependence is a fundamental consequence of using an agent state that is not Markov and cannot be avoided. Therefore, it is important to understand how to choose behavioral policies that lead to convergence to good policies. 

\textbf{Generalization to other variants.} Our analysis is restricted to tabular off-policy Q-learning where a fixed behavioral policy is followed. Our proof fundamentally depends on the fact that the behavioral policy induces a cyclic limiting distribution on the periodic Markov chain $\{(S_t,Y_t,Z_t, A_t)\}_{t \ge 1}$. Such a condition is not satisfied in variants such as $\epsilon$-greedy Q-learning and SARSA. Generalizing the technical proof to cover these more practical algorithms (including function approximation) is an important future direction.

\section*{Acknowledgments}

The work of AS and AM was supported in part by a grant from Google’s Institutional Research Program in collaboration with Mila. The numerical experiments were enabled in part by support provided by Calcul
Qu\'ebec and Compute Canada.

\newpage

\printbibliography

\newpage

\appendix
\let\addcontentsline\originaladdcontentsline
\setcounter{tocdepth}{2} 
\renewcommand{\contentsname}{Contents of Appendix}

\tableofcontents

\newpage

\section{Illustrative examples}

\subsection{\autoref{ex:PASQL-example}: Learning curves for \ref{eq:ASQL}}\label{app:ex:PASQL-example}

For each behavioral policy $\bar \mu_k$, $k \in \ALPHABET K$, we run \ref{eq:PASQL} for 25 random seeds. The median + interquantile range of the iterates $\{Q_t(z,a)\}_{t \ge 1}$ as well as the theoretical limits $Q_{\bar \mu_k}(z,a)$ (computed as per \autoref{thm:convergence} for $L=1$) are shown in \autoref{fig:plots-period1}. These curves show that the result of \autoref{thm:convergence} is valid for the stationary case ($L=1$) as well.

\begin{figure}[!h]
    \centering
    \includegraphics{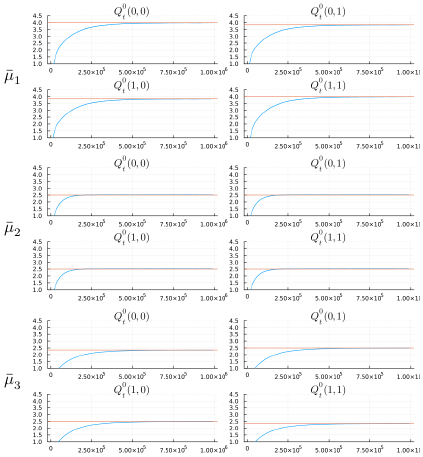}
    \caption{\ref{eq:ASQL} iterates for different behavioral policies (in blue) and the limit predicted by \autoref{thm:convergence} (in red).}
    \label{fig:plots-period1}
\end{figure}

\subsection{\autoref{ex:1}: non-stationary policies can outperform stationary policies}\label{app:ex1}

\begin{example}\label{ex:1}
   Consider a POMDP with $\ALPHABET S = \integers_{> 0}$, $\ALPHABET A = \{0,1\}$, and $\ALPHABET Y = \{0,1\}$. The system starts in an initial state $s_1 = 1$ and has deterministic dynamics.  To describe the dynamics and the reward function, we define $\ALPHABET D_0 \coloneqq \{ n(n+1)/2 + 1 : n \in \integers_{\ge 0} \}$, $\ALPHABET D_1 = \naturalnumbers \setminus \ALPHABET D_0$, and $\ALPHABET D = \ALPHABET D_0 \times \{0\} \cup \ALPHABET D_1 \times \{1\} \subset \ALPHABET S \times \ALPHABET A$. Then, the dynamics, observations, and rewards are given by
   \begin{equation}
       s_{t+1} = \begin{cases}
           s_t + 1, & (s_t, a_t) \in \ALPHABET D, \\
           1,       & \hbox{otherwise},
       \end{cases}
       \quad
       y_t = \begin{cases}
           0, & \hbox{$s_t$ is odd}, \\
           1, & \hbox{$s_t$ is even}, 
       \end{cases}
       \quad
       r(s,a) = \begin{cases}
           +1, & (s,a) \in \ALPHABET D, \\
           -1  & \hbox{otherwise}.
       \end{cases}
   \end{equation}
   Thus, the state is incremented if the agent takes action~$0$ when the state is in $\ALPHABET D_0$ and takes action~$1$ when the state is in $\ALPHABET D_1$. Taking these actions yield a reward of~$+1$. Not taking such an action results in a reward of $-1$ and resets the state to~$1$. The agent does not observe the state, but only observes whether the state is odd or even. A graphical representation of the model is shown in \autoref{fig:ex1}.
\end{example}

\begin{figure}[!h]
    \renewcommand\thefigure{\ref*{fig:ex1}}
    \centering
    \includegraphics{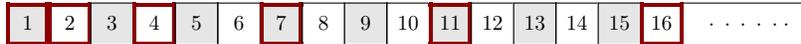}
    \caption{Graphical representation of \autoref{ex:1}. The cells indicate the state of the environment. Cells with the same background color have the same observation. The cells with a thick red boundary correspond to elements of the set $\ALPHABET D_0 \coloneqq \{ n(n+1)/2 + 1 : n \in \naturalnumbers \}$, where the action~$0$ gives a reward of $+1$ and moves the state to the right, while the action~$1$ gives a reward of $-1$ and resets the state to~$1$. The cells with a thin black boundary correspond to elements of the set $\ALPHABET D_1 = \naturalnumbers \setminus \ALPHABET D_0$, where the action~$1$ gives the reward of $+1$ and moves the state to the right while the action~$0$ gives a reward of $-1$ and resets the state to~$1$. Discount factor $\gamma = 0.9$.}
    \label{fig:ex1-app}
\end{figure}
\addtocounter{figure}{-1}

\textbf{For policy class $\PiHND$ (the class of all belief-based deterministic policies),} since the system starts in a known initial state and the dynamics are deterministic, the agent can compute the current state (thus, the belief is a delta function on the current state). Thus, the agent can always choose the correct action depending on whether the state is in $\ALPHABET D_0$ and $\ALPHABET D_1$. Hence $\JHND = 1/(1-\gamma)$, which is the highest possible reward.

\textbf{For policy class $\PiZSD$ (the class of all agent-state based deterministic policies),} there are four possible deterministic policies. For odd observations, the agent may take action $0$ and $1$. Similarly, for even observations, the agent may take action $0$ or $1$. Note that the system starts in state $1$, which is in $\ALPHABET D_0$. Therefore, if the agent chooses action~$1$ when the observation is odd, it receives a reward of $-1$ and stays at state~$1$. Therefore, the discounted total reward is $-1/(1-\gamma)$, which is the least possible value. Therefore, any policy that chooses $1$ on odd observations cannot be optimal. Therefore, the optimal (deterministic) action on odd observations is to pick action~$0$. Thus, there are two policies that we need to evaluate. 
\begin{itemize}
    \item If the agent chooses action $0$ at both odd and even observations, the state cycles between $1 \to 2 \to 3 \to 1 \to 2 \to 3 \cdots$ with the reward sequence $(+1,+1,-1,+1,+1,-1,\dots)$. Thus, the cumulative total reward of this policy is $(1+\gamma-\gamma^2)/(1-\gamma^3)$.
    \item If the agent chooses action $0$ at odd observations and action $1$ at even observations, the state cycles between $1 \to 2 \to 1 \to 2 \cdots$ with the reward sequence $(+1,-1,+1,-1,\dots)$. Thus, the cumulative total reward of this policy is $1/(1+\gamma)$. 
\end{itemize}
It is easy to verify that for $\gamma \in (0,1)$, $1/(1+\gamma) < (1+\gamma-\gamma^2)/(1-\gamma^3)$. Thus, 
\begin{equation}
    \JZSD = \frac{1 + \gamma - \gamma^2}{1 - \gamma^3}.
\end{equation}

We also consider \textbf{policy class $\PiZSS$: the class of all stationary stochastic agent-state based policies.} For policy class $\PiZSS$, the policy is characterized by two numbers $(p_0, p_1) \in [0,1]^2$, where $p_y$ denotes the probability of choosing action~$1$ when the observation is $y$, $y \in \{0,1\}$. We compute the approximately optimal policy by doing a brute force search over $(p_0,p_1)$ by discretizing them two decimal places and for each choice, running a Monte Carlo simulation of length $1,000$ and averaging it over $100$ random seeds. We find that there is negligible difference between the performance of stochastic and deterministic policies.

Finally, we consider \textbf{policy class $\Pi_{L}$}, which is the class of periodic deterministic agent-state based policies. A policy $\pi \in \Pi_L$ is characterized by two vectors $p_0, p_1 \in \{0,1\}^L$, where $p_{y,\ell}$ denotes the action chosen when $t \bmod L = \ell$ and the observation is $y$. We do an exhaustive search over all deterministic policies of length $L$, $L \in \{1,\dots,10\}$ to compute the numbers shown in the main text.

\subsection{\autoref{ex:2}: stochastic policies can outperform deterministic policies}\label{app:stochastic}

When the agent state is not an information state, the optimal stochastic stationary policy will perform better than (or equal to) the optimal deterministic stationary policy as observed in~\cite{singh1994learning}. Here is an example to illustrate this for a simple toy POMDP.

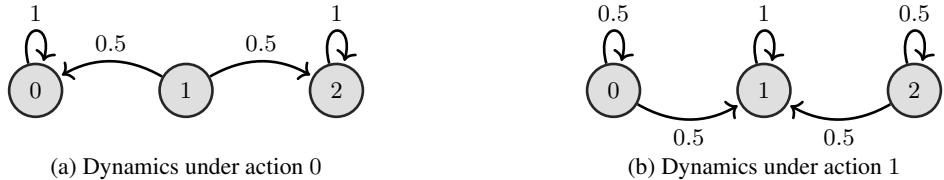
\begin{figure}[!ht]
    \centering
    \begin{subfigure}{0.45\linewidth}
    \centering
    {\begin{tikzpicture}[line width=1pt, node distance=2cm and 2cm,font=\small]
        \useasboundingbox (-0.5,-0.65) rectangle (4.5,1.25);
        \node[state] (M) {$0$};
        \node[state, right of=M] (Z) {$1$};
        \node[state, right of=Z] (P) {$2$};

        \draw[every loop]
            (Z) edge[bend right, auto=right] node {$0.5$} (M)
            (Z) edge[bend left, auto=left] node {$0.5$} (P)
            (M) edge[loop above] node {$1$} (M)
            (P) edge[loop above] node {$1$} (P)
            ;

        %\draw[help lines] (-0.5,-0.5) grid (4.5,1.25);
    \end{tikzpicture}}
    \caption{Dynamics under action~$0$}
    \end{subfigure}
    \hfill
    \begin{subfigure}{0.45\linewidth}
    \centering
    {\begin{tikzpicture}[line width=1pt, node distance=2cm and 2cm,font=\small]
        \useasboundingbox (-0.5,-0.65) rectangle (4.5,1.25);
        \node[state] (M) {$0$};
        \node[state, right of=M] (Z) {$1$};
        \node[state, right of=Z] (P) {$2$};

        \draw[every loop]
            (M) edge[bend right, auto=right] node {$0.5$} (Z)
            (P) edge[bend left, auto=left] node {$0.5$} (Z)
            (M) edge[loop above] node {$0.5$} (M)
            (P) edge[loop above] node {$0.5$} (P)
            (Z) edge[loop above] node {$1$} (Z)
            ;
            
        %\draw[help lines] (-0.5,-0.5) grid (4.5,1.25);
    \end{tikzpicture}}
    \caption{Dynamics under action~$1$}
    \end{subfigure}
    \caption{The dynamics for \autoref{ex:2}.}
    \label{fig:ex2}
\end{figure}

\begin{example}\label{ex:2}
Consider a POMDP with $\ALPHABET S = \{0, 1, 2\}$, $\ALPHABET A = \{0, 1\}$ and $\ALPHABET Y = \{0\}$. The system starts at an initial state $s_1 = 0$ and the dynamics under the two actions are shown in \autoref{fig:ex2}. The agent does not observe the state, i.e., $Y_t \equiv 0$. The rewards under action~$0$ are
$r(\cdot,0) = [-1, 0, 2]$ and the rewards under action~$1$ are $r(s,1) = -0.5$, for all $s \in \ALPHABET S$. 
\end{example}

\newpage

\begin{wrapfigure}{r}{0.5\textwidth}
    \centering
    \vskip-1.25\baselineskip
    \includegraphics[width=0.5\textwidth]{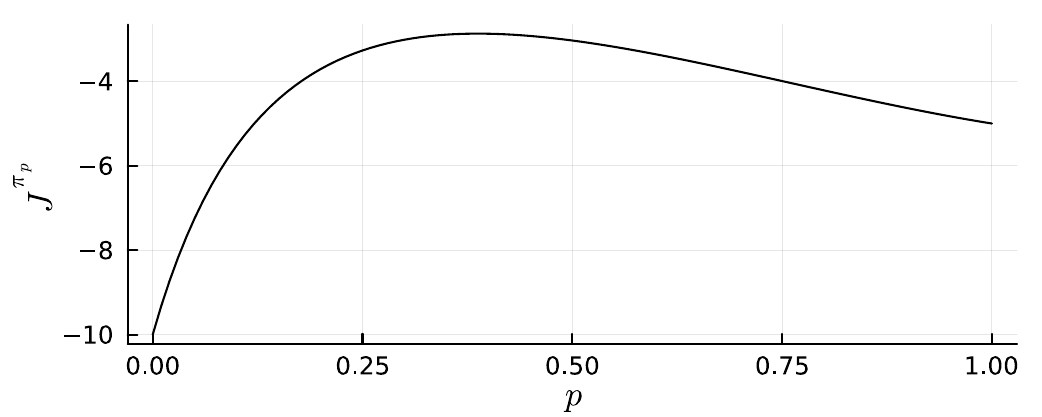}
    \vskip-0.25\baselineskip
    \caption{Performance of stationary stochastic policies ${\pi}_p$ for $p \in [0,1]$ for \autoref{ex:2}.}
    \label{fig:ex2-perf}
    \vskip-1\baselineskip
\end{wrapfigure}
We consider agent state $Z_t = Y_t$. Let $\PiZSS$ denote the  of all stationary stochastic policies and $\PiZSD$ denote the class of of all stationary deterministic policic A policy $\pi \in \PiZSS$ is parameterized by a single parameter $p \in [0,1]$, which indicates the probability of choosing action~$1$. We denote such a policy by $\pi_p$. Note that $p \in \{0,1\}$, $\pi_p \in \PiZSD$. Let $(P_a, r_a)$ denote the probability transition matrix and reward function when $a \in \ALPHABET A$ is chosen and let $(P_p, r_p) = (1-p)(P_0, r_0) + p(P_1, r_1)$. Then, the performance of policy $\pi_p$ is given by $J^{\pi_p} = [(1 - \gamma P_p)^{-1} r_p]_{0}$. The performance for all $p \in [0,1]$ for $\gamma = 0.9$ is shown in \autoref{fig:ex2-perf}, which shows that the best performance is achieved by the stochastic policy $\pi_p$ with $p \approx 0.39$.

Thus, stochastic policies can outperform deterministic policies.

\subsection{\autoref{ex:3}: conceptual difference between state-augmentation and periodic policies}
\label{app:ex3}

\begin{figure}[!ht]
    \renewcommand\thefigure{\ref*{fig:ex3}}
    \centering
    \includegraphics{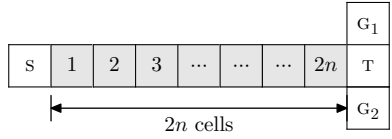}
    \caption{A T-shaped grid world for \autoref{ex:3}. In state~$\textsc{s}$, the agent learns about the goal state. In states $\{1, 2, \dots, 2n\}$, the agent simply knows that it is in the gray corridor, but does not know which cell it is in. In state $\textsc{t}$, it knows that it has reached the end of corridor and must decide whether to go up or down. The agent gets a reward of $+1$ for reaching the correct goal state and a reward of $-1$ for reaching the wrong goal state.}
    \label{fig:ex3-app}
\end{figure}
\addtocounter{figure}{-1}

\begin{example}\label{ex:3}
Consider a T-shaped grid world showed in \autoref{fig:ex3-app} with state space $\ALPHABET P \times \ALPHABET G$, where  $\ALPHABET P = \{ \textsc{s}, 1, 2, \dots, 2n, \textsc{t} \}$ is the position of the agent and $\ALPHABET G = \{\textsc{g}_1, \textsc{g}_2\}$ is the location of the goal. The observation space is $\ALPHABET Y = \{ 0, 1, 2, 3\}$. The observation is a deterministic function of the state and is given as follows:
\begin{itemize}
    \item At state $(\textsc{s}, \textsc{g}_i)$, $i \in \{1,2\}$, the observation is $i$ and reveals the location of the goal state to the agent. 
    \item At states $\{1, \dots, 2n\} \times \ALPHABET G$, the observation is $0$, so the agent cannot distinguish between these states.
    \item At states $\{ \textsc{t} \} \times \ALPHABET G$, the observation is $3$, so the agent knows when it reaches the $\textsc{t}$ state.
\end{itemize}

The action space depends on the current state: actions $\{\textsc{left}, \textsc{right}, \textsc{stay} \}$ are available when the agent is at $\{\textsc{s}, 1, \dots, 2n\}$ and actions $\{\textsc{up}, \textsc{down}\}$ are available at position $\textsc{t}$. 

The agent receives a reward of $+1$ if it reaching the goal state and $-1$ if it reaches the wrong goal state state (i.e., reaches $\textsc{g}_2$ when the goal state is $\textsc{g}_1$). The discount factor $\gamma = 1$.
\end{example}

We consider two classes of policies: 
\vspace*{-0.75\baselineskip}
\begin{enumerate}[label=(\roman*)]
    \item $\PiZSD(m)$: Stationary policies with agent state equal to a sliding window of the last $m$ observations and actions. 
    \item $\Pi_L$: Periodic policies with agent state equal to the last observation and periodic~$L$. 
\end{enumerate}

It is easy to see that as long as the window length $m \le 2n$, any policy in $\PiZSD(m)$ yields an average return of~$0$; for window lengths $m > 2n$, the agent can remember the first observation, and therefore it is possible to construct a policy that yields a return of $+1$.

We now consider a deterministic periodic policy with period $L=3$ given as follows:%
\footnote{For the ease of notation, we start the system at time $t=0$.}
$\pi =  (\pi^0, \pi^1, \pi^2)$ where $\pi^{\ell} \colon \ALPHABET Y \to \ALPHABET A$. We denote each $\pi^{\ell}$ as a column vector, where the $y$-th component indicates the action $\pi^\ell(y)$, where -- means that the choice of the action for that observation is irrelevant for performance.  The policy is given by
\begin{equation}
   \pi^0 = \MATRIX{
   \textsc{right} \\
   \textsc{right} \\
   \textsc{stay} \\
   \textsc{stay}
   },
   \quad
   \pi^1 = \MATRIX{
   \textsc{right} \\
   \text{--} \\
   \textsc{right} \\
   \textsc{up}
   },
   \quad
   \pi^2 = \MATRIX{
   \textsc{stay} \\
   \text{--} \\
   \text{--} \\
   \textsc{down}
   }.
\end{equation}
It is easy to verify if the system starts in state $(0,\textsc{g}_1)$, then by following policy $(\pi^0, \pi^1, \pi^2)$, the agent reaches state $\textsc{g}_1$ at time $3n + 3$. Moreover, when the system starts in state $(0,\textsc{g}_2)$, then by following the policy $(\pi^0, \pi^1, \pi^2)$, the agent reaches $\textsc{g}_2$ at time $3n + 4$. Thus, in both cases, the policy $(\pi^0, \pi^1, \pi^2)$ yields the maximum reward of $+1$.

\section{Periodic Markov chains}\label{app:periodic-MC}

In most of the standard reference material on Markov chains, it is assumed that the Markov chain is aperiodic and irreducible. In our analysis, we need to work with periodic Markov chains. In this appendix, we review some of the basic properties of Markov chains and then derive some fundamental results for periodic Markov chains.

Let $\ALPHABET S$ be a finite set. A stochastic process $\{S_t\}_{t \ge 0}$, $S_t \in \ALPHABET S$, is called a \textbf{Markov chain} if it satisfies the \emph{Markov property}: for any $t \in \integers_{\ge 0}$ and $s_{1:t+1} \in \ALPHABET S^{t+1}$, we have
\begin{equation}\label{eq:Markov}
    \PR(S_{t+1} = s_{t+1} \mid S_{1:t} = s_{1:t}) = \PR(S_{t+1} = s_{t+1} \mid S_t = s_t).
\end{equation}

If is often convenient to assume that $\ALPHABET S = \{1,\dots, n\}$. We can define an $n \times n$ transition probability matrix $P_t$ given by
\(
    [P_t]_{ij} = \PR(S_{t+1} = j \mid S_t = i)
\). Then, all the probabilistic properties of the Markov chain is described by the transition matrices $(P_0, P_1, \dots)$. 

In particular, suppose the Markov chain starts at the initial PMF (probability mass function) $\xi_0$ and let $\xi_t$ denote the PMF at time~$t$. We will view $\xi_t$ as a $n$-dimensional row vector. Then, Eq.~\eqref{eq:Markov} implies $\xi_{t+1} = \xi_t P_t$ and, therefore, 
\begin{equation}
    \xi_{t+1} = \xi_0 P_0 P_1 \cdots P_t.
\end{equation}

\subsection{Time-homogeneous Markov chains and their properties}
A Markov chain is said to be \textbf{time-homogeneous} if the transition matrix $P_t$ is the same for all time~$t$.  In this section, we state some standard results for time-homogeneous Markov chains~\cite{norris1998}.

\subsubsection{Classification of states}
The states of a time-homogeneous Markov chain can be classified as follows.
\begin{enumerate}
    \item 
    We say that a state $j$ is \textbf{accessible from} $i$ (abbreviated as $i \rightsquigarrow j$) if there is exists an $m \in \integers_{\ge 0}$ (which may depend on $i$ and $j$) such that $[P^m]_{ij} > 0$. The fact that $[P^m]_{ij} > 0$ implies that there exists an ordered sequence of states $(i_0, \dots, i_m)$ such that $i_0 = i$ and $i_m = j$ such that $P_{i_k i_{k+1}} > 0$; thus, there is a path of positive probability from state~$i$ to state~$j$.
    
    Accessibility is an transitive relationship, i.e., if $i \rightsquigarrow j$ and $j \rightsquigarrow k$ implies that $i \rightsquigarrow k$.

    \item 
    Two distinct states $i$ and $j$ are said to \textbf{communicate} (abbreviated to $i \leftrightsquigarrow j$) if $i$ is accessible from $j$ (i.e., $j \rightsquigarrow i$) and $j$ is accessible from $i$ ($i \rightsquigarrow j$). Alternatively, we say that $i$ and $j$ communicate if there exist $m, m' \in \integers_{\ge 0}$ such that $[P^{m}]_{ij} > 0$ and $[P^{m'}]_{ji} > 0$.
    
    Communication is an equivalence relationship, i.e., it is reflexive ($i \leftrightsquigarrow i$), symmetric ($i \leftrightsquigarrow j$ if and only if $j \leftrightsquigarrow i$), and transitive ($i \leftrightsquigarrow j$ and $j \leftrightsquigarrow k$ implies $i \leftrightsquigarrow k$).

    \item 
    The states in a finite-state Markov chain can be partitioned into two sets: \textbf{recurrent states} and \textbf{transient states}. A state is recurrent if it is accessible from all states that are from it (i.e., $i$ is recurrent if $i \rightsquigarrow j$ implies that $j \rightsquigarrow i$). States that are not recurrent are \textbf{transient}.

    It can be shown that a state $i$ is recurrent if and only if
    \begin{equation}
        \sum_{t=1}^{\infty} [ P^t ]_{ii} = \infty. 
    \end{equation}

    \item 
    States $i$ and $j$ are said to belong to the same \textbf{communicating class} if $i$ and $j$ communicate. Communicating classes form a partition the state space. Within a communicating class, all states are of the same type, i.e., either all states are recurrent (in which case the class is called a recurrent class) or all states are transient (in which case the class is called a transient class).

    A Markov chain with a single communicating class (thus, all states communicate with each other and are, therefore, recurrent) is called \textbf{irreducible}.

    \item 
    The \textbf{period} of a state $i$, denoted by $d(i)$, is defined as
    \begin{equation}
        d(i) = \gcd\{ t \in \integers_{\ge 1} : [P^t]_{ii} > 0 \}.
    \end{equation}
    If the period is $1$, the state is \textbf{aperiodic}, and if the period is $2$ or more, the state is \textbf{periodic}. It can be shown that all states in the same class have the same period. 

    A Markov chain is aperiodic, if all states are aperiodic. A simple sufficient (but not necessary) condition for an irreducible Markov chain to be aperiodic is that there exists a state $i$ such that $P_{ii} > 0$. In general, for a finite and aperiodic Markov chain, there exists a positive integer $T$ such that 
    \begin{equation}
        [P^t]_{ii} > 0, 
        \quad \forall t \ge T, i \in \ALPHABET S.
    \end{equation}
\end{enumerate}

\subsubsection{Limit behavior of Markov chains}

We now state some special distributions for a time-homogeneous Markov chain.
\begin{enumerate}
    \item 
    A PMF $\zeta$ on $\ALPHABET S$ is called a \textbf{stationary distribution} if $\zeta = \zeta P$. Thus, if a (time-homogeneous) Markov chain starts in a stationary distribution, it stays in a stationary distribution. 

    A finite irreducible Markov chain has a unique stationary distribution. Moreover, when the Markov chain is also aperiodic, the stationary distribution is given by $\zeta(j) = 1/m_j$, where $m_j$ is the expected return time to state~$j$.
    
    \item 
    A PMF $\zeta$ on $\ALPHABET S$ is called a \textbf{limiting distribution} if 
    \begin{equation}
        \lim_{t \to \infty} [ P^t]_{ij} = \zeta(j),
        \quad \forall i,j \in \ALPHABET S.
    \end{equation}

    A finite irreducible Markov chain has a limiting distribution if and only if it is aperiodic. Therefore, for an aperiodic Markov chain, the limiting distribution is the same as the stationary distribution.
\end{enumerate}

\begin{theorem}[Strong law of large numbers for Markov chains, Theorem 5.6.1 of~\cite{Durrett2019}]\label{thm:SLLN}
    Suppose $\{S_t\}_{t \ge 1}$ is an irreducible Markov chain that starts in state $i \in \ALPHABET S$. Then,
    \begin{equation}\label{eq:SLLN}
        \lim_{T \to \infty} \frac 1T \sum_{t=0}^{T-1} \IND \{ S_t = j \} = \frac 1{m_j}.
    \end{equation}
    Therefore, for any function $h \colon \ALPHABET S \to \reals$, 
    \begin{equation}\label{eq:ergodic}
        \lim_{T \to \infty} \frac 1T \sum_{t=0}^{T-1} h(S_t) = \sum_{j \in \ALPHABET S} \frac {h(j)}{m_j}.
    \end{equation}
    If, in addition, the Markov chain $\{S_t\}_{t \ge 1}$ is aperiodic, and has a limiting distribution $\zeta$, then we have that
    \begin{equation}\label{eq:ergodic-aperiodic}
        \lim_{T \to \infty} \frac 1T \sum_{t=0}^{T-1} h(S_t) = \sum_{j \in \ALPHABET S} \zeta(j) h(j).
    \end{equation}
\end{theorem}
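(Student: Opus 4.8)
The plan is to reduce the whole statement to the single assertion~\eqref{eq:SLLN} about the empirical frequency of visits to a fixed state, and to prove that assertion by the standard renewal-theoretic argument based on i.i.d.\ excursions between successive returns.

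First I would dispose of the two reductions. Since $\ALPHABET S$ is finite, \eqref{eq:ergodic} follows from \eqref{eq:SLLN} by writing $h(S_t) = \sum_{j \in \ALPHABET S} h(j)\IND\{S_t = j\}$, averaging over $t$, and pulling the limit through the finite sum. And \eqref{eq:ergodic-aperiodic} follows from \eqref{eq:ergodic} using the fact recalled earlier in this appendix that an aperiodic irreducible finite Markov chain satisfies $\zeta(j) = 1/m_j$. I would also note at the outset that a finite irreducible chain is positive recurrent, so $m_j < \infty$ for all $j$ and, started from any $i$, the chain visits $j$ infinitely often almost surely; these are the only consequences of irreducibility that I will use.

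Next, fix $j$ and set $\tau_0 := \inf\{t \ge 0 : S_t = j\}$ and, for $k \ge 1$, let $\tau_k$ be the time of the $(k+1)$-th visit to $j$; by recurrence all $\tau_k$ are a.s.\ finite. Applying the strong Markov property at the stopping times $\tau_{k-1}$ shows that the excursion lengths $W_k := \tau_k - \tau_{k-1}$, $k \ge 1$, are i.i.d., each distributed as the return time to $j$ started from $j$, so $\EXP[W_k] = m_j$. The ordinary strong law of large numbers then gives $\tau_k/k = (\tau_0 + \sum_{i=1}^{k} W_i)/k \to m_j$ a.s., since $\tau_0/k \to 0$. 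Finally, letting $N_T := \sum_{t=0}^{T-1}\IND\{S_t = j\}$, recurrence forces $N_T \to \infty$ and, once $N_T \ge 1$, the definition of $N_T$ gives the sandwich $\tau_{N_T - 1} < T \le \tau_{N_T}$; dividing by $N_T$ and using $N_T \to \infty$ together with $\tau_k/k \to m_j$, both ends converge to $m_j$, whence $T/N_T \to m_j$, i.e.\ $N_T/T \to 1/m_j$, which is exactly~\eqref{eq:SLLN}.

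The hard part will be the excursion decomposition: one has to state the strong Markov property precisely enough to conclude that the $W_k$ are genuinely i.i.d.\ with the claimed law (in particular that the possibly atypical initial segment $[0,\tau_0]$ may be discarded), and to be careful about the off-by-one in the sandwich $\tau_{N_T-1} < T \le \tau_{N_T}$; everything else is bookkeeping. Since this is Theorem~5.6.1 of~\cite{Durrett2019}, one could alternatively just cite it, but I sketch the argument here for completeness.
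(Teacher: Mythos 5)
Your proposal is correct: the paper offers no proof of its own for this statement (it simply cites Theorem~5.6.1 of Durrett), and your renewal-theoretic argument --- i.i.d.\ excursions between successive returns via the strong Markov property, the ordinary SLLN applied to $\tau_k/k$, and the sandwich $\tau_{N_T-1} < T \le \tau_{N_T}$ --- is exactly the standard proof given in that reference, with the two reductions (finite sum over states for~\eqref{eq:ergodic}, and $\zeta(j)=1/m_j$ for~\eqref{eq:ergodic-aperiodic}) handled correctly. Note that your argument correctly does not require aperiodicity for~\eqref{eq:SLLN}, which matters for this paper since the result is applied to periodic chains.
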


\subsection{Time-varying with periodic transition matrix}

In this section, we consider time-varying Markov chains where the transition matrices $(P_0, P_1, \dots)$ are periodic with period $L$. Let $\MOD{t} = (t \bmod L)$ and $\ALPHABET L = \{0, \dots, L-1\}$. Then, the transition matrix $P_t$ is the same as $P_{\MOD{t}}$. Thus, the system dynamics are completely described by the transition matrices $\{P_{\ell}\}_{\ell \in \ALPHABET L}$. With a slight abuse of notation, we will call such a Markov chain as \textbf{$L$-periodic Markov chain}. We will show later that the notion of \emph{time-periodicity} that we are considering is equivalent to the notion of \emph{state-periodicity} for time-homogeneous Markov chains defined earlier. 

\subsection{Constructing an equivalent time-homogeneous Markov chain}
Since the Markov chain is not time-homogeneous, the classification and results of the previous section are not directly applicable. There are two ways to construct a time-homogeneous Markov chain: using state augmentation or viewing the process after every $L$ steps. 

\subsubsection{Method 1: State augmentation}
The original time-varying Markov chain $\{S_t\}_{t\ge 0}$ is equivalent to the time-homogeneous Markov chain $\{ (S_t, \MOD{t}) \}_{t \ge 0}$ defined on $\ALPHABET S \times \ALPHABET L$ with transition matrix $\bar P$ given by
\begin{equation}\label{eq:time-homogeneous}
    \bar P( (s', \ell') \mid (s, \ell) ) = P_{\ell}(s' \mid s) \IND\{ \ell' = \MOD{\ell + 1} \}.
\end{equation}

\begin{example}\label{ex:MC}
Consider a $2$-periodic Markov chain with state space $\ALPHABET S = \{1, 2\}$ and transition matrices
\begin{equation}
    \def\1{\tfrac 14}
    \def\2{\tfrac 34}
    P_0 = \MATRIX{ \1 & \2 \\[5pt] \frac 12 & \frac 12 }
    \quad\text{and}\quad
    P_1 = \MATRIX{ \2 & \1 \\[5pt] \1 & \2 }.
\end{equation}
\end{example}
The time-periodic Markov chain of \autoref{ex:MC} may be viewed as a time-homogeneous Markov chain with state space $\{1, 2\} \times \{0, 1\}$ and transition matrix
\begin{equation}
    \def\1{\tfrac 14}
    \def\2{\tfrac 34}
    \bar P =
    \bbordermatrix{
           & (1,0) & (2,0) & (1,1) & (2,1) \cr 
    (1,0)  &   0   &   0   &  \1   &  \2   \cr 
    \noalign{\vskip 5pt}
    (2,0)  &   0   &   0   &  \frac 12   &  \frac 12   \cr 
    \noalign{\vskip 5pt}
    (1,1)  &   \2  &   \1  &  0    &   0   \cr 
    \noalign{\vskip 5pt}
    (2,1)  &   \1  &   \2  &  0    &   0   \cr
    }
    =
    \MATRIX{0 & I \\ I & 0 } \MATRIX{ P_0 & 0 \\ 0 & P_1 }
\end{equation}
where $0$ denotes the all zero matrix and $I$ denotes the identity matrix (both of size $2 \times 2$). Note that the time-homogeneous Markov chain is periodic.
    
Define the following:
\begin{itemize}
    \item $L$ block diagonal matrices $\Lambda_0, \dots, \Lambda_{L-1} \in \reals^{nL \times nL}$ as follows:
    \begin{equation}
        \Lambda_0 = \diag(P_0, P_1,\dots, P_{L-1}),
        \quad
        \Lambda_1 = \diag(P_{L-1}, P_0, \dots, P_{L-2}),
        \quad
        \text{ etc.}
    \end{equation}
    
    \item A permutation matrix $\Pi \in \{0, 1\}^{nL \times nL}$ as follows
    \begin{equation}
        \Pi = 
        \MATRIX{ 0 & I & \cdots & 0 \\
                 \vdots & \ddots & \ddots & \vdots \\
                 0 & 0 & \cdots & I \\
                 I & 0 & \cdots & 0
               }
    \end{equation}
    where each block is $n \times n$.
\end{itemize}
The permutation matrix $\Pi$  satisfies the following properties (which can be verified by direct algebra):
\begin{enumerate}
    \item[(P1)] $\Pi \, \Pi^\TRANS = I$ and therefore $\Pi^{-1} = \Pi^\TRANS$.
    \item[(P2)] $\Pi^L = I$.
    \item[(P3)] $\Lambda_\ell \Pi = \Pi \Lambda_{\MOD{\ell+1}}$, $\ell \in \ALPHABET L$.
\end{enumerate} 

In general, the transition matrix of the Markov chain $\{(S_t, \MOD{t})\}_{t \ge 0}$ is 
\begin{equation}
    \bar P =  
    \MATRIX{ 0 & P_0 & \cdots & 0 \\
             \vdots & \ddots & \ddots & \vdots \\
             0 & 0 & \cdots & P_{L-2} \\
             P_{L-1} & 0 & \cdots & 0
           }_{nL \times nL}
    = \Lambda_0 \Pi.
\end{equation}

\subsubsection{Method 2: Viewing the process every $L$ steps}

The original Markov chain viewed every $L$-steps, i.e., the process $\{ S_{kL + \ell} \}_{k \ge 0}$, $\ell \in \ALPHABET L$, is a time-homogeneous Markov chain with transition probability matrix $\mathcal P_\ell$ given by
\begin{equation}\label{eq:Pell}
    \mathcal P_{\ell} = P_{\MOD{\ell}} P_{\MOD{\ell + 1}} \cdots P_{\MOD{\ell + L - 1}}
\end{equation}
that is,
\begin{equation}
    \mathcal P_0 = P_0 P_1 \cdots P_{L-2} P_{L-1},
    \quad
    \mathcal P_1 = P_1 P_2 \cdots P_{L-1} P_{0},
    \quad
    \text{etc.}
\end{equation}

\subsubsection{Relationship between the two constructions}
The two constructions are related as follows.
\begin{proposition}\label{prop:periodic}
    We have that
    $
        \bar P^L = \diag(\mathcal P_0, \dots, \mathcal P_L).
    $
\end{proposition}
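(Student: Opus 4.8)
The plan is to compute $\bar P^L$ directly using the factorization $\bar P = \Lambda_0 \Pi$ established above, together with properties (P1)--(P3) of the permutation matrix $\Pi$. First I would write
\[
    \bar P^L = (\Lambda_0 \Pi)^L = \Lambda_0 \Pi \Lambda_0 \Pi \cdots \Lambda_0 \Pi
\]
($L$ copies of $\Lambda_0 \Pi$), and then repeatedly use the commutation relation (P3), $\Lambda_\ell \Pi = \Pi \Lambda_{\MOD{\ell+1}}$, to move every $\Pi$ to the left past the block-diagonal factors. Pushing the rightmost-but-one $\Pi$ leftward through one $\Lambda_0$ turns it into $\Pi \Lambda_1$; pushing it further through another $\Lambda_0$ is not immediate, so the bookkeeping must be done carefully: after collecting the $k$-th $\Pi$ to sit just after the leading block of $\Lambda$'s, the block-diagonal factor it has passed has had its index shifted. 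The cleanest way is to prove by induction on $k$ that
\[
    (\Lambda_0 \Pi)^k = \Lambda_0 \Lambda_1 \cdots \Lambda_{k-1} \, \Pi^k,
\]
where the inductive step uses $\Pi \Lambda_0 = \Lambda_? \Pi$... — more precisely, from (P3) we get $\Pi^{-1}\Lambda_\ell \Pi = \Lambda_{\MOD{\ell+1}}$, hence $\Pi^j \Lambda_\ell = \Lambda_{\MOD{\ell - j}} \Pi^j$ — so that $(\Lambda_0\Pi)^{k+1} = \Lambda_0\cdots\Lambda_{k-1}\Pi^k \Lambda_0 \Pi = \Lambda_0 \cdots \Lambda_{k-1} \Lambda_{\MOD{-k}} \Pi^{k+1}$, and one checks $\Lambda_{\MOD{-k}} = \Lambda_{\MOD{k}}$... wait, that's not right either; the indices of the $\Lambda$'s in $\Lambda_0 \Lambda_1 \cdots \Lambda_{k-1}$ suggests the shift should land on $\Lambda_k$. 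I would reconcile this by instead tracking $\Pi^j \Lambda_\ell \Pi^{-j} = \Lambda_{\MOD{\ell+j}}$ and being scrupulous about whether $\Pi$ or $\Pi^{-1}=\Pi^\TRANS$ appears; the sign in the index shift is exactly the place where a careless computation goes wrong.

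Setting $k = L$ in the identity $(\Lambda_0\Pi)^L = \Lambda_0 \Lambda_1 \cdots \Lambda_{L-1}\Pi^L$ and invoking (P2), $\Pi^L = I$, gives $\bar P^L = \Lambda_0 \Lambda_1 \cdots \Lambda_{L-1}$. The final step is to recognize this product of block-diagonal matrices as block-diagonal with the $\ell$-th diagonal block equal to the product of the $\ell$-th diagonal blocks of $\Lambda_0, \dots, \Lambda_{L-1}$. By the definitions $\Lambda_0 = \diag(P_0, \dots, P_{L-1})$, $\Lambda_1 = \diag(P_{L-1}, P_0, \dots, P_{L-2})$, etc., the $\ell$-th diagonal block of $\Lambda_j$ is $P_{\MOD{\ell - j}}$, so the $\ell$-th diagonal block of the product is $P_{\MOD{\ell}} P_{\MOD{\ell-1}} \cdots$ — here again I must double-check the ordering convention against the definition of $\Lambda_j$, since we want to arrive at $\mathcal P_\ell = P_{\MOD{\ell}} P_{\MOD{\ell+1}} \cdots P_{\MOD{\ell+L-1}}$ from equation~\eqref{eq:Pell}. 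Matching the $\ell$-th block of $\Lambda_0 \cdots \Lambda_{L-1}$ with $\mathcal P_\ell$ completes the proof; for $\ell = 0$ this is visibly $P_0 P_1 \cdots P_{L-1} = \mathcal P_0$, which is a useful sanity check.

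The main obstacle, as flagged above, is purely the index-shift bookkeeping: getting the direction of the cyclic shift right when commuting $\Pi$ (and not $\Pi^\TRANS$) past the $\Lambda$'s, and making sure the convention for $\Lambda_j$ (whether its blocks are $P_j, P_{j+1}, \dots$ cyclically or $P_{-j}, P_{-j+1}, \dots$) is consistent with the convention for $\mathcal P_\ell$ in~\eqref{eq:Pell}. There is no deep difficulty — the statement is a finite-dimensional matrix identity — but it is the kind of computation where an off-by-one in the modular arithmetic silently produces the wrong cyclic ordering. I would resolve it once and for all by explicitly verifying the identity for $L = 2$ against \autoref{ex:MC} (where $\bar P = \smallMATRIX{0 & I \\ I & 0}\smallMATRIX{P_0 & 0 \\ 0 & P_1}$, so $\bar P^2 = \smallMATRIX{0 & I \\ I & 0}\smallMATRIX{P_0 & 0 \\ 0 & P_1}\smallMATRIX{0 & I \\ I & 0}\smallMATRIX{P_0 & 0 \\ 0 & P_1} = \smallMATRIX{P_1 P_0 & 0 \\ 0 & P_0 P_1}$... and comparing with $\diag(\mathcal P_0, \mathcal P_1) = \diag(P_0 P_1, P_1 P_0)$), which pins down the correct convention before writing the general argument.
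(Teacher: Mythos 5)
Your strategy is the same as the paper's: factor $\bar P = \Lambda_0\Pi$, use (P3) to collect the $\Pi$'s, kill them with (P2), and read off the diagonal blocks. But what you have written is a plan, not a proof: the decisive index computation is left hanging at ``wait, that's not right either,'' and the resolution matters. From (P3), $\Pi\Lambda_m = \Lambda_{\MOD{m-1}}\Pi$, so pushing a $\Pi$ leftward past $\Lambda_0$ \emph{decreases} the index cyclically, and the correct induction is
\begin{equation}
    (\Lambda_0\Pi)^k \;=\; \Lambda_0\,\Lambda_{L-1}\,\Lambda_{L-2}\cdots\Lambda_{L-k+1}\,\Pi^k ,
\end{equation}
with inductive step $\Pi^k\Lambda_0 = \Lambda_{\MOD{-k}}\Pi^k = \Lambda_{L-k}\Pi^k$ — your guess that ``the shift should land on $\Lambda_k$'' is the wrong reconciliation. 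Setting $k=L$ and using (P2) gives $\bar P^L = \Lambda_0\Lambda_{L-1}\cdots\Lambda_1$; since the $\ell$-th diagonal block of $\Lambda_j$ is $P_{\MOD{\ell-j}}$, the $\ell$-th block of this product is $P_{\MOD{\ell}}P_{\MOD{\ell+1}}\cdots P_{\MOD{\ell+L-1}} = \mathcal P_\ell$ by~\eqref{eq:Pell}, which is the claim. (Your suspicion about the ordering is in fact vindicated: the paper's own proof displays the product as $\Lambda_0\Lambda_1\cdots\Lambda_{L-1}$, which for $L\ge 3$ contains exactly the off-by-one you flagged; the final conclusion is unaffected, but the intermediate expressions need the ordering above.)

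The second problem is that the calibration step you rely on to ``pin down the convention'' is itself miscalibrated. You take $\bar P = \Pi\Lambda_0 = \smallMATRIX{0 & P_1\\ P_0 & 0}$, copying the factorization displayed in \autoref{ex:MC}, whereas the definition~\eqref{eq:time-homogeneous} and the bordermatrix in that example give $\bar P = \Lambda_0\Pi = \smallMATRIX{0 & P_0\\ P_1 & 0}$ (the displayed product in the example is a misprint). With the correct $\bar P$ one gets $\bar P^2 = \smallMATRIX{P_0P_1 & 0\\ 0 & P_1P_0} = \diag(\mathcal P_0,\mathcal P_1)$ exactly as the proposition asserts; your computation yields $\diag(P_1P_0,\,P_0P_1)$, so the sanity check you intend to use to settle the conventions would instead have told you the proposition has its blocks permuted. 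In short: right approach, correct instinct about where the danger lies, but the bookkeeping is never actually carried through, and the one concrete verification offered would have steered you to the wrong convention.
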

\begin{proof}
    From (P3), we get that $\bar P = \Pi \Lambda_1$. Therefore, 
    \begin{equation}
        \bar P^2 = 
        \Lambda_0 \Pi \Lambda_0 \Pi
        =
        \Lambda_0 \Lambda_1 \Pi^2
    \end{equation}
    Similarly
    \begin{equation}
        \bar P^3 = \Lambda_0 \Pi \bar P^2  = \Lambda_0 \Pi \Lambda_0 \Lambda_1 \Pi^2
        = \Lambda_0 \Lambda_1 \Pi \Lambda_1 \Pi^2
        = \Lambda_0 \Lambda_1 \Lambda_2 \Pi^3
    \end{equation}
    Continuing this way, we get
    \begin{equation}
        \bar P^L = \Lambda_0 \Lambda_1 \dots \Lambda_{L-1} \Pi^L
        = \Lambda_0 \Lambda_1 \dots \Lambda_{L-1} .
    \end{equation}
    where the last equality follows from (P2).
    The result then follows from the definitions of $\Lambda_{\ell}$ and $\mathcal P_{\ell}$, $\ell \in \ALPHABET L$.
    \qed
\end{proof}

\subsection{Limiting behavior of periodic Markov chain}

In the subsequent discussion, we consider the following assumptions.
\begin{assumption}\label{ass:MC-assumptions}
    Every $\{ \mathcal P_\ell \}$, $\ell \in \ALPHABET L$, is irreducible and aperiodic
\end{assumption}

Suppose \autoref{ass:MC-assumptions} holds. Define $\zeta^{\ell}$ to be the unique stationary distribution for Markov chain $\mathcal P_{\ell}$, $\ell \in \ALPHABET L$, i.e., $\zeta^{\ell}$ is the unique PMF that satisfies $\zeta^{\ell} = \zeta^{\ell} \mathcal P_{\ell}$. 

\begin{proposition}\label{prop:stationary}
    The PMFs $\{\zeta^\ell\}_{\ell \in \ALPHABET L}$ satisfy
    \begin{equation}
        \zeta^{\ell} P_{\ell} = \zeta^{\MOD{\ell + 1}},
        \quad \ell \in \ALPHABET L.
    \end{equation}
\end{proposition}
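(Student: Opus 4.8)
The plan is to exploit the cyclic relationship among the matrices $\mathcal P_0, \dots, \mathcal P_{L-1}$ that is immediate from their definition in~\eqref{eq:Pell}. Write $Q_\ell \coloneqq P_{\MOD{\ell+1}} P_{\MOD{\ell+2}} \cdots P_{\MOD{\ell+L-1}}$ for the product of the last $L-1$ factors, so that $\mathcal P_\ell = P_\ell Q_\ell$. Since $\MOD{\ell+L} = \ell$, the product defining $\mathcal P_{\MOD{\ell+1}}$ is obtained from $Q_\ell$ by appending exactly one more factor $P_\ell$ at the end, i.e.\ $\mathcal P_{\MOD{\ell+1}} = Q_\ell P_\ell$. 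Thus $\mathcal P_\ell$ and $\mathcal P_{\MOD{\ell+1}}$ are ``cyclic shifts'' of the same ordered product of stochastic matrices, which is the structural fact the whole proof rests on.

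First I would check that $\zeta^\ell P_\ell$ is a bona fide PMF on $\ALPHABET S$: it is a nonnegative row vector (a PMF times a stochastic matrix) whose entries sum to $1$ because each row of $P_\ell$ sums to $1$. Next I would verify that $\zeta^\ell P_\ell$ is a stationary distribution for $\mathcal P_{\MOD{\ell+1}}$. Using $\mathcal P_{\MOD{\ell+1}} = Q_\ell P_\ell$, then $P_\ell Q_\ell = \mathcal P_\ell$, and finally the defining identity $\zeta^\ell \mathcal P_\ell = \zeta^\ell$, one computes
\[
    (\zeta^\ell P_\ell)\,\mathcal P_{\MOD{\ell+1}}
    = \zeta^\ell P_\ell Q_\ell P_\ell
    = \zeta^\ell \mathcal P_\ell P_\ell
    = \zeta^\ell P_\ell,
\]
so $\zeta^\ell P_\ell$ is fixed by $\mathcal P_{\MOD{\ell+1}}$. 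By \autoref{ass:MC-assumptions}, the chain $\mathcal P_{\MOD{\ell+1}}$ is irreducible, hence has a \emph{unique} stationary distribution; since $\zeta^{\MOD{\ell+1}}$ is that distribution by definition, we conclude $\zeta^\ell P_\ell = \zeta^{\MOD{\ell+1}}$, as claimed.

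The argument is entirely elementary, and there is no serious obstacle; the one point that needs a little care is that $P_\ell$ need not be invertible, so one cannot simply conjugate stationary distributions (``$\zeta^{\MOD{\ell+1}} = \zeta^\ell P_\ell$ because $\mathcal P_{\MOD{\ell+1}} = P_\ell^{-1}\mathcal P_\ell P_\ell$''). Instead one checks the fixed-point property directly and then invokes uniqueness. The load-bearing hypothesis is therefore the irreducibility half of \autoref{ass:MC-assumptions}, which guarantees the stationary distribution of $\mathcal P_{\MOD{\ell+1}}$ is unique; aperiodicity is not actually needed for this proposition (it is used elsewhere, e.g.\ for the limiting-distribution statements).
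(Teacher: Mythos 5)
Your proof is correct and is essentially the paper's own argument: both verify directly that $\zeta^\ell P_\ell$ is a fixed point of $\mathcal P_{\MOD{\ell+1}}$ using the cyclic-shift identity $\mathcal P_{\MOD{\ell+1}} = Q_\ell P_\ell$ with $P_\ell Q_\ell = \mathcal P_\ell$, and then invoke uniqueness of the stationary distribution under irreducibility (the paper writes this out for $\ell=0$ and notes the general case is identical). Your added remarks---that $\zeta^\ell P_\ell$ is a PMF, that $P_\ell$ need not be invertible, and that aperiodicity is not needed here---are accurate but do not change the substance.
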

\begin{proof}
    We prove the result for $\ell = 0$. The analysis is the same for general $\ell$. By assumption, we have that
    \begin{equation}
        \zeta^0 = \zeta^0 \mathcal P_0 = \zeta^0 P_0 P_1 \cdots P_{L-1}.
    \end{equation}
    Let $\bar \zeta^1 \coloneqq \zeta^0 P_0$. Then, we have
    \begin{equation}
        \bar \zeta^1 = \zeta^0 P_0 = \zeta^0 P_0 P_1 \cdots P_{L-1} P_0 = \bar \zeta^1 P_1 \cdots P_{L-1} P_0 = \bar \zeta^1 \mathcal P_1.
    \end{equation}
    Thus $\bar \zeta^1$ is a stationary distribution.  Since $\mathcal P_1$ is irreducible, the stationary distribution is unique, hence $\bar \zeta^1$ must equal $\zeta^1$.
    \qed
\end{proof}

We can verify this result for \autoref{ex:MC}. For this model, we have
\begin{equation}
    \mathcal P_0 = P_0 P_1 = \MATRIX{ \frac 38 & \frac 58 \\[5pt] \frac 12 & \frac 12 }
    \quad\text{and}\quad
    \mathcal P_1 = P_1 P_0 = \MATRIX{ \frac 5{16} & \frac {11}{16} \\[5pt] \frac 7{16} &\frac 9{16} }.
\end{equation}
Thus, 
\begin{equation}
    \zeta^0 = \MATRIX{ \frac 49 & \frac 59}
    \quad\text{and}\quad
    \zeta^1 = \MATRIX{ \frac {7}{18} & \frac {11}{18}}
\end{equation}
And we can verify that $\zeta^0 P_0 = \zeta^1$ and $\zeta^1 P_1 = \zeta^0$.

\begin{proposition}\label{prop:limit-cycle}
    Under \autoref{ass:MC-assumptions}, the limiting distribution of the Markov chain $\{S_t\}_{t \ge 0}$ is cyclic. In particular, for any initial distribution $\xi_0$, 
    \begin{equation}\label{eq:limiting}
        \lim_{k \to \infty} \xi_{kL + \ell} = \zeta^\ell
    \end{equation}
    Furthermore, 
    \begin{equation}\label{eq:SLLN-periodic}
        \limsup_{K \to \infty} \frac 1K \sum_{k=0}^{K-1} \IND\{ S_{kL + \ell} = i \} = [\zeta^\ell]_i,
        \quad \forall i \in \ALPHABET S, \ell \in \ALPHABET S.
    \end{equation}
    Consequently, for any function $h \colon \ALPHABET S \to \reals$, 
    \begin{equation}\label{eq:SLLN-ergodic}
        \limsup_{K \to \infty} \frac 1K \sum_{k=0}^{K-1} h(S_{kL + \ell}) = \sum_{s \in \ALPHABET S}h(s)[\zeta^\ell]_s,
        \quad \ell \in \ALPHABET S.
    \end{equation}
\end{proposition}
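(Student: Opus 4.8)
\textbf{Proof proposal for Proposition~\ref{prop:limit-cycle}.}
The plan is to reduce everything to known facts about the time-homogeneous chains $\mathcal P_\ell$ introduced in Method~2, using \autoref{prop:periodic} and \autoref{prop:stationary} as the bridge. First I would prove the cyclic limiting-distribution claim~\eqref{eq:limiting}. Fix $\ell \in \ALPHABET L$ and write $t = kL + \ell$. By the semigroup property and periodicity of the transition matrices, $\xi_{kL+\ell} = \xi_0 P_0 P_1 \cdots P_{\MOD{kL+\ell-1}}$, which we can regroup as $\bigl(\xi_0 P_0 \cdots P_{\ell-1}\bigr) \mathcal P_\ell^{\,k}$ (with the convention that the first factor is just $\xi_0$ when $\ell = 0$); here I use that after the initial $\ell$ steps the remaining product consists of $k$ full cycles starting at index $\ell$, each equal to $\mathcal P_\ell$ by~\eqref{eq:Pell}. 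Let $\eta_\ell \coloneqq \xi_0 P_0 \cdots P_{\ell-1}$, a PMF on $\ALPHABET S$. Under \autoref{ass:MC-assumptions}, $\mathcal P_\ell$ is irreducible and aperiodic, so by the standard convergence theorem for finite time-homogeneous chains, $\eta_\ell \mathcal P_\ell^{\,k} \to \zeta^\ell$ as $k \to \infty$, regardless of $\eta_\ell$. This gives~\eqref{eq:limiting}.

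Next I would establish the ergodic averages~\eqref{eq:SLLN-periodic}. The subsampled process $\{S_{kL+\ell}\}_{k \ge 0}$ is itself a time-homogeneous Markov chain with transition matrix $\mathcal P_\ell$ (this is exactly Method~2; it can also be read off from \autoref{prop:periodic}, since the $\ell$-th diagonal block of $\bar P^L$ is $\mathcal P_\ell$ and the augmented state's second coordinate is deterministic). Since $\mathcal P_\ell$ is irreducible, \autoref{thm:SLLN} applies to this chain: $\frac1K \sum_{k=0}^{K-1} \IND\{S_{kL+\ell} = i\} \to 1/m_i^{(\ell)}$, where $m_i^{(\ell)}$ is the mean return time for $\mathcal P_\ell$, and because $\mathcal P_\ell$ is also aperiodic with limiting distribution $\zeta^\ell$, the limit equals $[\zeta^\ell]_i$; this is precisely the aperiodic case of \autoref{thm:SLLN}. (The $\limsup$ in the statement is in fact a genuine limit, which \autoref{thm:SLLN} already delivers; writing $\limsup$ is harmless.) Finally,~\eqref{eq:SLLN-ergodic} follows from~\eqref{eq:SLLN-periodic} by linearity: a real-valued function $h$ on the finite set $\ALPHABET S$ is a finite linear combination $\sum_{s} h(s)\IND\{\,\cdot\, = s\}$, so $\frac1K\sum_k h(S_{kL+\ell}) = \sum_s h(s)\cdot\frac1K\sum_k \IND\{S_{kL+\ell}=s\} \to \sum_s h(s)[\zeta^\ell]_s$; since $\ALPHABET S$ is finite, interchanging the finite sum with the limit is immediate.

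I do not expect a serious obstacle here; the only point needing a little care is the bookkeeping in the regrouping $\xi_0 P_0\cdots P_{\MOD{t-1}} = \eta_\ell \mathcal P_\ell^{\,k}$ — one must check the modular indexing so that the $k$ middle blocks genuinely each equal $\mathcal P_\ell$ as defined in~\eqref{eq:Pell}, and handle the edge case $\ell=0$ separately (no prefix). Everything else is a direct invocation of \autoref{thm:SLLN} and the elementary convergence theorem for irreducible aperiodic finite chains, together with \autoref{ass:MC-assumptions}; \autoref{prop:stationary} is what guarantees the $\zeta^\ell$ appearing here are consistent with the cyclic structure, though it is not strictly needed for the three displayed equations themselves.
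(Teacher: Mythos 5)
Your proposal is correct and follows essentially the same route as the paper's own (primary) proof, which simply defers to ``standard results for the time-homogeneous Markov chain $\{S_{kL+\ell}\}_{k\ge 0}$'' --- you have just filled in the details of that reduction (the regrouping $\xi_{kL+\ell}=\eta_\ell\,\mathcal P_\ell^{\,k}$, the convergence theorem for irreducible aperiodic finite chains, and \autoref{thm:SLLN} plus linearity) correctly. The paper additionally records an alternative proof via the state-augmented chain $\bar P$, but that is not needed for your argument.
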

\begin{proof}
The results follow from standard results for the time-homogeneous Markov chain $\{S_{kL+\ell}\}_{k \ge 0}$.
    \qed
\end{proof}

\begin{proof}[Alternative]
    We present an alternative proof that uses the state augmented Markov chain $\bar P$. We first prove that under \autoref{ass:MC-assumptions}, the chain $\bar P$ is irreducible periodic with period~$L$. 

    The proof of irreducibility relies on two observations.
    \begin{enumerate}
        \item Fix an $\ell \in \ALPHABET L$ and consider $i,j \in \ALPHABET S$.  Since $\mathcal P_{\ell}$ is irreducible, we have that there exists a positive integer $m$ (depending on $i$, $j$, and $\ell$) such that $[\mathcal P_{\ell}^{m}]_{ij} > 0$. Note that \autoref{prop:periodic} implies that $[\bar P^{mL}]_{(i,\ell),(j,\ell)} = [\mathcal P_{\ell}]_{ij} > 0$. Therefore, in the Markov chain $\bar P$, states $(i,\ell) \rightsquigarrow (j,\ell)$. Since $i$ and $j$ were arbitrary, all states $\ALPHABET S \times \{ \ell \}$ belong to the same communicating class. 

        \item 
        Now consider two $\ell, \ell' \in \ALPHABET L$. Suppose we start at some state $(i,\ell) \in \ALPHABET S \times \{\ell\}$, then in $[\ell' - \ell]$ steps, we will reach some state $(j, \ell') \in \ALPHABET S \times \{\ell'\}$. Thus, $(j,\ell')$ is accessible from $(i,\ell)$. But, we have already argued that all states in $\ALPHABET S \times \{\ell\}$ belong to the same communicating class, therefore all states in $\ALPHABET S \times \{\ell'\}$ are accessible from all states in $\ALPHABET S \times \{\ell\}$. By interchanging the roles of $\ell$ and $\ell'$, we have that all states in $\ALPHABET S \times \{\ell\}$ are accessible from all starts in $\ALPHABET S \times \{\ell'\}$. Therefore, the states $\ALPHABET S \times \{\ell\}$ and $\ALPHABET S \times \{\ell'\}$ belong to the same communicating class. Since $\ell$ and $\ell'$ were arbitrary, we have that all states of $\bar P$ belong to the same communicating class. Hence, $\bar P$ is irreducible.
    \end{enumerate}

    We now show that $\bar P$ is periodic. First observe that the Markov chain starting in the set $\ALPHABET S \times \{ \ell \}$ does not return to the same set for the first $L-1$ steps. Thus, $[\bar P^{t}]_{(i,\ell),(i,\ell)} = 0$ for $t \in \{1, 2, \dots, L-1\}$. Therefore, the only possible values of $t$ for which $[\bar P^t]_{(i,\ell),(i,\ell)} > 0$ are those that are multiples of $L$. Hence, for any $(i,\ell) \in \ALPHABET S \times \ALPHABET L$, 
    \begin{equation}\label{eq:gcd-barP}
        d(i,\ell) = \gcd\{ t \in \integers_{\ge 1} : [\bar P^t]_{(i,\ell),(i,\ell)} > 0 \}
        = L  \gcd\{ k \in \integers_{\ge 1} : [\mathcal P_{\ell}^k]_{ii} > 0 \} 
    \end{equation}
    Moreover, since $\mathcal P_{\ell}$ is aperiodic, $\gcd\{ k \in \integers_{\ge 1} : [\mathcal P_{\ell}^k]_{ii} > 0 \} = 1$. Substituting in~\eqref{eq:gcd-barP}, we get that $d(i,\ell) = L$ for all $(i,\ell)$. Thus, all states have a period of $L$.

    Now, from \autoref{prop:periodic}, we know that $\bar P^L = \diag(\mathcal P_0, \dots, \mathcal P_{L-1})$. Therefore
    \begin{equation}\label{eq:limit-Pbar}
        \lim_{k \to \infty} [\bar P^{kL}]_{(i,\ell),(j,\ell)} = [\zeta^{\ell}]_j, 
        \quad (i,\ell) \in \ALPHABET S \times \ALPHABET L.
    \end{equation}
    Consequently, if we start with an initial distribution $\bar \xi_0$ such that $\bar \xi_0(\ALPHABET S \times \{0\}) = 1$, then,
    \begin{equation}
        \lim_{k \to \infty} \bar \xi_{kL} = \VEC(\zeta_0, 0, \dots, 0)
    \end{equation}
    where the $0$ vectors are of size~$n$.
    Consequently, \autoref{prop:stationary} implies that
    \begin{equation}
        \lim_{k \to \infty} \bar \xi_{kL + \ell} = \VEC(0, \dots, 0, \zeta_{\ell}, 0, \dots, 0),
        \quad \forall \ell \in \ALPHABET L
    \end{equation}
    where  $\zeta^{\ell}$ is the $\ell$-th place. This completes the proof of~\eqref{eq:limiting}.

    Now consider the function $\bar h \colon \ALPHABET S \times \ALPHABET L \to \reals$ defined as 
    $\bar h(s,\ell') = h(s) \IND\{\ell' = \ell\}$. Then, by taking $T = KL$, we have
    \begin{equation}
        \lim_{K \to \infty} \frac 1{K} \sum_{t=0}^{K-1} h(S_{kL + \ell})
        =
        \lim_{T \to \infty} \frac LT \sum_{t=0}^{T-1} \bar h(S_t, \MOD{t}) 
        =
        L \sum_{s \in \ALPHABET S} \frac {h(s)}{m_{(s,\ell)}}
    \end{equation}
    where the last equation uses~\eqref{eq:ergodic} from \autoref{thm:SLLN}. Now, \eqref{eq:SLLN-ergodic} follows from observing that mean return time to state $(s,\ell)$ in Markov chain $\bar P$ is $L$ times the mean-return time to state~$s$ in Markov chain $\mathcal P_{\ell}$, which equals $1/[\zeta^\ell]_{s}$ since $\mathcal P_\ell$ is irreducible and aperiodic.
    \qed
\end{proof}

\section{Periodic Markov decision processes}\label{app:periodic-MDP}

Periodic MDPs are a special class time non-stationary MDPs where the dynamics and rewards are periodic. In particular, let $\mathcal M$ be a time-varying MDP with state space $\ALPHABET S$, action space $\ALPHABET A$, and dynamics and reward at time~$t$ given by $P_t \colon \ALPHABET S \times \ALPHABET A \to \Delta(\ALPHABET S)$ and $r_t \colon \ALPHABET S \times \ALPHABET A \to \reals$.

As before, we use $\MOD{t}$ to denote $t \bmod L$ and $\ALPHABET L$ to denote $\{0, \dots, L-1\}$. The MDP $\mathcal M$ is periodic with period $L$ if there exist $(P^\ell, r^\ell)$, $\ell \in \ALPHABET L$ such that for all~$t$:
\begin{equation}
    P_t(S_{t+1} \mid S_t, A_t) = P^{\MOD{t}}(S_{t+1} \mid S_t, A_t)
    \quad\text{and}\quad
    r_t(S_t,A_t) = r^{\MOD{t}}(S_t,A_t).
\end{equation}
Periodic MDPs were first considered in~\cite{Riis1965}. Periodic MDPs may be viewed as stationary MDPs by considering the augmented state $(S_t, \MOD{t})$. By this equivalence, it can be shown that there is no loss of optimality in restricting attention to periodic policies. In particular, let $(V^0, \dots, V^{L-1})$ denote the fixed point of the following system of equations
\begin{equation}\label{eq:periodic-MDP-app}
    V^{\ell}(s) = \max_{a \in \ALPHABET A} \Bigl\{ r^{\ell}(s,a) 
    + \gamma \sum_{s' \in \ALPHABET S} P^{\ell}(s'|s,a) V^{\MOD{\ell+1}}(s') \Bigr\}, 
    \quad \forall (\ell,s,a) \in \ALPHABET L \times \ALPHABET S \times \ALPHABET A.
\end{equation}
Define $\pi^{\ell}_{\star}(s)$ to be the arg-max or the right hand side of~\eqref{eq:periodic-MDP-app}. Then the time-varying policy $\pi = (\pi_1, \pi_2, \dots)$ given by $\pi_t = \pi^{\MOD{t}}_{\star}$ is optimal. 

See~\cite{Scherrer2016} for a discussion of how to modify standard MDP algorithms to solve periodic dynamic program~\eqref{eq:periodic-MDP-app}.

\section{Stochastic Approximation with Markov noise}\label{app:SA}

We now state a generalization of \autoref{thm:SLLN} to stochastic approximation style iterations. 
\begin{theorem}\label{thm:SA-Markov}
    Let $\{S_t\}_{t \ge 1}$, $\ALPHABET S$, be an irreducible and aperiodic finite Markov chain with unique limiting distribution $\zeta$. Let $\mathcal F_t$ denote the natural filtration w.r.t.\ $\{S_t\}_{t \ge 1}$ and $\{\alpha_t\}_{t \ge 1}$ be a non-negative real-valued process adapted to $\{\mathcal F_t\}$ that satisfies
    \begin{equation}\label{eq:robbins-monro}
        \sum_{t \ge 1} \alpha_t = \infty
        \quad\text{and}\quad
        \sum_{t \ge 1} \alpha_t^2 < \infty.
    \end{equation}
    Let $\{M_{t+1}\}_{t \ge 1}$ be a square-integrable margingale difference sequence w.r.t.\ $\{\mathcal F_t\}_{t \ge 1}$ such that $\EXP[ M^2_{t+1} \mid \mathcal F_t] \le K(1 + \NORM{X_t}^2)$ for some constant $K$.
    Consider the iterative process $\{X_t\}_{t \ge 1}$, where $X_1$ is arbitrary and for $t \ge 1$, we have
    \begin{equation}\label{eq:SA-Markov}
        X_{t+1} = (1 - \alpha_t) X_t + \alpha_t \bigl[ h(S_t) + M_{t+1} \bigr]. 
    \end{equation}
    Then, the sequence $\{X_t\}_{t \ge 1}$ converges almost surely to limit. In particular, 
    \begin{equation}\label{eq:SA-Marokv-limit}
        \lim_{T \to \infty} X_T = \sum_{s \in \ALPHABET S} h(s) \zeta(s), 
        \quad a.s.
    \end{equation}
\end{theorem}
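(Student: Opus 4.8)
The plan is to read \eqref{eq:SA-Markov} as a stochastic-approximation recursion whose mean field is the affine contraction $x\mapsto h^{\star}-x$, where $h^{\star}\coloneqq\sum_{s\in\ALPHABET S}h(s)\zeta(s)$, driven by the martingale-difference noise $\{M_{t+1}\}$ together with the ``Markov noise'' $g(S_t)\coloneqq h(S_t)-h^{\star}$, which averages to zero under $\zeta$ but is \emph{not} an $\mathcal F_t$-martingale difference. (When $\alpha_t=1/t$ the recursion is exactly the running average $X_{T+1}=\tfrac1T\sum_{t=1}^{T}\bigl(h(S_t)+M_{t+1}\bigr)$, so the statement collapses to \autoref{thm:SLLN}; the content is to allow general Robbins--Monro weights.) Setting $e_t\coloneqq X_t-h^{\star}$ and unrolling, one gets the variation-of-constants formula $e_{T+1}=\Phi(T,1)\,e_1+\sum_{t=1}^{T}\alpha_t\,\Phi(T,t+1)\bigl(g(S_t)+M_{t+1}\bigr)$ with $\Phi(T,s)\coloneqq\prod_{k=s}^{T}(1-\alpha_k)$ and $\Phi(T,T+1)\coloneqq1$. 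Since $\sum_t\alpha_t^2<\infty$ forces $\alpha_t\to0$, every factor lies in $[0,1)$ for large $t$, and $\sum_t\alpha_t=\infty$ then gives $\Phi(T,1)\to0$, so the initial-condition term is negligible; it remains to kill the two noise contributions, each of which is a weighted average of its driving terms against the ``forgetting'' weights $\alpha_t\Phi(T,t+1)=\Phi(T,t+1)-\Phi(T,t)\ge0$, which sum to $1-\Phi(T,1)\to1$.

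For the martingale contribution $\sum_{t=1}^{T}\alpha_t\Phi(T,t+1)M_{t+1}$ I would first record the stability estimate $\sup_t\NORM{X_t}<\infty$ a.s., a routine consequence of the contractive mean field and $\sum_t\alpha_t^2<\infty$ (or a citation to \cite{neuroDP}); together with $\EXP[M_{t+1}^2\mid\mathcal F_t]\le K(1+\NORM{X_t}^2)$ this makes the predictable quadratic variation $\sum_t\alpha_t^2\,\EXP[M_{t+1}^2\mid\mathcal F_t]$ a.s.\ finite, so $\mathcal M_n\coloneqq\sum_{t=1}^{n}\alpha_tM_{t+1}$ converges a.s.\ by the $L^2$ martingale convergence theorem. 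A single summation by parts rewrites $\sum_{t=1}^{T}\Phi(T,t+1)(\mathcal M_t-\mathcal M_{t-1})$ as $\mathcal M_T-\sum_{t=1}^{T-1}\bigl(\Phi(T,t+2)-\Phi(T,t+1)\bigr)\mathcal M_t$, where the weights $\Phi(T,t+2)-\Phi(T,t+1)$ are nonnegative, sum to $1-\Phi(T,2)\to1$, and put vanishing mass on each fixed index; hence the weighted average of the convergent sequence $\{\mathcal M_t\}$ tends to $\lim_n\mathcal M_n$, and the whole expression tends to $0$ a.s.

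The crux is the Markov-noise term $\sum_{t=1}^{T}\alpha_t\Phi(T,t+1)g(S_t)$, which is not a martingale; I would resolve it via the Poisson equation. Since $\{S_t\}$ is finite, irreducible and aperiodic and $\sum_s\zeta(s)g(s)=0$, there is a bounded $v\colon\ALPHABET S\to\reals$ with $v-Pv=g$ ($P$ the transition matrix), so $g(S_t)=\bigl(v(S_t)-v(S_{t+1})\bigr)+\bigl(v(S_{t+1})-(Pv)(S_t)\bigr)$, where the second bracket is a bounded $\mathcal F_t$-martingale difference handled exactly as above, and the first is telescoping. Reorganizing $\sum_{t=1}^{T}\alpha_t\Phi(T,t+1)\bigl(v(S_t)-v(S_{t+1})\bigr)$ by summation by parts leaves two boundary terms of size $O(\alpha_1\Phi(T,2))$ and $O(\alpha_T)$, both $\to0$, plus an interior sum bounded by $\NORM{v}_{\infty}\sum_t\Phi(T,t+1)\bigl(\lvert\alpha_t-\alpha_{t-1}\rvert+\alpha_{t-1}\alpha_t\bigr)$; the $\alpha_{t-1}\alpha_t$ piece is absorbed by $\sum_t\alpha_t^2<\infty$, and the $\lvert\alpha_t-\alpha_{t-1}\rvert$ piece vanishes whenever the step sizes have bounded variation (true for $\alpha_t=c/t^{a}$, $a\in(\tfrac12,1]$, and every schedule used in practice), while for fully general Robbins--Monro weights one instead appeals to a dedicated stochastic-approximation-with-Markov-noise theorem (e.g.\ \cite{Jaakkola1994}). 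Everything outside this Poisson step is bookkeeping with the forgetting weights and standard martingale/ergodic facts, so I expect the clean handling of the Markov noise — in particular, disentangling it from the fluctuations of $\{\alpha_t\}$ — to be the only genuinely delicate point.
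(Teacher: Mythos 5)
Your route is genuinely different from the paper's: the paper proves this theorem in one step, by rewriting the iteration as $X_{t+1} = X_t + \alpha_t g(X_t,S_t)$ with $g(x,s) = -x + h(s)$ and verifying the hypotheses (Lipschitz mean field, compact convex ergodic occupation measure, martingale-noise growth bound, step-size conditions, and stability of the scaled ODE $\dot x = -x$) of a black-box stochastic-approximation-with-Markov-noise theorem, namely Theorem~2.7 of Bhatnagar et al., so that the iterates track the ODE $\dot x = -x + \bar h$ and converge to its unique attractor $\bar h$. Your variation-of-constants decomposition, the Toeplitz-type handling of the forgetting weights $\alpha_t\Phi(T,t+1)$, and the $L^2$-martingale-convergence treatment of $\sum_t \alpha_t M_{t+1}$ are all sound (modulo the a.s.\ boundedness of $\{X_t\}$, which you correctly flag as a standard but necessary preliminary), and a self-contained elementary proof along these lines would be a nice alternative to the citation.

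However, there is a genuine gap exactly at the point you identify as the crux, and it is not a gap you can afford in this paper. After the Poisson-equation split, your bound on the telescoping term requires $\sum_t \Phi(T,t+1)\,\lvert\alpha_t-\alpha_{t-1}\rvert \to 0$, i.e.\ essentially bounded variation of the step sizes. But the theorem allows $\{\alpha_t\}$ to be an arbitrary non-negative \emph{adapted random} process satisfying only \eqref{eq:robbins-monro}, and in the paper's actual application (\autoref{prop:SA-Markov-periodic} feeding into the proof of \autoref{thm:convergence}) the rates are $\alpha^{\ell}_t(z,a)$, which are forced to be \emph{zero} except at the random times when $(\MOD{t},z_t,a_t)=(\ell,z,a)$. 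Such sequences jump between $0$ and positive values, so $\sum_t\lvert\alpha_t-\alpha_{t-1}\rvert$ is comparable to $2\sum_t\alpha_t=\infty$, and one can check (e.g.\ for rates alternating between $0$ and $2/t$, where $\Phi(T,t+1)\approx t/T$) that your bound on the interior sum does not vanish. Relaxing exactly this kind of restrictive step-size assumption is one of the paper's stated contributions over \cite{Kara2022,EWRL-RQL}, so restricting to schedules with bounded variation defeats the purpose. Your escape hatch --- ``appeal to a dedicated SA-with-Markov-noise theorem'' --- is in fact the paper's entire proof, but the reference you name (\cite{Jaakkola1994}) is an asynchronous stochastic-approximation result for martingale-difference noise, not a Markov-noise theorem; the appropriate citations are Benveniste--M\'etivier--Priouret, Kushner--Yin, Borkar, or the Bhatnagar et al.\ result the paper actually invokes. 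To close the gap honestly you would either need to impose a bounded-variation hypothesis on $\{\alpha_t\}$ (weakening the theorem below what the paper needs) or carry out the Poisson-equation argument with a more careful martingale treatment of $\sum_t \alpha_t\bigl(v(S_t)-v(S_{t+1})\bigr)$ that does not pass through $\lvert\alpha_t-\alpha_{t-1}\rvert$.
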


Eq.~\eqref{eq:SA-Markov} is similar to standard stochastic approximation iteration~\cite{robbins-monro:51,Kushner1997,Borkar2009}, which the ``noise sequence'' $h(S_t)$ is assumed to be a martingale difference sequence. The setting considered above is sometimes referred to as stochastic approximation with Markov noise. In fact, more general version of this result where the noise sequence is allowed to depend on the state $X_t$ are typically established in the literature~\cite{Benveniste2012,Borkar2009,Kushner1997,Bhatnagar2024}. For the sake of completeness, we will show that \autoref{thm:SA-Markov} is a special case of these more-general results. 

Before presenting the proof, we point out that
\autoref{thm:SA-Markov} is a generalization of \autoref{thm:SLLN}, Eq.~\eqref{eq:ergodic-aperiodic}. In particular, suppose the learning rates are
\(
    \alpha_t = 1/(1+t)
\). Then, simple algebra shows that
\begin{equation}
    X_T = \frac{1}{T} \sum_{t=1}^T h(S_t).
\end{equation}
Then, \eqref{eq:ergodic-aperiodic} of \autoref{thm:SLLN} implies that the limit is given by the right had side of~\eqref{eq:SA-Marokv-limit}. Therefore, \autoref{thm:SA-Markov} is a generalization of \autoref{thm:SLLN} to general learning rates which satisfy~\eqref{eq:robbins-monro}.

\begin{proof}
    To establish the result, we will show that the iteration $\{X_t\}_{t \ge 1}$ satisfies the assumptions for the convergence of stochastic approximation with (state dependent) Markov noise and stochastic recursive inclusions given in \cite[Theorem 2.7]{Bhatnagar2024}. The proof is due to \cite{Shalabh-email}. In particular, we can rewrite~\eqref{eq:SA-Markov} as
    \begin{equation}\label{eq:SA-alternative}
        X_{t+1} = X_t + \alpha_t g(X_t, S_t)
    \end{equation}
    where $g(x,s) = -x + h(s)$. Moreover, for ease of notation, define $\bar h = \sum_{s \in \ALPHABET S}h(s)\zeta(s)$. Then, we have
    \begin{itemize}
        \item $g(x,s)$ is Lipschtiz continuous in the first argument, so A2.14 of~\cite{Bhatnagar2024} holds.
        \item From \eqref{eq:ergodic-aperiodic}, the ergodic occupation measure of $\{h(S_t)\}_{t \ge 1}$ is $\{\bar h\}$, which is compact and convex. So, A2.15 of~\cite{Bhatnagar2024} is satisfied.
        \item The conditions on the martingale noise sequence $\{M_t\}_{t \ge 1}$ imply that A2.16 of~\cite{Bhatnagar2024} holds.
        % There is no ``martingale noise'' in~\eqref{eq:SA-alternative}, so A2.16 of~\cite{Bhatnagar2024} holds trivially.
        \item Eq.~\eqref{eq:robbins-monro} is equivalent to A2.17 of \cite{Bhatnagar2024}.
        \item To check A2.18 of~\cite{Bhatnagar2024}, for any measure $\nu$ on $\ALPHABET S$, define
        \begin{equation}
            \tilde h(x,\nu) = \int g(x,s) \nu(ds) = -x + \bar h.
        \end{equation}
        Also define
        \begin{equation}
            \tilde h_c(x,\nu) =\frac{\tilde h(cx, c\nu)}{c} = -x + \frac{\bar h}{c}
        \end{equation}
        Let $\tilde h_{\infty}(x,\nu) = \lim_{c \to \infty} h_c(x,\nu) = x$.  Thus, the differential inclusion in A2.18(ii) is actually an ODE
        \begin{equation}
           \dot x = -x  
        \end{equation}
        which has origin as the unique global asymptotically stable equilibrium point. Thus, A2.18 of~\cite{Bhatnagar2024} is satisfied.
    \end{itemize}
    Therefore, all assumptions of Theorem 2.7 of \cite{Bhatnagar2024} are satisfied. Therefore, by that result, the iterates $\{X_t\}_{t \ge 1}$ converge to solution of the ODE (note that the differential inclusion in Theorem~2.7 of~\cite{Bhatnagar2024} is an ODE in our setting)
    \begin{equation}\label{eq:ODE}
        \dot x = - x + \bar h.
    \end{equation}
    Note that $x = \bar h$ is the unique asymptotically stable attractor of the ODE~\eqref{eq:ODE}. Therefore, Theorem~2.7 of \cite{Bhatnagar2024} implies~\eqref{eq:SA-Marokv-limit}.
    \qed
\end{proof}

\autoref{thm:SA-Markov} also implies the following generalization of \autoref{prop:limit-cycle}. 
\begin{proposition}\label{prop:SA-Markov-periodic}
    Suppose $\{S_t\}_{t \ge 1}$ is a time-periodic Markov chain with period $L$ that satisfies \autoref{ass:MC-assumptions} with the unique limiting distribution $\{\zeta^\ell\}_{\ell \in \ALPHABET L}$. Let $\{\mathcal F_t\}_{t \ge 1}$ denote the natural filtration w.r.t.\ $\{S_t\}_{t \ge 1}$ and $\{\alpha^\ell_t\}_{t \ge 1}$, $\ell \in \ALPHABET L$, be non-negative real-valued processes adapted to $\{\mathcal F_t\}_{t \ge 1}$ such that $\alpha^\ell_t = 0$ when $\ell \neq \MOD{t}$ and
    \begin{equation}\label{eq:robbins-monro-periodic}
        \sum_{t \ge 1} \alpha^{\ell}_t = \infty
        \quad\text{and}\quad
        \sum_{t \ge 1} (\alpha_t^{\ell})^2 < \infty.
    \end{equation}
    Let $\{M_{t+1}\}_{t \ge 1}$ be a square-integrable margingale difference sequence w.r.t.\ $\{\mathcal F_t\}_{t \ge 1}$ such that $\EXP[ M^2_{t+1} \mid \mathcal F_t] \le K(1 + \NORM{X_t}^2)$ for some constant $K$.
    Fix any $\ell \in \ALPHABET L$, Consider the iterative process $\{X^\ell_k\}_{k \ge 1}$, where $X_1$ is arbitrary and for $k \ge 1$, we have
    \begin{equation}\label{eq:SA-Markov-periodic}
        X^{\ell}_{t + 1} = (1 - \alpha^{\ell}_{t}) X^{\ell}_{t} + \alpha^{\ell}_{t} \bigl[ h(S_{t}) + M_{t+1} \bigr].
    \end{equation}
    Then, the sequence $\{X^\ell_t\}_{t \ge 1}$ converges almost surely to the following limit
    \begin{equation}\label{eq:SA-Marokv-limit-periodic}
        \lim_{t \to \infty} X^\ell_t = \sum_{s \in \ALPHABET S} h(s) \zeta^\ell(s), 
        \quad a.s.
    \end{equation}
\end{proposition}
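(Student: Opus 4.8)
The plan is to reduce \autoref{prop:SA-Markov-periodic} to the aperiodic stochastic-approximation result of \autoref{thm:SA-Markov} by subsampling the Markov chain at the update times of the $\ell$-th iterate. First I would fix $\ell \in \ALPHABET L$ and let $t_0 < t_1 < t_2 < \cdots$ enumerate the set $\{t \ge 1 : \MOD{t} = \ell\}$, so that $t_{k+1} = t_k + L$. Because $\alpha^\ell_t = 0$ whenever $\MOD{t} \neq \ell$, the recursion \eqref{eq:SA-Markov-periodic} reads $X^\ell_{t+1} = X^\ell_t$ at all times other than the $t_k$; hence $t \mapsto X^\ell_t$ is piecewise constant, equal to $X^\ell_{t_k}$ on each block $t_k \le t < t_{k+1}$, and it suffices to establish convergence of the subsampled sequence $Y_k \coloneqq X^\ell_{t_k}$.

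The next step is to rewrite the subsampled recursion in standard form. Writing $\beta_k \coloneqq \alpha^\ell_{t_k}$, $S'_k \coloneqq S_{t_k}$, $M'_{k+1} \coloneqq M_{t_k+1}$, and $\mathcal G_k \coloneqq \mathcal F_{t_k}$, the update \eqref{eq:SA-Markov-periodic} becomes $Y_{k+1} = (1-\beta_k) Y_k + \beta_k\bigl[h(S'_k) + M'_{k+1}\bigr]$, which is exactly the iteration \eqref{eq:SA-Markov}. I would then verify the hypotheses of \autoref{thm:SA-Markov} one by one for this reduced system: (i) by the construction of Eq.~\eqref{eq:Pell} (viewing $\{S_t\}$ every $L$ steps), $\{S'_k\}_{k \ge 0}$ is a time-homogeneous Markov chain with transition matrix $\mathcal P_\ell$, which under \autoref{ass:MC-assumptions} is finite, irreducible and aperiodic, hence admits a unique limiting distribution coinciding with its stationary distribution $\zeta^\ell$ (the distribution defined just before \autoref{prop:stationary}); (ii) $S'_k$ and $\beta_k$ are $\mathcal G_k$-measurable --- the latter since $\{\alpha^\ell_t\}$ is adapted to $\{\mathcal F_t\}$ --- and $\{S'_k\}$ is Markov with respect to $\{\mathcal G_k\}$ by the Markov property of $\{S_t\}$; (iii) $M'_{k+1}$ is a square-integrable martingale-difference term w.r.t.\ $\{\mathcal G_k\}$ with $\EXP[(M'_{k+1})^2 \mid \mathcal G_k] \le K(1 + \NORM{Y_k}^2)$, inherited from the corresponding property of $\{M_{t+1}\}$ evaluated at $t = t_k$; and (iv) since the dropped terms are zero, $\sum_k \beta_k = \sum_{t \ge 1}\alpha^\ell_t = \infty$ and $\sum_k \beta_k^2 = \sum_{t \ge 1}(\alpha^\ell_t)^2 < \infty$, so \eqref{eq:robbins-monro} holds for $\{\beta_k\}$.

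Invoking \autoref{thm:SA-Markov} then yields $Y_k \to \sum_{s \in \ALPHABET S} h(s)\zeta^\ell(s)$ almost surely; and since $X^\ell_t$ is piecewise constant with $X^\ell_t = Y_k$ on $[t_k, t_{k+1})$, the full sequence $\{X^\ell_t\}_{t \ge 1}$ shares this limit, which is precisely \eqref{eq:SA-Marokv-limit-periodic}. I expect the main obstacle to be the bookkeeping in the reduction: one must make sure that passing to the sampled index $k$ genuinely preserves both the Markov-noise structure --- in particular that the sampled chain is \emph{exactly} $\mathcal P_\ell$, so that its limiting distribution is the intended $\zeta^\ell$ and not some other measure --- and the adaptedness and Robbins--Monro conditions. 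Once that is in place, the statement follows immediately. (An alternative that avoids subsampling would be to plug the full periodic chain into the general state-dependent Markov-noise theorem of \cite{Bhatnagar2024}, as in the proof of \autoref{thm:SA-Markov}, and identify the relevant ergodic occupation measure along the $\ell$-subsequence via \eqref{eq:SLLN-ergodic}; the subsampling route is cleaner.)
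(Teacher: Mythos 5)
Your proposal is correct and follows essentially the same route as the paper: the paper's proof is precisely the observation that, because $\alpha^\ell_t = 0$ off the times with $\MOD{t}=\ell$, each $X^\ell$ is effectively updated once every $L$ steps, so the result follows from \autoref{thm:SA-Markov} applied to the $L$-step-subsampled chain with transition matrix $\mathcal P_\ell$ and limiting distribution $\zeta^\ell$. Your write-up simply makes explicit the bookkeeping (the index map $t_k$, the filtration $\mathcal G_k$, and the verification of the Robbins--Monro and martingale conditions) that the paper leaves as ``follows immediately.''
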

\begin{proof}
    Note that the learning rates used here can be viewed as the learning rates of $L$ separated stochastic iterations on a common timescale $t$. Each separate stochastic iteration $\ell \in \ALPHABET L$ is actually only updated once every $L$ steps on the timescale $t$. Because of the condition $\alpha^\ell_t = 0$ when $\ell \neq \MOD{t}$, each update is followed by $L-1$ ``pseudo''-updates where the learning rate is $0$. Therefore, each $X^{\ell}$ is updated only once every $L$ steps on timescale $t$.
    
    The result then follows immediately from \autoref{thm:SA-Markov} by considering the process $\{S_t\}_{t \ge 1}$ every $L$ steps for each $\ell \in \ALPHABET L$.
    \qed
\end{proof}

\section{\autoref{thm:convergence}: Convergence of periodic Q-learning}\label{app:PASQL}

The high-level idea of the proof is similar to~\cite{Kara2022} for ASQL when the agent state is a finite window of past observations and action. The key observation of \cite{Kara2022} is the following:
Consider an iterative process
\(
    X_{t+1} = (1 - \alpha_t) X_t + \alpha_t U_t
\)
with the learning rates $\alpha_t = 1/(1 + t)$. Then, $X_{t+1} = (X_0 + \sum_{\tau=1}^t U_t)/(1 + t)$. Then, if the process $\{U_t\}_{t \ge 1}$ has an ergodic limit (e.g., when $\{U_t\}_{t \ge 1}$ is a function of a Markov chain, see \autoref{thm:SLLN}), the process $\{X_t\}_{t \ge 1}$ converges to the ergodic limit of $\{U_t\}_{t \ge 1}$. We follow a similar idea but with the following changes:
\begin{itemize}
    \item Instead of assuming ``averaging'' learning rates (i.e., reciprocal of the number of visits), we allow for general learning rates of \autoref{ass:lr}.
    \item We account for the fact that that the ``noise'' is periodic.
\end{itemize}
The rest of the analysis then follows along the standard argument of convergence of Q-learning~\cite{Jaakkola1994,Kara2022,Kara2024}.

Define the error function $\Delta^\ell_{t+1} \coloneqq Q^\ell_{t+1} - Q^\ell_{\EXPL}$, for all $\ell \in \ALPHABET L$. To prove \autoref{thm:convergence}, it suffices to prove that $\NORM{\Delta^\ell_t} \to 0$ for all $\ell \in \ALPHABET L$, where $\NORM{\cdot}$ is the supremum-norm.
The proof proceeds in three steps.

\subsection{Step 1: State splitting of the error function}
Define $V_t^\ell(z) \coloneqq \max_{a \in \ALPHABET A} Q_t^\ell(z,a)$ and $V_{\EXPL}^\ell(z) \coloneqq \max_{a \in \ALPHABET A} Q^\ell_{\EXPL}(z,a)$, for all $\ell \in \ALPHABET L$, $z \in \ALPHABET Z$.  
We can combine~\eqref{eq:PASQL}, \eqref{eq:PDP}, and~\eqref{eq:periodic-MDP} as follows
\begin{equation}
    \Delta^\ell_{t+1}(z,a) = (1 - \alpha^\ell_t(z,a)) \Delta^\ell_t(z,a) 
    +
    \alpha^\ell_t(z,a) \bigl[
        U^{\ell,0}_t(z,a) + U^{\ell,1}_t(z,a) + U^{\ell,2}_t(z,a) 
    \bigr]
    \label{eq:state}
\end{equation}
where 
\begin{align}
    U^{\ell,0}_t(z,a)&\coloneqq  \left[ r(S_t,A_t) - r^\ell_{\EXPL}(z,a) \right] \IND_{\{ Z_t = z, A_t = a\}}, \\
   U^{\ell,1}_t(z,a)&\coloneqq \Bigl[ \gamma V_{\EXPL}^{\MOD{\ell+1}}(Z_{t+1}) - \gamma \sum_{z' \in \ALPHABET Z} P^{\ell}_{\EXPL}(z' | z,a) V_{\EXPL}^{\MOD{\ell+1}}(z') \Bigr]\IND_{\{ Z_t = z, A_t = a\}}, \\
    U^{\ell,2}_t(z,a) &\coloneqq \gamma V_t^{\MOD{\ell+1}}(Z_{t+1}) - \gamma V^{\MOD{\ell+1}}_\EXPL(Z_{t+1}).
\end{align}
Note that we have added extra indicator functions in the $U^{\ell,i}_t(z,a)$ terms, $i \in \{0,1\}$. This does not change the value of $\alpha^{\ell}_t(z,a) U^{\ell,i}_t(z,a)$ because the learning rates have the property that $\alpha^{\ell}_t(z,a) = 0$ if $(\ell, z,a) \neq (\MOD{t}, z_t,a_t)$ (see \autoref{ass:lr}).

For each $\ell \in \ALPHABET L$, Eq.~\eqref{eq:state} may be viewed as a linear system with state $\Delta^\ell_{t+1}$ and three inputs $U^{\ell,0}_t$,$U^{\ell,1}_t$ and $U^{\ell,2}_t$. We exploit the linearity of the system and split the state into three components: $\Delta^\ell_{t+1} = X^{\ell,0}_{t+1} + X^{\ell,1}_{t+1} + X^{\ell,2}_{t+1}$, where the three components evolve as follows:
\begin{equation}
\label{eq:split}
    X^{\ell,i}_{t+1}(z,a) = (1 - \alpha^\ell_t(z,a)) X^{\ell,i}_t(z,a) 
    +
    \alpha^\ell_t(z,a)   U^{\ell,i}_t(z,a),
    \quad i \in \{0, 1,2\}
\end{equation}
Linearity implies that~\eqref{eq:state} is equivalent to~\eqref{eq:split}. We will now separately show that $\NORM{X^{\ell,0}_t} \to 0$, $\NORM{X^{\ell,1}_t} \to 0$ and $\NORM{X^{\ell,2}_t} \to 0$. 

\subsection{Step 2: Convergence of component \texorpdfstring{$X^{\ell,0}_t$}{first component}}
Fix $(\ell,z_\circ,a_\circ) \in \ALPHABET L \times \ALPHABET Z \times \ALPHABET A$ and define
\begin{align*}
    h_r( S_t, Z_t, A_t; \ell,z_\circ,a_\circ) &= \bigl[ r(S_t,A_t) - r^{\ell}_{\mu}(z_\circ,a_\circ) \bigr] \IND_{\{ Z_t = z_\circ, A_t = a_\circ\}}.
\end{align*}
Then the process $\{ X^{\ell,0}_t (z_\circ, a_\circ) \}_{t \geq 1}$ is given by the stochastic iteration
\begin{equation}
    X^{\ell,0}_{t+1} (z_\circ, a_\circ) = (1 - \alpha^\ell_t(z_\circ,a_\circ)) X^{\ell,0}_t(z_\circ, a_\circ) + \alpha^\ell_t(z_\circ,a_\circ) h_r( S_t, Z_t, A_t; \ell,z_\circ,a_\circ),
\end{equation}
which is of the form~\eqref{eq:SA-Markov-periodic}.  The process $\{(S_t,Z_t,A_t)\}_{t \ge 1}$ is a periodic Markov chain and the learning rates $\{\alpha^{\ell}_t(z_\circ,a_\circ)\}_{t \ge 1}$ satisfy the conditions of \autoref{prop:SA-Markov-periodic} due to \autoref{ass:lr}. Therefore, \autoref{prop:SA-Markov-periodic} implies 
that $\{ X^{\ell,0}_t (z_\circ, a_\circ) \}_{t \geq 1}$ converges a.s. to the following limit
\begin{align*}
    \lim_{t \to \infty} X^{\ell,0}_t (z_\circ, a_\circ) &= \sum_{s,z,a \in \ALPHABET S \times \ALPHABET Z \times \ALPHABET A} \zexpl^{\ell}(s,z,a) h_r(s, z,a; \ell,z_\circ,a_\circ) \\
    &= \sum_{s,z,a \in \ALPHABET S \times \ALPHABET Z \times \ALPHABET A} \zexpl^{\ell}(s,z,a) \IND_{\{ z = z_\circ, a = a_\circ\}} \bigl[ r(s,a) - r^{\ell}_{\mu}(z_\circ,a_\circ) \bigr] \\
    &= \biggl[\sum_{s \in \ALPHABET S} \zexpl^{\ell}(s,z_\circ,a_\circ) r(s,a_\circ) \biggr] - \zexpl^{\ell}(z_\circ,a_\circ) r^{\ell}_{\mu}(z_\circ,a_\circ)
    \notag \\
    &= \biggl[ \sum_{s \in \ALPHABET S} \zexpl^{\ell}(s,z_\circ,a_\circ) r(s,a_\circ) \biggr] 
    - \biggl[ \sum_{s \in \ALPHABET S}\zexpl^{\ell}(z_\circ,a_\circ) \zexpl^{\ell}(s | z_\circ) r(s,a_\circ) \biggr]
    \notag \\
    &= \biggl[ \sum_{s \in \ALPHABET S} \zexpl^{\ell}(s,z_\circ) \mu(a_\circ|z_\circ) r(s,a_\circ) \biggr] - 
       \biggl[ \sum_{s \in \ALPHABET S}\zexpl^{\ell}(z_\circ) \mu(a_\circ|z_\circ) \zexpl^{\ell}(s | z_\circ) r(s,a_\circ) \biggr]
    \notag \\
    &= 0
\end{align*}
Hence, for all $(\ell,z_\circ,a_\circ)$, the process $\{X^{\ell,0}_t(z_\circ,a_\circ)\}_{t \ge 1}$ converges to zero almost surely.

\subsection{Step 3: Convergence of component \texorpdfstring{$X^{\ell,1}_t$}{second component}}

Let $W_t$ denote the tuple $(S_t,Z_t,A_t,S_{t+1},Z_{t+1},A_{t+1})$. Note that $\{W_t\}_{t\ge1}$ is also a periodic Markov chain and converges to a cyclic limiting distribution $\bar \zeta^{\ell}_{\mu}$, where 
\[
   \bar \zeta^{\ell}_{\mu}(s,z,a,s',z',a') = \zexpl^{\ell}(s,z,a) \sum_{y' \in \ALPHABET Y} P(s',y'|s,a) \IND_{\{z' = \phi(z,y',a)\}} \mu(a'|z').
\]
We use $\bar \zexpl^{\ell}(s,z,a, \ALPHABET S, \mathcal Z, \mathcal A)$ to denote the marginalization over the ``future states'' and a similar notation for other marginalizations. Note that $\bar \zexpl^{\ell}(s,z,a, \ALPHABET S, \mathcal Z, \mathcal A) = \zexpl^{\ell}(s,z,a)$. 

Fix $(\ell,z_\circ,a_\circ) \in \ALPHABET L \times \ALPHABET Z \times \ALPHABET A$ and define
\begin{equation}
    h_P( W_t; \ell,z_\circ,a_\circ) = 
        \Bigl[ \gamma V^{\MOD{\ell+1}}_{\mu}(Z_{t+1}) - 
        \gamma \sum_{\bar z \in \ALPHABET Z} P^{\ell}_{\mu}(\bar z|z_\circ,a_\circ) V^{\MOD{\ell+1}}_{\mu}(\bar z) \Bigr] \IND_{\{Z_t = z_\circ, A_t = a_\circ \}}
\end{equation}
Then the process $\{ X^{\ell,1}_t (z, a) \}_{t \geq 1}$ is given by the stochastic iteration
\begin{equation}
    X^{\ell,1}_{t+1} (z_\circ, a_\circ) = (1 - \alpha^\ell_t(z_\circ,a_\circ)) X^{\ell,1}_t(z_\circ, a_\circ) + \alpha^\ell_t(z_\circ,a_\circ)  h_P( W_t ; \ell,z_\circ,a_\circ).
\end{equation}
which is of the form~\eqref{eq:SA-Markov-periodic}.  As argued earlier, the process $\{W_t\}_{t \ge 1}$ is a periodic Markov chain.
Due to \autoref{ass:lr}, the learning rate $\alpha^\ell_t(z_\circ, a_\circ)$ is measurable with respect to the sigma-algebra generated by $(Z_{1:t}, A_{1:t})$ and is therefore also measurable with respect to the sigma-algebra generated by $W_{1:t}$. Combining this with \autoref{prop:SA-Markov-periodic} implies that the learning rates $\{ \alpha^{\ell}_t(z_\circ, a_\circ) \}_{t \geq 1}$ satisfy the conditions of \autoref{prop:SA-Markov-periodic}. Therefore, \autoref{prop:SA-Markov-periodic} implies that  
$\{ X^{\ell,1}_t (z_\circ, a_\circ) \}_{t \geq 1}$ converges a.s. to the following limit
\begin{align*}
    \hskip 1em & \hskip -1em 
     \lim_{t \to \infty} X^{\ell,1}_t (z_\circ, a_\circ)  \\
    &= \sum_{\substack{ s,z,a \in \ALPHABET S \times \ALPHABET Z \times \ALPHABET A \\ s',z',a' \in  \ALPHABET S \times \ALPHABET Z \times \ALPHABET A}}
    \bar \zeta^{\ell}_{\mu}(s,z,a,s', z', a')
    h_P( s,z,a, s', z', a' ; \ell,z_\circ,a_\circ) \\
    &= \sum_{\substack{ s,z,a \in \ALPHABET S \times \ALPHABET Z \times \ALPHABET A \\ s',z',a' \in  \ALPHABET S \times \ALPHABET Z \times \ALPHABET A}}
        \bar \zeta^{\ell}_{\mu}(s,z,a,s', z', a')
        \Bigl[ \gamma V^{\MOD{\ell+1}}_{\mu}(z') - 
        \gamma \sum_{\bar z \in \ALPHABET Z} P^{\ell}_{\mu}(\bar z|z_\circ,a_\circ) V^{\MOD{\ell+1}}_{\mu}(\bar z) \Bigr] \IND_{\{z = z_\circ, a = a_\circ \}}
    \\
    &= \gamma \biggl[ \sum_{z' \in \ALPHABET Z } 
       \bar \zeta^{\ell}_{\mu}( \ALPHABET S, z_\circ,a_\circ, \ALPHABET S, z',  \ALPHABET A) V^{\MOD{\ell+1}}_{\mu}(z') \biggr] 
       -
       \biggl[ \gamma \bar \zeta^{\ell}_{\mu}( \ALPHABET S, z_\circ,a_\circ, \ALPHABET S, \mathcal Z, \mathcal A) 
       \sum_{\bar z \in \ALPHABET Z} P^{\ell}_{\mu}(\bar z|z_\circ,a_\circ) V^{\MOD{\ell+1}}_{\mu}(\bar z) \biggr] 
    \\
    &= 0 
\end{align*}
where the last step follows from the fact that $\bar \zeta^{\ell}_{\mu}( \ALPHABET S, z_\circ,a_\circ, \ALPHABET S, \mathcal Z, \mathcal A) = \zexpl^{\ell}(z_\circ,a_\circ)$ and $\bar \zeta^{\ell}_{\mu}( \ALPHABET S, z_\circ,a_\circ, \ALPHABET S, z',  \ALPHABET A) = \zexpl^{\ell}(z_\circ,a_\circ) P^{\ell}_{\mu}(z'|z_\circ,a_\circ)$.

\subsection{Step 4: Convergence of component \texorpdfstring{$X^{\ell,2}_t$}{third component}}

The remaining analysis is similar to corresponding step in the standard convergence proof of Q-learning and its variations~\cite{Jaakkola1994,Kara2022,Kara2024}. In this section, we use $\NORM{\cdot}$ to denote the supremum norm, i.e., $\NORM{\cdot}_{\infty}$.

In the previous step, we have shown that $\NORM{X^{\ell,i}_t} \to 0$ a.s., for $i \in \{0, 1\}$. Thus, we have that $\NORM{X^{\ell,0}_t + X^{\ell,1}_t} \to 0$ a.s. Arbitrarily fix an $\epsilon > 0$. Therefore, there exists a set $\Omega^1$ of measure one and a constant $T(\omega, \epsilon)$ such that for $\omega \in \Omega^1$, all $t > T(\omega, \epsilon)$, and  $(\ell, z,a) \in \ALPHABET L \times \ALPHABET Z \times \ALPHABET A$, we have 
\begin{equation}\label{eq:X1-eps}
    X^{\ell,0}_t(z,a) + X^{\ell,1}_t(z,a) < \epsilon.
\end{equation}

Now pick a constant $C$ such that 
\begin{equation}\label{eq:C}
    \kappa \coloneqq \gamma \left( 1 + \frac 1C \right) < 1
\end{equation}
Suppose for some $t > T(\omega, \epsilon)$, $\max_{\ell \in \ALPHABET L} \NORM{X^{\ell,2}_t} > C \epsilon$. Then, for $(z,a) \in \ALPHABET Z \times \ALPHABET A$, 
\begin{subequations}
\begin{align}
    U^{\ell,2}_t(z,a) &= \gamma V_t^{\MOD{\ell+1}}(Z_{t+1}) - \gamma V^{\MOD{\ell+1}}_\EXPL(Z_{t+1})
    \\
    & = \gamma \max_{a \in \ALPHABET A} Q_t^{\MOD{\ell+1}}(Z_{t+1}, a) - \max_{a' \in \ALPHABET A} \gamma Q^{\MOD{\ell+1}}_\EXPL(Z_{t+1}, a')
    \\
    & \leq \gamma \max_{a \in \ALPHABET A} \left\{ Q_t^{\MOD{\ell+1}}(Z_{t+1}, a) - \gamma Q^{\MOD{\ell+1}}_\EXPL(Z_{t+1}, a) \right\}
    \\
    & \stackrel{(a)}\leq \gamma \NORM{Q^{\MOD{\ell+1}}_t - Q^{\MOD{\ell+1}}_\EXPL} = \gamma \NORM{\Delta^{\MOD{\ell+1}}_t}
     \\
    &\le \gamma \NORM{X^{\MOD{\ell+1},0}_t + X^{\MOD{\ell+1},1}_t} + \gamma \NORM{X^{\MOD{\ell+1},2}_t}
    \stackrel{(b)} \le \gamma \epsilon + \gamma \NORM{X^{\MOD{\ell+1},2}_t}
    \label{eq:U-bd2}
    \\
    &\stackrel{(c)}\le \gamma \left( 1 + \frac 1C \right) \max_{\ell \in \ALPHABET L} \NORM{X^{\ell,2}_t}
    \stackrel{(d)} = \kappa\max_{\ell \in \ALPHABET L} \NORM{X^{\ell,2}_t}
    \stackrel{(d)} < \max_{\ell \in \ALPHABET L} \NORM{X^{\ell,2}_t}.
    \label{eq:U-bd3}
    \\
\end{align}
\end{subequations}
where $(a)$ follows from the fact that an upper bound is obtained by maximizing over all realizations of $Z_{t+1}$, $(b)$ follows from \eqref{eq:X1-eps}, $(c)$ follows from the fact that $\max_{\ell \in \ALPHABET L} \NORM{X^{\ell,2}_t} > C \epsilon$, $(d)$ follows from \eqref{eq:C}. Thus, for any $t > T(\omega, \epsilon)$ and $\max_{\ell \in \ALPHABET L} \NORM{X^{\ell,2}_t} > C \epsilon$, we have

\begin{align*}
    X^{\ell,2}_{t+1}(z,a) 
    &=
    (1 - \alpha^{\ell}_t(z,a)) X^{\ell,2}_t(z,a)
    + 
    \alpha^\ell_t(z,a) U^{\ell,2}_t(z,a)
    < 
    \max_{\ell \in \ALPHABET L} \NORM{X^{\ell,2}_t} \\
    \implies \max_{\ell \in \ALPHABET L} \NORM{X^{\ell,2}_{t+1}} & < \max_{\ell \in \ALPHABET L} \NORM{X^{\ell,2}_t}.
\end{align*}

Hence, when $\max_{\ell \in \ALPHABET L} \NORM{X^{\ell,2}_t} > C\epsilon$, it decreases monotonically with time. Hence, there are two possibilities: 
either 
\begin{enumerate*}[label=(\roman*)]
    \item $\max_{\ell \in \ALPHABET L} \NORM{X^{\ell,2}_t}$ always remains above $C\epsilon$; or
    \item it goes below $C\epsilon$ at some stage.
\end{enumerate*}
We consider these two possibilities separately.

\subsubsection{Possibility (i): \texorpdfstring{$\max_{\ell \in \ALPHABET L} \NORM{X^{\ell,2}_t}$ always remains above $C\epsilon$}{}}

\newcommand\X{\NORM{X^{\ell,2}_t}}
\newcommand\U{\NORM{U^{\ell,2}_t}}
\newcommand\M[1]{\NORM{M^{\ell,(#1)}_t}}

We will show that $\max_{\ell \in \ALPHABET L} \NORM{X^{\ell,2}_t}$ cannot remain above $C\epsilon$ forever. We first start with a basic result for random iterations. This is a self-contained result, so we reuse some of the variables used in the rest of the paper.
\begin{lemma}\label{lem:sequence}
    Let $\{X_t\}_{t \ge 1}$, $\{Y_t\}_{t \ge 1}$, and $\{\alpha_t\}_{t \ge 1}$ be non-negative sequences adapted to a filtration $\{\mathcal F_t\}_{t \ge 1}$ that satisfy the following:
    \begin{subequations}
    \begin{align}
        X_{t+1} &\le (1 - \alpha_t) X_t , \label{eq:X-conv}\\
        Y_{t+1} &\le (1 - \alpha_t) Y_t + \alpha_t c, \label{eq:Y-conv}
    \end{align}
    \end{subequations}
    where $c$ is a constant. Suppose
    \begin{equation}\label{eq:alpha}
        \sum_{t = 1}^{\infty} \alpha_t = \infty
    \end{equation}
    Then, the sequence $\{X_t\}_{t \ge 1}$ converges to zero almost surely and the sequence $\{Y_t\}_{t \ge 1}$ converges to $c$ almost surely.
\end{lemma}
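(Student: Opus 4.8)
The plan is to derive both statements from a single elementary fact about non-negative recursions, since nothing here is probabilistic --- the filtration / almost-sure language is inessential and the argument is entirely pathwise. The fact I would isolate and prove first is: if $\{a_t\}_{t\ge1}$ is a non-negative sequence with $a_{t+1}\le(1-\alpha_t)a_t$ for all $t$ and $\sum_{t\ge1}\alpha_t=\infty$, then $a_t\downarrow0$. First, $(1-\alpha_t)a_t\le a_t$ (as $\alpha_t a_t\ge0$) shows $\{a_t\}$ is non-increasing and bounded below by $0$, so it converges. Next, iterating the recursion gives $a_{t+1}\le a_1\prod_{s=1}^{t}(1-\alpha_s)$ (we may assume $\alpha_t\le1$ here, since $\alpha_t>1$ would force $a_{t+1}=0$ and the conclusion would be immediate); using $1-x\le e^{-x}$ we get $a_{t+1}\le a_1\exp\bigl(-\sum_{s=1}^{t}\alpha_s\bigr)\to0$ because $\sum_{s}\alpha_s=\infty$, so the limit is $0$.

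Applying this fact to $a_t=X_t$ via \eqref{eq:X-conv} immediately gives $X_t\to0$. For $\{Y_t\}$ I would substitute $W_t\coloneqq Y_t-c$; subtracting $c$ from \eqref{eq:Y-conv} yields $W_{t+1}\le(1-\alpha_t)W_t$, and $Y_t\ge0$ gives the a priori bound $W_t\ge-c$. Since $\{W_t\}$ need not be non-negative I cannot quote the fact verbatim, so I would apply it to the positive part $W_t^{+}\coloneqq\max\{W_t,0\}$, which still obeys $W_{t+1}^{+}\le(1-\alpha_t)W_t^{+}$ --- if $W_{t+1}\le0$ the left side is $0$, and otherwise $W_{t+1}^{+}=W_{t+1}\le(1-\alpha_t)W_t\le(1-\alpha_t)W_t^{+}$. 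Hence $W_t^{+}\to0$, i.e., $\limsup_t Y_t\le c$. The matching estimate $\liminf_t Y_t\ge c$ --- which is what upgrades this to genuine convergence --- is obtained in exactly the same way from the companion lower inequality $Y_{t+1}\ge(1-\alpha_t)Y_t+\alpha_t c$: it makes $-W_t$ satisfy the same type of recursion, so the negative part $W_t^{-}$ also tends to $0$, and therefore $Y_t\to c$.

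The routine parts --- monotonicity and the telescoping-product estimate --- need only $\sum_{t}\alpha_t=\infty$ and $\alpha_t\le1$. The one place that requires care is the $\{Y_t\}$ part: because $W_t=Y_t-c$ is only bounded below (by $-c$) rather than non-negative, the elementary fact has to be invoked separately for $W_t^{+}$ and for $W_t^{-}$, and the latter invocation is where the lower bound on $Y_{t+1}$ enters; the bound $W_t\ge-c$ inherited from $Y_t\ge0$ is what keeps both pieces well-behaved. I do not expect any genuine obstacle beyond this sign bookkeeping.
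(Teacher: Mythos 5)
Your treatment of $\{X_t\}$ and your substitution $W_t \coloneqq Y_t - c$ follow the same route as the paper (iterate the recursion and kill the telescoping product using $\sum_{t}\alpha_t = \infty$), and your observation that $W_t$ need not be non-negative --- so that the elementary fact can only be applied to the positive part $W_t^{+}$, yielding $\limsup_{t} Y_t \le c$ --- is a point the paper's own proof silently glosses over. The genuine gap is in your final step: the ``companion lower inequality'' $Y_{t+1} \ge (1-\alpha_t)Y_t + \alpha_t c$ is not among the hypotheses of the lemma; only the upper bound \eqref{eq:Y-conv} is assumed, so your derivation of $\liminf_{t} Y_t \ge c$ has no basis in the stated assumptions. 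Moreover, this is not a repairable oversight: the conclusion $Y_t \to c$ is false as stated. Take $Y_t \equiv 0$, $c = 1$, and $\alpha_t = 1/t$; then \eqref{eq:Y-conv} holds (its right-hand side equals $\alpha_t \ge 0$) and $\sum_t \alpha_t = \infty$, yet $Y_t \to 0 \neq c$. A one-sided recursive inequality can only ever deliver a one-sided asymptotic bound.

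In other words, your attempt correctly exposes that the lemma overclaims. The provable conclusion for $\{Y_t\}$ is $\limsup_{t\to\infty} Y_t \le c$, which is exactly what your positive-part argument establishes, and it is also all that is used downstream: in Possibility (i) of the convergence proof, only the eventual upper bound $\M0 \le \kappa B^{(0)} + \bar\epsilon$ is invoked. The paper's one-line justification --- that $Y_{t+1}-c \le (1-\alpha_t)(Y_t-c)$ ``is of the form~\eqref{eq:X-conv}, therefore $Y_t - c \to 0$'' --- applies a fact about non-negative sequences to a sequence that is merely bounded below, which is precisely the error you avoided in the $\limsup$ direction and then reintroduced, via an unstated hypothesis, in the $\liminf$ direction. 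One further minor caveat: the derivation of $W_{t+1}^{+} \le (1-\alpha_t)W_t^{+}$ itself requires $\alpha_t \le 1$ (if $\alpha_t > 1$ and $W_t < 0$, the right-hand side is $0$ while $W_{t+1}$ may be positive), so the reduction to $\alpha_t \le 1$ should be stated as an additional standing assumption rather than dismissed as automatic; in the application this is immaterial since the learning rates are eventually in $[0,1]$.
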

\begin{proof}
   The iteration~\eqref{eq:X-conv} implies that 
   \begin{equation}
       X_{t+1} \le \Bigl[ (1-\alpha_1) \cdots (1 - \alpha_t) \Bigr] X_1
   \end{equation}
   Condition~\eqref{eq:alpha} implies that the term in the square brackets converges to zero. Therefore, $X_t \to 0$. 

   Observe that the iteration~\eqref{eq:Y-conv} can be rewritten as
   \begin{equation}
       Y_{t+1} - c \le (1 - \alpha_t) (Y_t - c)
   \end{equation}
   which is of the form~\eqref{eq:X-conv}. Therefore, $Y_t -c \to 0$.
    \qed
\end{proof}

We will now prove that $\max_{\ell \in \ALPHABET L} \NORM{X^{\ell,2}_t}$ cannot remain above $C\epsilon$ forever.  The proof is by contradiction.
Suppose $\max_{\ell \in \ALPHABET L} \NORM{X^{\ell,2}_t}$ remains above $C\epsilon$ forever. As argued earlier, this implies that $\max_{\ell \in \ALPHABET L} \NORM{X^{\ell,2}_t}$, $t \ge T(\omega,\epsilon)$, is a strictly decreasing sequence, so it must be bounded from above. Let $B^{(0)}$ be such that $\max_{\ell \in \ALPHABET L} \NORM{X^{\ell,2}_t} \le B^{(0)}$ for all $t \ge T(\omega,\epsilon)$. Eq.~\eqref{eq:U-bd3} implies that $\U < \kappa B^{(0)}$. Then, we have that 
\begin{align*}
    \max_{\ell \in \ALPHABET L} X^{\ell,2}_{t+1} (z, a) &\le (1 - \alpha^\ell_t(z,a)) \max_{\ell \in \ALPHABET L} \NORM{X^{\ell,2}_t} + \alpha^\ell_t(z,a) \max_{\ell \in \ALPHABET L} \U \\
    &\le (1 - \alpha^\ell_t(z,a)) \max_{\ell \in \ALPHABET L} \NORM{X^{\ell,2}_t} + \alpha^\ell_t(z,a) \kappa \max_{\ell \in \ALPHABET L} \NORM{X^{\ell,2}_t}
\end{align*}
which implies that $\max_{\ell \in \ALPHABET L} \NORM{X^{\ell,2}_t} \le \M0$, where $\{M^{\ell,(0)}_t\}_{t \ge T(\omega,\epsilon)}$ is a sequence given by
\begin{equation}
    M^{\ell,(0)}_{t+1}(z,a)  \le (1 - \alpha^\ell_t(z,a)) M^{\ell,(0)}_t(z,a) + \alpha^\ell_t(z,a) \kappa B^{(0)}, \quad \forall (z,a) \in \ALPHABET Z \times \ALPHABET A.
    \label{eq:M0}
\end{equation}
\autoref{lem:sequence} implies that $M^{\ell,(0)}_t(z,a) \to \kappa B^{(0)}$ and hence $\M0 \to \kappa B^{(0)}$. Now pick an arbitrary $\bar \epsilon \in (0, (1-\kappa) C\epsilon)$. Thus, there exists a time $T^{(1)} = T^{(1)}(\omega, \epsilon, \bar \epsilon)$ such that for all $t > T^{(1)}$, $\M0 \le B^{(1)} \coloneqq \kappa B^{(0)} + \bar \epsilon$. Since $\max_{\ell \in \ALPHABET L} \NORM{X^{\ell,2}_t}$ is bounded by $\M0$, this implies that for all $t > T^{(1)}$, $\max_{\ell \in \ALPHABET L} \NORM{X^{\ell,2}_t} \le B^{(1)}$ and, by~\eqref{eq:U-bd3}, $\U \le \kappa B^{(1)}$. By repeating the above argument, there exists a time $T^{(2)}$ such that for all $t \ge T^{(2)}$, 
\begin{equation}
\max_{\ell \in \ALPHABET L} \NORM{X^{\ell,2}_t} \le B^{(2)} \coloneqq \kappa B^{(1)} + \bar \epsilon = \kappa^2 B^{(0)} + \kappa \bar \epsilon + \bar \epsilon, 
\end{equation}
and so on. By~\eqref{eq:C}, $\kappa < 1$ and $\bar\epsilon$ is chosen to be less than $C\epsilon$. So eventually, $B^{(m)} \coloneqq \kappa^m B^{(0)} + \kappa^{m-1} \bar \epsilon + \cdots + \bar \epsilon$ must get below $C\epsilon$ for some $m$, contradicting the assumption that $\max_{\ell \in \ALPHABET L} \NORM{X^{\ell,2}_t}$ remains above $C\epsilon$ forever.

\subsubsection{Possibility (ii): \texorpdfstring{$\max_{\ell \in \ALPHABET L} \NORM{X^{\ell,2}_t}$ goes below $C\epsilon$ at some stage}{}}

Suppose that there is some $t > T(\omega, \epsilon)$ such that $\max_{\ell \in \ALPHABET L} \NORM{X^{\ell,2}_t} < C \epsilon$. Then~\eqref{eq:U-bd2} implies that
\begin{equation}
    \NORM{U^{\ell,2}_t} \le \gamma \NORM{X^{\MOD{\ell+1},0}_t + X^{\MOD{\ell+1},1}_t} + \gamma \NORM{X^{\MOD{\ell+1},2}_t}
    \le \gamma \epsilon + \gamma C \epsilon < C\epsilon
\end{equation}
where the last inequality uses~\eqref{eq:C}. Therefore,
\begin{equation}
    \max_{\ell \in \ALPHABET L} X^{\ell,2}_{t+1}(z,a) \le 
    (1 - \alpha^\ell_t(z,a)) \max_{\ell \in \ALPHABET L} \NORM{X^{\ell,2}_t} + \alpha^\ell_t(z,a) \max_{\ell \in \ALPHABET L} \NORM{U^{\ell,2}_t}
    < C \epsilon
\end{equation}
where the last inequality uses the fact that both $\NORM{U^{\ell,2}_t}$ and $\max_{\ell \in \ALPHABET L} \NORM{X^{\ell,2}_{t+1}}$ are both below $C\epsilon$. Thus, we have that 
\begin{equation}
    \max_{\ell \in \ALPHABET L} X^{\ell,2}_{t+1}(z,a) < C\epsilon.
\end{equation}
Hence, once $\max_{\ell \in \ALPHABET L} \NORM{ X^{\ell,2}_{t+1} }$ goes below $C\epsilon$, it stays there.

\subsubsection{Implication}
We have show that for sufficiently large $t > T(\omega, \epsilon)$, $\max_{\ell \in \ALPHABET L} X^{\ell,2}_{t}(z,a) < C \epsilon$. Since $\epsilon$ is arbitrary, this means that for all realizations $\omega \in \Omega^1$, $\max_{\ell \in \ALPHABET L} \NORM{X^{\ell,2}_{t}} \to 0$. Thus, 
\begin{equation}\label{eq:X2-limit}
    \lim_{t \to \infty} \max_{\ell \in \ALPHABET L} \NORM{X^{\ell,2}_{t}} = 0, \quad a.s.
\end{equation}

\subsection{Putting everything together}
Recall that we defined $\Delta^\ell_t = Q^\ell_t - Q_{\EXPL}$ and in Step $1$, we split $\Delta^\ell_t = X^{\ell,0}_t + X^{\ell,1}_t + X^{\ell,2}_t$. Steps~$2$ and $3$ together show that $\NORM{X^{\ell,0}_t + X^{\ell,1}_t} \to 0$, a.s.\ and Step $3$ \eqref{eq:X2-limit} shows us that $\max_{\ell \in \ALPHABET L} \NORM{X^{\ell,2}_{t}} \to 0$, a.s. Thus, by the triangle inequality, 
\begin{equation}
\lim_{t \to \infty} \NORM{\Delta^\ell_t} \le 
\lim_{t \to \infty} \NORM{X^{\ell,0}_t + X^{\ell,1}_t} 
+
\lim_{t \to \infty} \NORM{X^{\ell,2}_t} 
= 0,
\end{equation}
which establishes that $Q^\ell_t \to Q_{\EXPL}$, a.s.

\section{\autoref{thm:approx}: Sub-optimality gap} \label{app:approx}

The high-level idea of proving \autoref{thm:approx} is as follows. \autoref{thm:convergence} shows that \ref{eq:PASQL} converges to a cyclic limit, which is the solution to a periodic MDP. Thus, the question of characterizing the sub-optimality gap is equivalent to the following. Given a PODMP $\mathcal P$, let $\mathcal M$ be a periodic agent-state based model that approximates the reward and the dynamics of $\mathcal P$ (in the sense of an approximate information state, as defined in~\cite{subramanian2022approximate}). Let $\hat \pi^\star$ be the optimal policy of model~$\mathcal M$. What is the sub-optimality gap when $\hat \pi^\star$ is used in the original POMDP~$\mathcal P$?

To answer such questions, a general framework of approximate information states was developed in~\cite{subramanian2022approximate} for both finite and infinite horizon models. However, we cannot directly used the results of~\cite{subramanian2022approximate} because the infinite horizon results there were restricted to stationary policies, while we are interested in the sub-optimality gap of periodic policies. 

Nonetheless, \autoref{thm:approx} can be proved by building on the existing results of~\cite{subramanian2022approximate}. In particular, we start by looking at finite horizon model rather than infinite horizon model. Then, as per \cite[Definition 7]{subramanian2022approximate}, the agent state process may be viewed as an approximate information state with approximation errors $\{(\varepsilon_t,\delta_t)\}_{t \ge 1}$, where 
\begin{align}
    \varepsilon_t &= \sup_{h_t, a_t}
    \Bigl\lvert \EXP [R_{t} \mid h_{t}, a_{t}] - \sum_{s \in \ALPHABET S}  r(s,a) \zexpl^{\MOD{t}}(s \mid z,a) \Bigr\rvert, \\
    \delta_t &= \sup_{h_{t}, a_{t}} d_{\F} (\PR(Z_{t+1} = \cdot \mid h_{t}, a_{t}), P^{\MOD{t}}_{\EXPL}(Z_{t+1} = \cdot |\sigma_{t}(h_{t}),a_{t})).
\end{align}
Let $V^{\vec \pi}_{t,T}(h_t) = \EXP^{\vec \pi}\bigl[ \sum_{\tau=t}^T \gamma^{\tau-1} R_\tau \mid  h_t\bigr]$ denote the value function of policy $\vec \pi$ for the finite horizon model starting at history $h_t$ at time~$t$. Let $V^{\star}_{t,T}(h_t) \coloneqq \sup_{\vec \pi}V^{\vec \pi}_{t,T}(h_t)$ denote the optimal value function, where the optimization is over all history dependent policies. Moreover, let $\hat V_{t,T}(z_t)$ denote the optimal value function for the periodic MDP model constructed in \autoref{thm:convergence}. 
Let $\vec \pi_{\mu}$ denote the history-based policy defined in \autoref{sec:AIS}. 

Then, from~\cite[Theorem 9]{subramanian2022approximate} we have
\begin{equation}\label{eq:AIS-bound}
    \sup_{h_t} \bigl[ V^{\star}_{t,T}(h_t) - V^{\vec \pi_{\mu}}_{t,T}(h_t) \bigr] 
    \le 2 \sum_{\tau=t}^T \gamma^{\tau-t}\bigl[ \varepsilon_{\tau} + \gamma \delta_{\tau} \rho_{\F}(\hat V_{\tau+1,T}) \bigr]
\end{equation}
where we set $\hat V_{T+1,T}(z) \equiv 0$ for convenience. 

The following hold when we let $T \to \infty$.
\begin{itemize}[itemsep=0pt,topsep=0pt,partopsep=0pt]
    \item Since $R_t$ is uniformly bounded, $V^{\star}_{t,T}(h_t) \to V^{\star}_t(h_t)$ as $T \to \infty$.
    \item By the same argument, $V^{\vec \pi_{\mu}}_{t,T}(h_t) \to V^{\vec \pi_{\mu}}_t(h_t)$ as $T \to \infty$.
    \item By standard results for periodic MDPs (see \appref{app:periodic-MDP}), $\hat V_{t,T} \to V^{\MOD{t}}_{\mu}$ as $T \to \infty$.
    \item By definition, $\varepsilon_t \le \varepsilon_{t}^{\MOD{t}}$ and $\delta_t \le \delta_{t}^{\MOD{t}}$. 
\end{itemize}
Therefore, by taking $T \to \infty$ in~\eqref{eq:AIS-bound}, we get
\begin{equation}
    \sup_{h_t} \bigl[ V^{\star}_{t}(h_t) - V^{\vec \pi_{\mu}}_{t}(h_t) \bigr] 
    \le 2 \sum_{\tau=t}^{\infty} \gamma^{\tau-t}\bigl[ \varepsilon^{\MOD{\tau}}_{\tau} + \gamma \delta_{\tau}^{\MOD{\tau}} \rho_{\F}(\hat V^{\MOD{\tau+1}}) \bigr].
\end{equation}
The result then follows from observing that for $\tau \in \ALPHABET T(t,\ell)$,  $\epsilon_{t}^{\ell}$ and $\delta_{t}^{\ell}$ are non-decreasing sequences.

\section{Policy evaluation of an agent-state based policy}\label{app:evaluation}

The performance of any agent-state based policy can be evaluated via a slight generalization of ``cross-product MDP'' method originally presented in~\cite{Platzman1977}. This method has been rediscovered in slightly different forms multiple times~\cite{Littman1996,Cassandra1998,Hauskrecht1997,Hansen1998}. 

The key intuition is \autoref{lem:Markov}. Thus, for any agent-state based policy, $\{(S_t,Z_t)\}_{t \ge 1}$ is a Markov chain. The only difference in our setting is that the Markov chain is time-periodic. Thus, for any periodic agent-state based policy $(\pi^0, \dots, \pi^{L-1})$, we can identify the periodic rewards $(\bar r^0_, \dots, \bar r^{L-1})$ and periodic dynamics $(\bar P^0, \dots, \bar P^{L-1})$ (which depend on $\pi$ but we are not carrying that dependence in our notation) as follows:
\begin{align}
    \bar r^{\ell}(s,z) &= \sum_{a \in \ALPHABET A} \pi^{\ell}(a|z)r(s,a),  \\
    \bar P^{\ell}(s',z'|s,z) &= \sum_{(y,a) \in \ALPHABET Y \times \ALPHABET A} \pi^{\ell}(a|z) P(s',y'| s,a) 
    \IND_{\{z' = \phi(z,y',a)\}}.
\end{align}

We can then evaluate the performance of this time-periodic Markov chain via performance evaluation formulas for periodic MDPs (\autoref{app:periodic-MDP}). In particular, define
\begin{align}
    \tilde r &= \bar r^0 + \gamma \bar P^0 \bar r^1 + \dots + \gamma^{L-1} \bar P^0 \bar P^1 \cdots \bar P^{L-2} \bar r^{L-1}, \\
    \tilde P &= \bar P^0 \bar P^1 \cdots \bar P^{L-1},
\end{align}
to be the $L$-step cumulative rewards and dynamics for the time-periodic Markov chain. Then define
\begin{equation}
    \tilde V = (1 - \gamma^L \tilde P)^{-1} \tilde r
\end{equation}
Thus, $\tilde V(s,z)$ gives the performance of periodic policy $\pi$ when starting at initial state $(s,z)$. If the initial state is stochastic, we can average over the initial distribution.

\section{Reproducibility information} \label{app:reproducibility}

The hyperparameters for the numerical experiments presented in \autoref{sec:experiments} are shown in \autoref{table:hyperparameters}. The experiments were run on a computer cluster by running jobs that requested 
$2$-CPU nodes with $<8$GB memory. Each seed typically took less than $10$ minutes to execute.

\begin{table}[!ht]
\centering
\caption{Hyperparameters used in \autoref{ex:PASQL-example}}
\label{table:hyperparameters}
\begin{tabular}{@{}ll@{}} 
\toprule
Parameter & Value \\
\midrule
Training steps & $10^6$ \\ 
Start learn rate & $10^{-3}$  \\
End learn rate & $10^{-5}$ \\
Learn rate schedule & Exponential \\
Exponential decay rate & 1.0 \\
Number of random seeds & 25 \\
\bottomrule
\end{tabular}
\end{table}

\end{document}